\documentclass{article}
\usepackage{iclr2027_conference,times}

\usepackage[T1]{fontenc}
\usepackage[utf8]{inputenc}
\DeclareUnicodeCharacter{266B}{{\fontfamily{lmr}\selectfont\textmusicalnote}}%
\usepackage{microtype}

\usepackage{amsmath, amssymb, amsthm}
\usepackage{mathtools}
\usepackage{bm}

\usepackage{graphicx}
\usepackage{wrapfig}
\usepackage{tikz}   %
\usepackage{pifont} %
\makeatletter
\g@addto@macro\normalsize{\abovedisplayskip 4pt plus 1pt minus 2pt \belowdisplayskip\abovedisplayskip
  \abovedisplayshortskip 0pt plus 1pt \belowdisplayshortskip 2pt plus 1pt minus 1pt}
\makeatother
\usepackage{booktabs}
\usepackage{adjustbox}
\usepackage{multirow}
\usepackage{enumitem}
\usepackage{xcolor}
\usepackage{colortbl}
\usepackage{float}
\usepackage{tcolorbox}
\tcbuselibrary{breakable}
\tcbuselibrary{skins}
\usepackage{needspace}
\newtcolorbox{probbox}[1]{enhanced,colback=white,colframe=blue!40!black,colbacktitle=blue!40!black,
  coltitle=white,fonttitle=\bfseries\small,fontupper=\small,boxrule=0.6pt,arc=2pt,
  left=6pt,right=6pt,top=3pt,bottom=3pt,title={#1},before skip=6pt,after skip=6pt}
\newtcolorbox{roundbox}[1]{enhanced,breakable,colback=white,colframe=black!72,colbacktitle=black!72,
  coltitle=white,fonttitle=\bfseries\small,fontupper=\small,boxrule=0.6pt,arc=2pt,
  left=6pt,right=6pt,top=4pt,bottom=4pt,title={#1},before skip=7pt,after skip=7pt,
  before upper={\raggedright},colbacklower=gray!6,fontlower=\small,before lower={\raggedright},
  segmentation style={gray!55,line width=0.4pt}}
\newtcolorbox{skillbox}[1]{enhanced,breakable,colback=white,colframe=gray!60,colbacktitle=gray!18,
  coltitle=black,fonttitle=\bfseries\small,fontupper=\footnotesize,boxrule=0.5pt,arc=2pt,
  left=6pt,right=6pt,top=3pt,bottom=3pt,title={#1},before skip=6pt,after skip=6pt}
\newtcolorbox{outbox}{enhanced,colback=gray!3,colframe=gray!55,boxrule=0.4pt,arc=2pt,
  left=6pt,right=6pt,top=3pt,bottom=3pt,fontupper=\small,before skip=3pt,after skip=4pt}
\newcommand{\rl}[1]{\par\noindent\textbf{#1:}\ }
\newcommand{\loghead}[1]{\par\Needspace{5\baselineskip}\vspace{4pt}\noindent{\normalsize\scshape #1}\par\nopagebreak\vspace{-4pt}}
\newcommand{\rsep}{\par\medskip\noindent\tikz\draw[dashed,gray!65,line width=0.4pt](0,0)--(\linewidth,0);\par\smallskip}
\newcommand{\nd}[1]{{\setlength{\fboxsep}{1.5pt}\fbox{#1}}}
\newcommand{\up}[1]{\textcolor{green!45!black}{#1}}
\newcommand{\dn}[1]{\textcolor{red!70!black}{#1}}
\newcommand{\cmark}{\up{\ding{51}~correct}}
\newcommand{\xmark}[1]{\dn{\ding{55}~#1}}
\newcommand{\kd}[1]{\textcolor{blue!70!black}{#1}}
\newcommand{\act}[1]{{\ttfamily\footnotesize #1}}
\newcommand{\slot}[1]{\textrm{\itshape$\langle$#1$\rangle$}}
\floatstyle{ruled}
\newfloat{algorithm}{htbp}{loa}
\floatname{algorithm}{Algorithm}
\floatstyle{plain}

\usepackage{hyperref}
\usepackage{url}
\usepackage{cleveref}

\newcommand{\E}{\mathbb{E}}
\newcommand{\Prob}{\mathbb{P}}
\newcommand{\indic}{\mathbb{I}}

\newcommand{\Slib}[1]{\mathcal{S}^{(#1)}}

\DeclareMathOperator{\clip}{clip}

\definecolor{ForestGreen}{rgb}{0.0,0.45,0.13}
\definecolor{darkgreen}{rgb}{0,0.5,0}
\definecolor{evopink}{HTML}{D6337F} %
\definecolor{evoblue}{HTML}{3C78A8} %

\theoremstyle{definition}
\newtheorem{definition}{Definition}
\newtheorem{assumption}{Assumption}
\theoremstyle{plain}
\newtheorem{proposition}{Proposition}
\newtheorem{lemma}[proposition]{Lemma}
\newtheorem{corollary}[proposition]{Corollary}
\theoremstyle{remark}
\newtheorem{remark}{Remark}

\title{\centering\hyphenpenalty=10000\exhyphenpenalty=10000
       EvoSteer: Online Self-Evolving Graph Orchestration
       via Reference-Anchored Credit Assignment}

\author{Mingda Zhang$^{1}$, Hanwen Zhang$^{2}$, Qiang Huang$^{3}$, Zijia Wang$^{4}$,\\
Pengfei Guo$^{5}$, Yuchen Zhang$^{6}$, Jionghao Zhu$^{1}$, Xiaoying Tang$^{1}$\\[2pt]
{\normalfont\footnotesize
$^{1}$The Chinese University of Hong Kong, Shenzhen, China\quad
$^{2}$Dalian University of Technology, China}\\
{\normalfont\footnotesize
$^{3}$Fudan University, China\quad
$^{4}$University of Oxford, UK\quad
$^{5}$North China Electric Power University, China}\\
{\normalfont\footnotesize
$^{6}$The University of Texas Health Science Center at Houston, USA}}
\makeatletter
\def\@maketitle{\vbox{\hsize\textwidth
{\LARGE\sc \@title\par}
\vskip\ARXIVGAPA
{\centering\bfseries\begin{tabular}[t]{@{}c@{}}\@author\end{tabular}\par}
\vskip\ARXIVGAPB}}
\makeatother
\newlength{\ARXIVGAPA}
\newlength{\ARXIVGAPB}

\iclrfinalcopy  %

\newcolumntype{C}[1]{>{\centering\arraybackslash}p{#1}}%
\hypersetup{hidelinks}%

\begin{document}
\maketitle
\lhead{Under review as a conference paper at ICLR 2027}

\begin{abstract}
In recent years, LLM-based multi-agent systems have been widely applied to
orchestrate tool-using agents into executable communication graphs.
However, existing self-evolving orchestration still faces key challenges,
including \emph{post-hoc evolution} that revises the team only after the
trajectory ends, \emph{credit diffusion} that gives every action the same
terminal advantage under confounded baselines, and \emph{skill admission} that
is uncalibrated and never retired.
To address these challenges, we propose EvoSteer, a new paradigm of Online
Self-Evolving Graph Orchestration---the orchestrator builds a running team and repairs its
plausible but failing steps from execution features and a learned value estimate.
To support this paradigm, we introduce \emph{Anchored Trajectory Balance}
(AnchorTB), a regression-style flow-matching loss that assigns each
orchestration action a coefficient by balancing subtrajectories against
a frozen reference.
Built on the learned flow, we further propose \emph{Validated Skill Admission},
in which a candidate skill is tried before promotion and promoted only if
paired evidence passes a sequential test under a shared nominal testing budget.
Moreover, AnchorTB combines measured task-level reference reward
statistics with prefix-dependent corrections.
Experimental results on twelve datasets show that EvoSteer significantly
outperforms baselines across question answering, mathematical reasoning, code
generation, and interactive decision making.
Our code is available at \href{https://github.com/beita6969/evosteer}{\nolinkurl{https://github.com/beita6969/evosteer}}.
\end{abstract}

\section{Introduction}
\suppressfloats[t]

In recent years, a variety of powerful LLM-based multi-agent systems have been
applied to solve a wide range of complex tasks
\citep{yao2022react,hong2024metagpt,zhuge2024language}, gradually moving beyond
a single model call toward teams of tool-using agents that complete tasks
end-to-end.

\begin{wrapfigure}{r}{0.68\textwidth}
\centering
\includegraphics[width=\linewidth,trim=26 31 27 22,clip]{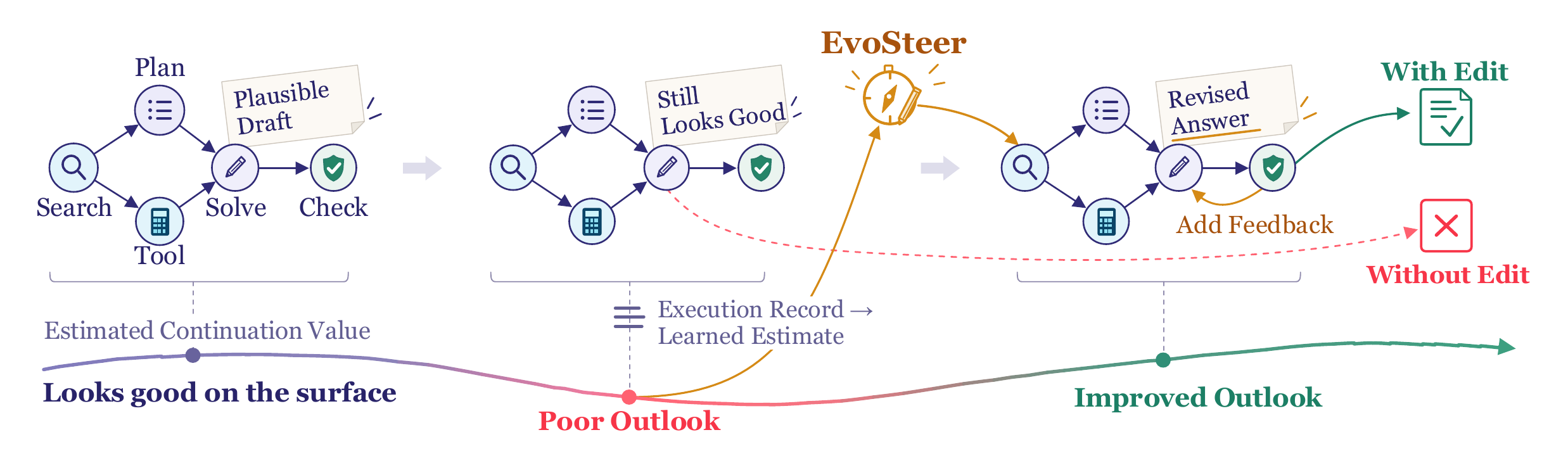}
\caption{Outputs that look plausible can still be headed for failure. EvoSteer reads
a low continuation estimate off the execution record and edits the running
team, here routing the checker's report back to the solver; dashed: the same
team left unedited.}
\label{fig:overview}
\end{wrapfigure}

In this process, graph orchestration has become a key bridge from task goals to
reproducible execution: by assigning each node a role and bound skills and each
edge a communication protocol on an agent communication graph (Fig.~\ref{fig:overview}, left), agents can
complete complex tasks with improved controllability, compositionality, and
reusability \citep{zhang2025aflow,zhang2026flowsteer}.
However, in practice, orchestration still evolves only between trajectories,
by revising skills, prompts, or topology after a batch of runs has ended
\citep{li2026experience,pan2026skillmas}, making a misconfigured team costly to detect, as its outputs still look
plausible, and impossible to repair while the workflow is still being executed.

\begin{figure}[t]
\centering
\includegraphics[width=\textwidth,trim=12 45 9 45,clip]{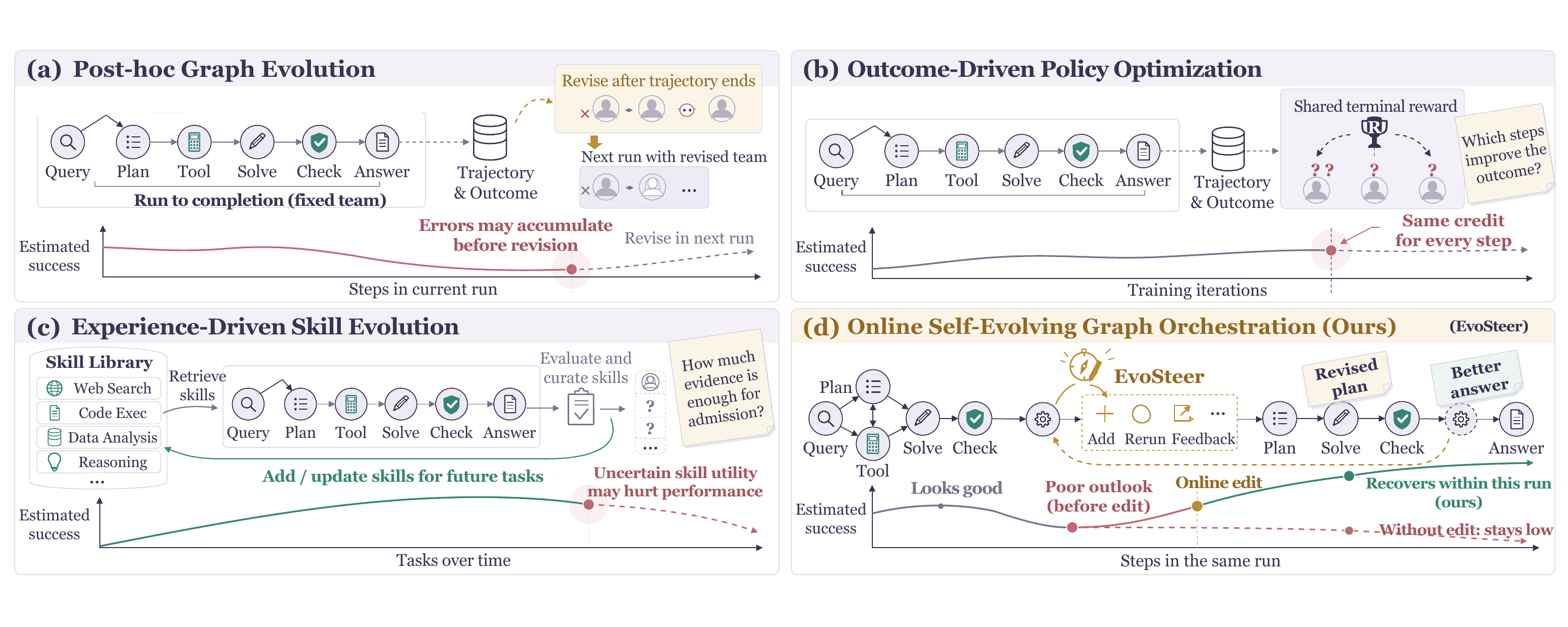}
\caption{Three lines of work and ours. \textbf{(a)} Post-hoc evolution revises the
team only after the run. \textbf{(b)} Outcome-driven optimization spreads one
terminal reward over every step. \textbf{(c)} Skill evolution admits by judges
or replay. \textbf{(d)} EvoSteer edits the team inside the same run, deciding
from measured execution features and a reference value estimate, and admits a
skill only after a sequential test.}
\label{fig:paradigms}
\end{figure}

To address these issues, three main lines of work have emerged, as shown in
Figure~\ref{fig:paradigms}.
First, \emph{self-evolving orchestration} adapts topologies, role prompts, and
skill libraries from the outcomes of completed trajectories
\citep{li2026experience,pan2026skillmas,zhang2026evochamber}, treating the team as an object of
learning.
Second, \emph{credit assignment for multi-agent LLMs} converts a shared terminal
reward into per-agent or per-message signals through learned critics,
leave-one-out baselines, counterfactual replay, and flow-matching objectives
\citep{chen2026exact,li2026counterfactual,shah2026causal,zhang2026skillflow}, telling which step was
responsible for the outcome.
Third, \emph{skill libraries} let capability grow across tasks by extracting
reusable procedures from trajectories and curating them with judges, replay
checks, or posteriors \citep{wang2023voyager,yang2026skillforge,wu2026bayesian}.

However, these methods still face three challenges.
\textbf{(i) Post-hoc evolution.} Self-evolving orchestration revises the team
after the trajectory ends, by rounds or batches \citep{pan2026skillmas,li2026experience};
during execution the orchestrator can only keep adding agents, and the decisive
error is locally indistinguishable from a recoverable step
\citep{zhang2026agentforesight}, so a wrong agent is neither rerun nor removed until
the task has failed.
\textbf{(ii) Credit diffusion.} Under a shared terminal reward, trajectory-level
methods give every action the same coefficient of a single terminal advantage
\citep{chen2026exact,zhang2026reinforcement}, and learned state values confound how often a task is
solved, execution noise, and structural choice, so the
orchestrator cannot tell whether a low score came from a task the reference
rarely solves or from a badly organized team; counterfactual
replay localizes the failing step \citep{shah2026causal,bonagiri2026causalflow} but does not
provide a training signal for every decision.
\textbf{(iii) Uncalibrated skill admission.} Skills are admitted by LLM
judgement or by success counts treated as reliable belief
\citep{yang2026skillforge,wu2026bayesian}; such point estimates cannot separate
insufficient evidence from a confirmed effect, so an accidental strategy from one
trajectory is consolidated \citep{chen2026skillcat}, and since no test tracks
whether an admitted skill still helps on later tasks, it stays in the library
and is never retired.

To address these challenges, we propose \textbf{EvoSteer} (Fig.~\ref{fig:paradigms}d), a new
paradigm of \emph{Online Self-Evolving Graph Orchestration}, in which the
orchestrator dynamically builds and repairs a running team from execution
features and a learned value estimate---replacing the run-then-revise loop.
The orchestrator issues one atomic edit per turn, rerun and drop included; each
is executed at once, and its feedback and the value estimate enter the state,
so a plausible but failing result can be redone mid-run.
To support this paradigm, we introduce \emph{Anchored Trajectory Balance}
(AnchorTB), a regression-style flow-matching loss that assigns each
orchestration action a coefficient by balancing subtrajectories against
a frozen reference.
It combines measured task-level reference reward statistics with
prefix-dependent corrections.
Built on the learned flow, we further propose \emph{Validated Skill Admission}:
a skill proposer distils candidates from scored experience, the orchestrator may
try a candidate before promotion, and paired rollouts differing only in the
candidate are judged by a sequential sign test under a shared nominal testing budget,
which promotes, retires, or defers.
Trained end-to-end on natural and paired trajectories with the reference frozen
throughout, EvoSteer turns construction, repair, and skill growth into
one set of learned decisions under a single training objective.

We evaluate on benchmarks across question answering, mathematical reasoning,
interactive decision making, and code generation.
Results show EvoSteer outperforms workflow search, flow- and RL-based
orchestration training, and skill-evolution methods on every benchmark and
improves all seven executors---a foundation for orchestration that evolves while
it runs.

\section{Related Work}

\paragraph{Agent Task Orchestration.}
Orchestration graphs are searched under execution feedback
\citep{zhang2025aflow,zhang2026flowsteer}. Later work evolves the team between runs
\citep{dang2026multi,hao2026evolve}, mutates topology between test-time rounds
\citep{xu2026tacomas}, repairs finished trajectories
\citep{luan2026repair,lu2026autonomous,zhao2026agenttether}, or audits unfolding prefixes
\citep{wang2026fail,jiang2026don,li2026last}, but keeps the running team fixed.
Credit comes from counterfactual replay \citep{chen2026lemon,deshmukh2026cosac},
group-relative objectives \citep{mishra2026policy}, or (sub)trajectory balance
\citep{malkin2022trajectory,madan2023learning,venkatraman2024amortizing} and its extensions
\citep{fawkes2026f,liu2026gflowrl,wang2026rtmc,liang2026granularity}. EvoSteer reruns and drops agents in the running team and
measures flows from a frozen reference.

\paragraph{Skill Self-Evolution.}
Skill libraries check candidates by replay \citep{he2026skillcommit}, paired
trajectories \citep{gao2026skillaudit}, leave-one-out attribution
\citep{shen2026dynamic,hu2026skillbrew,wang2026not}, or success-count posteriors
\citep{wu2026bayesian}. Others couple per-skill credit to library edits
\citep{yao2026maskills}, gate contaminated candidates \citep{zhang2026consistencygate}, or
gate self-modification with sequential or paired tests
\citep{shawn2026pace,sengupta2026self,shang2026hypothesis}. EvoSteer gates skills in a live episode under one
testing budget per run, and candidates stay selectable while tested.

\section{Preliminaries}

\noindent\textbf{Definition 1: Typed Agent Communication Graph.}
A typed agent communication graph (henceforth communication graph) is a
directed graph $G=(V,E,o)$ over agent nodes, where each node
$v_i=(c_i,\mathcal{K}_i)$ carries a role $c_i$ from a role catalogue and a set
of bound skills $\mathcal{K}_i$ drawn from the visible active skill set, each edge
$(v_i,v_j,\pi_{ij})$ carries a communication protocol $\pi_{ij}$ that decides
whether and how the source's result revises the target, and $o\in V$ together
with an output rule fixes the final answer (our typed edges and nodes extend the
untyped topologies of \citet{zhang2024g,zhang2025cut} and the
computational graphs of \citet{zhuge2024language}).

\noindent\textbf{Definition 2: Orchestration Trajectory.}
A complete sequence of $T$ orchestration actions from the empty team to a
legal stop defines an orchestration trajectory, in which every action is
executed as soon as it is issued, before the next action is chosen:
\begin{equation}
  \tau=\bigl\{(a_t,\ o_t^{\mathrm{exec}})\bigr\}_{t=0}^{T-1}\ \Rightarrow\ r(x),
  \qquad
  s_{t+1}=s_t\oplus(a_t,\ o_t^{\mathrm{exec}}),
\end{equation}
where $a_t$ is an atomic edit with type in $\{$\textsc{add\_agent},
\textsc{add\_edge}, \textsc{bind\_skill}, \textsc{set\_output},
\textsc{rerun\_agent}, \textsc{drop\_agent}, \textsc{stop}$\}$,
$o_t^{\mathrm{exec}}$ is the execution feedback returned by running the agent,
communication, repair, or output that the action triggers, and $x$ is the
completed history $x=s_T$ with task reward $r(x)\in[0,1]$ and $T\ge1$.
Action $a_t$ is chosen at $s_t$ and produces $s_{t+1}$, for
$t=0,\ldots,T-1$. Because each state retains the full history, every
non-root state has a unique parent. The deterministic feature record is
suppressed here and made explicit in Eq.~\eqref{eq:interleave}.

\noindent\textbf{Problem Statement.}
Given a task $q$, a frozen executor with tools, a frozen reference policy
$\rho$, and a skill library $\mathcal{S}$, we seek an orchestrator $\pi_\theta$
trained toward an ideal reward-proportional history distribution relative to $\rho$
\citep{venkatraman2024amortizing}:
\begin{equation}
  P^{*}(x\mid C)=\frac{\rho(x\mid C)\,R_\beta(x)}{Z(C)},
  \qquad
  R_\beta(x)=1+(e^{\beta}-1)\,r(x),
  \label{eq:target}
\end{equation}
where $0\le\beta<\infty$, and $C$ collects the task, tools, roles, and the skill and value-head snapshots
of the current batch, $\rho(x\mid C)$ is the history measure induced by the
reference policy and the executor's stochastic transitions, and $R_\beta$ keeps
a positive mass for failures while weighting a full score $e^{\beta}$ times a
zero score. Equation~\eqref{eq:target} is the training target; any deterministic executor,
and more generally one meeting the condition of
Proposition~\ref{prop:evs-realizability}, lets action choices alone realize it.

\section{Methodology: EvoSteer}

\begin{figure}[t]
\centering
\includegraphics[width=\textwidth,trim=15.5 12 15.5 0.5,clip]{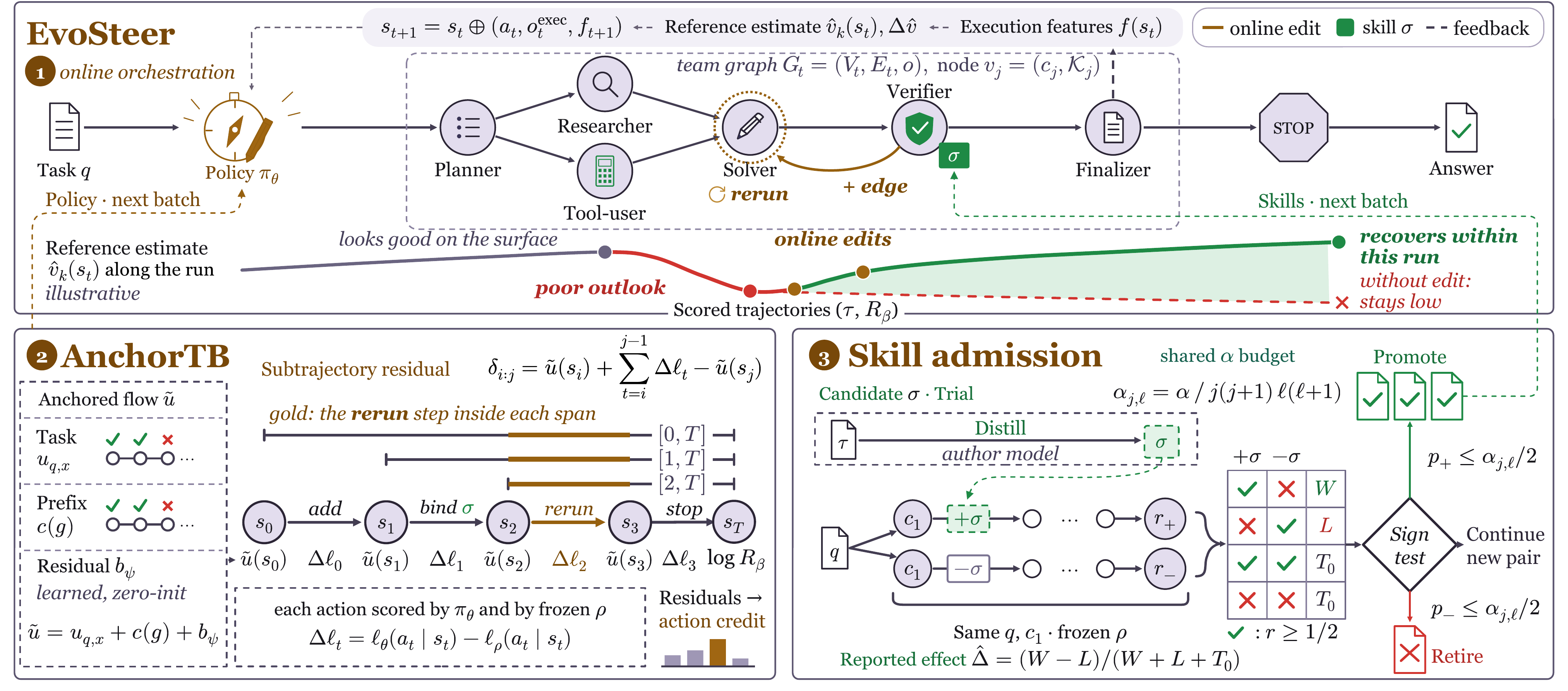}
\caption{EvoSteer architecture. \textbf{Top:} $\pi_\theta$ builds and runs the team
from execution features $f_t$ and reference estimate $\hat v_k$; rerun and new
edges are ordinary actions. \textbf{Bottom left:} AnchorTB balances subtrajectories
against $\rho$; $\tilde u,\delta$ abbreviate trajectory-specific flow estimates and
residuals. \textbf{Bottom right:} paired rollouts and a sign test under a shared
nominal $\alpha$ budget promote or retire candidate $\sigma$.}
\label{fig:arch}
\end{figure}

As illustrated in Figure~\ref{fig:arch}, this section introduces the EvoSteer framework,
including online self-evolving graph orchestration (Section~4.1), anchored
trajectory balance (Section~4.2), and validated skill admission (Section~4.3), all built on the frozen reference $\rho$.

\subsection{Online Self-Evolving Graph Orchestration}
As shown in Figure~\ref{fig:arch} (top), EvoSteer follows an Orchestrator--Executor paradigm
\citep{yao2022react,zhu2026recognize,zhang2026flowsteer}: a trainable orchestrator $\pi_\theta$
organizes tool-using agents, each on a frozen executor, inside a structured environment.

\noindent\textbf{Structured Environment.}
The environment $\mathcal{E}=(\mathcal{P},\Slib{k},\mathcal{R}_{\mathrm{exec}},\hat v_{k})$ maintains
the role catalogue $\mathcal{P}$, i.e., the roles with their tool and session permissions;
the active skill set $\Slib{k}$ visible in training step $k$ (slot counts in Appendix~C);
the executor runtime $\mathcal{R}_{\mathrm{exec}}$ with a shared resource budget; and the
reference value head $\hat v_k$ of Eq.~\eqref{eq:value}. The two snapshots $\Slib{k}$ and $\hat v_k$
are fixed for a whole rollout batch, while the legal mask $\mathcal A(s)$ remains state-dependent.

\noindent\textbf{Interleaved Execution.}
Each orchestrator action is validated, applied to the graph, and executed at
once; its feedback and a fixed-length execution feature vector are appended to the state:
\begin{equation}
  s_{t+1}=s_t\oplus\bigl(a_t,\ o_t^{\mathrm{exec}},\ f_{t+1}\bigr),
  \qquad
  f_{t+1}=f\bigl(s_t,a_t,o_t^{\mathrm{exec}}\bigr)\in\mathbb R^{30},
  \label{eq:interleave}
\end{equation}
where $o_t^{\mathrm{exec}}$ is the execution feedback of Definition~2 and $f$ records graph size,
action type, repair outcome, whether agents answered and agreed, communication revision,
environment phase, and budget use. These features need no extra model call and cover the
execution-side failure modes catalogued for multi-agent systems \citep{cemri2026multi}; $f(s)$
denotes the latest vector in $s$.

\noindent\textbf{Learnable Repair.}
Repair shares one action space and one policy $a_t\sim\pi_\theta(\cdot\mid s_t)$ with construction:
\begin{equation}
  \mathcal A(s_t)\subseteq\mathcal A_{\mathrm{con}}\cup\mathcal A_{\mathrm{rep}}\cup\{\textsc{stop}\},
  \qquad
  \mathcal A_{\mathrm{rep}}=\{\textsc{rerun\_agent},\ \textsc{drop\_agent}\},
  \label{eq:actions}
\end{equation}
where $\mathcal A_{\mathrm{con}}=\{\textsc{add\_agent},\textsc{add\_edge},\textsc{bind\_skill},\textsc{set\_output}\}$
holds the construction edits of Definition~2, and the mask $\mathcal{A}(s_t)$ \citep{li2026multi}
follows from the current graph, roles, skill slots, output modes, and budget, admitting
repairs only when valid. One objective trains all three kinds of decision: how many
agents a task needs, when a result is worth redoing, and when the answer in hand suffices.
Team size and repair count follow from the policy under the shared budget, and the
terminal reward $r(x)$ is computed once, after the stop, on the output node's answer.

\noindent\textbf{In-Loop Reference Value Estimate.}
A lightweight head uses execution features and task type to estimate
reference continuation reward by squared-error regression:
\begin{equation}
  \hat v_\omega(s)=\sigma\bigl(\mathrm{MLP}_\omega[f(s);\,\mathrm{onehot}(\mathrm{tasktype}(q))]\bigr),
  \qquad
  \mathcal{L}_{v}=\E_{(s,r)\sim\mathcal{D}_\rho}\bigl(\hat v_\omega(s)-r\bigr)^2,
  \label{eq:value}
\end{equation}
where $\mathrm{MLP}_\omega$ is a two-layer head on $30{+}6$ inputs (Appendix~B), $\mathcal{D}_\rho$ holds
states of legal reference continuations with their terminal rewards, and
$\hat v_k$ in $\mathcal{E}$ is the snapshot of $\hat v_\omega$ frozen for
batch $k$. The estimate and its one-step change are
shown to the orchestrator as feedback; both are deterministic readouts of the
retained history under the frozen head and carry cross-task experience
into in-loop decisions, including when to stop. Its population MSE optimum is the
feature-conditional mean of the terminal reward under the training data law, and this
optimum is calibrated (Proposition~\ref{prop:evs-value-projection}).

\noindent\textbf{Proposition 1.} \emph{Executing every action as it is
issued and placing rerun and drop in the action space lets the orchestrator
edit and retry a running team through the same action interface.}
\emph{Proof.} Appendix~\ref{sec:evs-C6}, via the history tree and legal
actions of Lemma~\ref{lem:evs-history-tree}.

\subsection{Anchored Trajectory Balance}
As shown in Figure~\ref{fig:arch} (bottom left) and step by step in Figure~\ref{fig:anchortb-steps}, we
train $\pi_\theta$ on the history tree of Section~3 toward the ideal reward-proportional
target of Eq.~\eqref{eq:target}, relative to the frozen reference.

\begin{figure}[t]
\centering
\scalebox{0.7}{%
\begin{tikzpicture}[x=1.428571pt,y=1.428571pt,font=\fontsize{10}{11.43}\selectfont,
  st/.style={circle,draw,line width=0.714pt,inner sep=0pt,minimum size=15pt},
  act/.style={->,>=stealth,line width=0.857pt},
  lab/.style={inner sep=1.143pt},
  meas/.style={draw=evoblue,fill=evoblue!7,line width=0.714pt,rounded corners=2.143pt,inner sep=2.143pt,align=center,text width=120pt},
  lrn/.style={draw=evopink,fill=evopink!7,line width=0.714pt,rounded corners=2.143pt,inner sep=2.143pt,align=center,text width=120pt},
  sp/.style={line width=1.429pt,color=black!28,{|-|}},
  hs/.style={line width=1.429pt,color=evopink,{|-|}},
  ttl/.style={font=\bfseries,anchor=west,inner sep=0pt},
  tag/.style={font=\fontsize{7.14}{8.2}\selectfont,text=black!50,anchor=east,inner sep=0pt},
  note/.style={text=black!62,align=center},
  pan/.style={draw=black!22,line width=0.571pt,rounded corners=3.571pt}]
\pgfmathsetmacro{\G}{5}
\pgfmathsetmacro{\W}{(\linewidth/1pt-3*\G)/4}
\pgfmathsetmacro{\H}{114}
\pgfmathsetmacro{\LW}{\linewidth/1pt}%
\useasboundingbox (0,2) rectangle ({\LW},{-\H-0.2});
\foreach \k/\t/\e in {0/{\ding{172}\, Score each action}/{Eq.~(\ref*{eq:logratio})},
                      1/{\ding{173}\, Anchor each state}/{Eq.~(\ref*{eq:anchor})},
                      2/{\ding{174}\, Balance every span}/{Eq.~(\ref*{eq:subtb})},
                      3/{\ding{175}\, Sum into credit}/{Prop.~2}}{
  \draw[pan] ({\k*(\W+\G)+0.2},-0.2) rectangle ({\k*(\W+\G)+\W-0.2},{-\H});
  \node[ttl] at ({\k*(\W+\G)+4},-8) {\t};
  \node[tag] at ({\k*(\W+\G)+\W-4},-8) {\e};}
\foreach \k in {1,2,3}{\fill[black!45] ({\k*(\W+\G)-\G+0.6},{-\H/2+3.2})--++(3.8,-3.2)--++(-3.8,-3.2)--cycle;}
\pgfmathsetmacro{\cx}{\W/2}
\begin{scope}[shift={(\cx,-29)}]
\foreach \i in {0,1,2,3} \node[st] (a\i) at ({(\i-1.5)*26.5},0) {$s_{\i}$};
\draw[act] (a0)--(a1); \draw[act,evopink] (a1)--(a2); \draw[act] (a2)--(a3);
\foreach \i/\c in {0/black,1/evopink,2/black}{
  \node[lab,text=\c] at ({(\i-1)*26.5},9.5) {$a_{\i}$};
  \node[lab,text=\c] at ({(\i-1)*26.5},-10.5) {$\Delta\ell_{\i}$};}
\node at (0,-29) {$\Delta\ell_t=\ell_\theta(a_t)-\ell_\rho(a_t)$};
\node[note] at (0,-50) {$\ell$: log-probability summed over\\the executed tool-call tokens};
\node[note] at (0,-71) {$\rho$: same backbone, frozen;\\both scored at the same $s_t$};
\end{scope}
\pgfmathsetmacro{\cx}{(\W+\G)+\W/2}
\begin{scope}[shift={(\cx,-29)}]
\foreach \i in {0,1,2,3} \node[st] (b\i) at ({(\i-1.5)*24},0) {$s_{\i}$};
\foreach \i/\j in {0/1,1/2,2/3} \draw[act] (b\i)--(b\j);
\foreach \i in {0,1,2} \node[lab,anchor=north] at (b\i.south) {$\tilde u_{\i}$};
\node[lab,anchor=north] at (b3.south) {$\log R_\beta$};
\node[meas,anchor=north] (m) at (0,-20) {measured, no gradient\\[-0.714pt]$\operatorname{sg}\bigl[\mathrm{clip}_{[0,\beta]}\bigl(u_{q,x}+c(g)\bigr)\bigr]$};
\node[font=\fontsize{12.86}{14}\selectfont] at (0,-42.5) {$+$};
\node[lrn,anchor=north] (l) at (0,-46.5) {learned residual\\[-0.714pt]$b_\psi\bigl(h_\rho(s),f(s)\bigr)$};
\node[note] at (0,-75.5) {$u_{q,x}$: task reference anchor\\$c(g)$: structure correction};
\end{scope}
\pgfmathsetmacro{\cx}{2*(\W+\G)+\W/2}
\begin{scope}[shift={(\cx,-29)}]
\foreach \i in {0,1,2,3} \node[st] (c\i) at ({(\i-1.5)*24},0) {$s_{\i}$};
\draw[act] (c0)--(c1); \draw[act,evopink] (c1)--(c2); \draw[act] (c2)--(c3);
\def\xa{-36}\def\xb{-12}\def\xc{12}\def\xd{36}
\draw[sp] (\xa+1,-15)--node[lab,above=0.429pt,text=black!45]{$\delta_{0:1}$}(\xb-1,-15);
\draw[hs] (\xb+1,-15)--node[lab,above=0.429pt]{$\delta_{1:2}$}(\xc-1,-15);
\draw[sp] (\xc+1,-15)--node[lab,above=0.429pt,text=black!45]{$\delta_{2:3}$}(\xd-1,-15);
\draw[hs] (\xa+1,-27)--node[lab,above=0.429pt]{$\delta_{0:2}$}(\xc-1,-27);
\draw[hs] (\xb+1,-39)--node[lab,above=0.429pt]{$\delta_{1:3}$}(\xd-1,-39);
\draw[hs] (\xa+1,-51)--node[lab,above=0.429pt]{$\delta_{0:3}$}(\xd-1,-51);
\node at (0,-63) {$\delta_{i:j}=\tilde u_i+\sum_{t=i}^{j-1}\Delta\ell_t-\tilde u_j$};
\node at (0,-77) {$\mathcal L_x=\tfrac{1}{6}\sum_{i<j}\delta_{i:j}^2$};
\end{scope}
\pgfmathsetmacro{\cx}{3*(\W+\G)+\W/2}
\begin{scope}[shift={(\cx,-29)}]
\pgfmathsetmacro{\cw}{10.5}
\pgfmathsetmacro{\xz}{-\cw*2.5+5}
\foreach \j/\l in {0/{0:1},1/{1:2},2/{2:3},3/{0:2},4/{1:3},5/{0:3}}
  \node[font=\fontsize{7.14}{8.2}\selectfont,rotate=50,anchor=south west,inner sep=0pt] at ({\xz+\j*\cw-2.5},-3) {$\delta_{\l}$};
\foreach \r/\nm in {0/a_0,1/a_1,2/a_2}{
  \node[anchor=east,inner sep=1.429pt] at ({\xz-5},{-9-\r*10.5}) {$\nm$};
  \foreach \j in {0,...,5} \draw[black!25,line width=0.571pt] ({\xz+\j*\cw-4.2},{-9-\r*10.5-4.2}) rectangle ++(8.4,8.4);}
\foreach \j in {0,3,5} \fill[black!45] ({\xz+\j*\cw-4.2},{-9-4.2}) rectangle ++(8.4,8.4);
\foreach \j in {1,3,4,5} \fill[evopink] ({\xz+\j*\cw-4.2},{-19.5-4.2}) rectangle ++(8.4,8.4);
\foreach \j in {2,4,5} \fill[black!45] ({\xz+\j*\cw-4.2},{-30-4.2}) rectangle ++(8.4,8.4);
\draw[evoblue,line width=1.143pt,rounded corners=1.429pt] ({\xz+5*\cw-5.6},-3.4) rectangle ({\xz+5*\cw+5.6},-35.6);
\node at (0,-48) {$\dfrac{\partial\mathcal L_x}{\partial\ell_\theta(a_t)}=\tfrac{2}{K_3}\sum_{i\le t<j}\delta_{i:j}$};
\node[note] at (0,-62) {rows use 3, 4, and 3 spans};
\node[align=center,text=evoblue] at (0,-74) {TB keeps only $\delta_{0:3}$ (boxed):\\one signal for all actions};
\end{scope}
\end{tikzpicture}}%
\caption{AnchorTB on a three-action history: \ding{172}~score each action against the frozen reference;
\ding{173}~anchor each state with a measured, stop-gradient anchor (blue) and a learned residual (pink);
\ding{174}~form all $K_3=6$ span residuals; \ding{175}~sum each action's spans into its coefficient.}
\label{fig:anchortb-steps}
\end{figure}

\noindent\textbf{Action Log-Ratio.}
An orchestrator action is a tool call of $K_t$ tokens sampled under a grammar
mask; its log-probability and its log-ratio to the reference are computed on
the executed tokens:
\begin{equation}
  \ell_\theta(a_t\mid s_t)=\sum_{j=1}^{K_t}\log P_\theta(a_{t,j}\mid s_t,a_{t,<j}),
  \qquad
  \Delta\ell_t=\ell_\theta(a_t\mid s_t)-\ell_\rho(a_t\mid s_t),
  \label{eq:logratio}
\end{equation}
where $\rho$ is the same backbone without the trainable adapter, scored under
the same context, tokens, and mask. Positions with a single legal token
contribute zero, so only real choices enter the ratio.

\noindent\textbf{Subtrajectory Residual and Loss.}
Every non-root state has a unique parent, so the backward policy is identically
one. Motivated by reference-relative subtrajectory balance
\citep{madan2023learning,venkatraman2024amortizing}, we regress each residual toward zero:
\begin{equation}
  \delta^{(x)}_{i:j}=\tilde u_x(s_i)+\sum_{t=i}^{j-1}\Delta\ell_t-\tilde u_x(s_j),
  \qquad
  \tilde u_x(s_T)=\log R_\beta(x),
  \qquad
  \mathcal{L}_x=\frac{1}{K_T}\sum_{0\le i<j\le T}\bigl(\delta^{(x)}_{i:j}\bigr)^2,
  \label{eq:subtb}
\end{equation}
where nonterminal flows use Eq.~\eqref{eq:anchor} with the same trajectory index
$x$ throughout (batch index suppressed), the terminal value overrides the learned
parameterization, and $K_T=T(T+1)/2$ counts the equally weighted subtrajectories
for $T\ge1$, so each action's coefficient sums the residuals of the spans containing it
(Figure~\ref{fig:anchortb-steps}, \ding{175}).

\noindent\textbf{Measured State Flows.}
The ideal relative log flow $u^*(s)=\log\mathbb E_\rho[R_\beta(x)\mid s,C]$
satisfies $u^*(s_0)=\log Z(C)$. A shared flow with zero
residuals recovers $u^*$ and $P^*$ (Proposition~\ref{prop:consist}); this
motivates reference anchors with a learned residual:
\begin{equation}
  \tilde u_x(s)=\operatorname{sg}\!\Bigl[\clip_{[0,\beta]}\bigl(u_{q,x}+c(g(s))\bigr)\Bigr]+b_\psi\bigl(h_\rho(s),f(s)\bigr),
  \qquad
  u_{q,x}=\log\bigl[1+(e^{\beta}-1)\,\hat p_q^{(-x)}\bigr],
  \label{eq:anchor}
\end{equation}
For nonterminal $s$, $\hat p_q^{(-x)}$ is a task-level partial leave-one-out
shrinkage estimate of the reference mean reward on the same task, computed from
reference rollout outcomes (Appendix~B); $c(g)$ is a correction read one batch
behind from a bucket of canonical prefix structures $g$, $h_\rho(s)$ the frozen
reference encoding, and $b_\psi$ a zero-initialized MLP. Under leave-one-out, one
prefix can differ across scored trajectories. The correction
averages reward ratios over prefix visits before the logarithm,
\begin{equation}
  y_x=\frac{R_\beta(x)}{e^{u_{q,x}}}-1,
  \qquad
  c(g)=\clip_{[-c_{\max},\,c_{\max}]}\log\Bigl(1+\frac{\sum_{(x,t)\in\mathcal V_g}y_x}{n_g+n_0}\Bigr),
\end{equation}
where $\mathcal V_g$ is the multiset of eligible reference prefix visits
to structure $g$, paired prefixes included (Appendix~C), $n_g=|\mathcal V_g|$, and $n_0\ge0$ with $n_g+n_0>0$.
The pseudo-count shrinks rare structures toward zero correction and $c_{\max}$ bounds it
(values in Appendix~C), and the logarithm's argument stays positive
(Lemma~\ref{lem:evs-pooling-domain}). The measured terms are held fixed, so only $\pi_\theta$
and $b_\psi$ are updated, with $b_\psi$ carrying the remainder.

\noindent\textbf{Proposition 2.} \emph{AnchorTB assigns each
orchestration action a regression coefficient from all subtrajectories
containing it, and combines a task-level reference anchor with prefix-dependent
corrections.}
\emph{Proof.} Appendix~\ref{sec:evs-C6}, via the coefficient of
Proposition~\ref{prop:evs-credit} and the corrections of Appendix~B.

\subsection{Validated Skill Admission}
Prior work admits a skill from a single trajectory, by a judge, by counting
successes, or by a trust hierarchy over verifiers
\citep{yang2026skillforge,chen2026skillcat,wu2026bayesian,shang2026self}, leaving
underspecified \emph{whether} a candidate helps and \emph{when} it may be promoted.
EvoSteer answers both inside the loop: a candidate is an ordinary entry of the action set in
Eq.~\eqref{eq:actions} that the orchestrator may bind while its evidence accumulates, and
that evidence is measured under the frozen reference of Section~4.2 (Figure~\ref{fig:arch}, bottom right),
so credit and admission share one reference.

\noindent\textbf{Skill Proposal and Action-Set Integration.}
In an author window, an independent author model distils at most one candidate
$\sigma$, a structured procedure, from scored, side-effect-free trajectories
(proposal rules and skill format in Appendix~C). A candidate occupies a
candidate slot of $\Slib{k}$, so the action mask of
Eq.~\eqref{eq:actions} already exposes it to \textsc{add\_agent} and
\textsc{bind\_skill} before it is validated; the full skill text is rendered
only when the bound agent executes.

\noindent\textbf{Paired Counterfactual Rollouts.}
For every batch task in a family with a candidate, two extra trajectories are
launched from the same record and the same forced first role, one binding
$\sigma$ and one not, and both are continued by the reference policy:
\begin{equation}
  x^{+}\sim\rho\bigl(\cdot\mid C^+,\ \textsc{add\_agent}(c_1,\ \sigma)\bigr),
  \qquad
  x^{-}\sim\rho\bigl(\cdot\mid C^-,\ \textsc{add\_agent}(c_1,\ \varnothing)\bigr),
  \label{eq:pair}
\end{equation}
where $c_1$ is the forced first role. The arm contexts $C^\pm$ share
the initial task, runtime, and value-head snapshot; $C^-$ masks $\sigma$ throughout. Pairs are scored by the terminal
reward of Eq.~\eqref{eq:target}; with
$b^{\pm}=\indic[r(x^{\pm})\ge\tfrac12]$, the comparisons accumulate as wins
$W=\sum\indic[b^{+}>b^{-}]$, losses $L=\sum\indic[b^{+}<b^{-}]$, and ties
$T_0=\sum\indic[b^{+}=b^{-}]$ over complete, side-effect-free pairs, which also
enter AnchorTB as off-policy paths, re-scored under each arm's own
context and legal mask (Appendix~C).

\noindent\textbf{Sequential Validation.}
At validation rounds between batches, each candidate with new wins or losses
gets one binomial sign-test tail per direction,
exact under the fixed-look sign model (Appendix~C):
\begin{equation}
  p_{+}=\Prob\bigl\{\mathrm{Bin}(D,\tfrac12)\ge W\bigr\},
  \qquad
  p_{-}=\Prob\bigl\{\mathrm{Bin}(D,\tfrac12)\ge L\bigr\},
  \qquad
  D=W+L,
  \label{eq:signtest}
\end{equation}
with $p_+=p_-=1$ when $D=0$. A candidate whose $p_{+}$ clears its
boundary is promoted to validated, one
whose $p_{-}$ clears it is retired, and otherwise it keeps accumulating. One
budget covers the whole run on two levels, and the same counts give the
reported effect:
\begin{equation}
  \alpha_{j,\ell}=\frac{\alpha}{j(j+1)\,\ell(\ell+1)},
  \qquad
  \hat\Delta=\frac{W-L}{W+L+T_0},
  \label{eq:spend}
\end{equation}
where $\hat\Delta$ is reported only when $W+L+T_0>0$ and measures the
paired threshold-pass-rate difference. Here $j$ globally indexes the
registered comparison and $\ell$ its observation; each
direction receives $\alpha_{j,\ell}/2$, and the budget is spent only when new
evidence arrives, so the total nominal level is at most $\alpha$ (Appendix~C). By a union
bound, family-wise error is at most $\alpha$ whenever each directional p-value is valid
for the selection and sampling rules in use, even for dependent tests.

\noindent\textbf{Proposition 3.} \emph{Validated Skill Admission lets a
candidate be tried before it is promoted, and promotes or retires it only when
paired evidence passes a sequential test under a shared alpha-spending budget.}
\emph{Proof.} Appendix~\ref{sec:evs-C6}, via the candidate-slot and status-transition rules,
the nominal budget accounting of Proposition~\ref{prop:budget}, and the conditional FWER result of
Proposition~\ref{prop:evs-conditional-fwer}.

\section{Experiments}

We evaluate EvoSteer through the following research questions (RQs).
\textbf{RQ1}: How does EvoSteer compare with the baselines in distribution?
\textbf{RQ2}: Does it generalize to held-out benchmarks?
\textbf{RQ3}: Does it transfer across executor backbones?
\textbf{RQ4}: What does each component contribute, and how do fixed orchestration
paradigms compare?
\textbf{RQ5}: How does AnchorTB compare with other objectives, and do the measured
anchor, validated admission, and in-run edits each behave as designed?

\begin{table}[t]
\centering
\large
\tabcolsep=1pt
\renewcommand{\arraystretch}{1.28}
\begin{adjustbox}{max width=\linewidth}
\begin{tabular}{@{}l@{\hspace{2pt}}l|ccccccc|cc|c@{}}
\toprule
\textbf{} & \textbf{} & \multicolumn{2}{c}{\textbf{Baseline}} & \textbf{SFT} & \textbf{GRPO$^\dagger$} & \textbf{AFlow} & \multicolumn{2}{c|}{\textbf{Agent+RL}} & \multicolumn{2}{c|}{\textbf{Skill evolution}} & \textbf{Ours} \\
\cmidrule(lr){3-4}\cmidrule(lr){5-5}\cmidrule(lr){6-6}\cmidrule(lr){7-7}\cmidrule(lr){8-9}\cmidrule(lr){10-11}\cmidrule(lr){12-12}
\textbf{Dataset} & \textbf{Metric} & \textbf{Qwen3.5} & \textbf{v4-flash} & \textbf{Qwen3.5} & \textbf{Qwen3.5} & \textbf{Qwen3.5} & \textbf{AgentFlow} & \textbf{FlowSteer} & \textbf{SkillFlow} & \textbf{SkillOpt} & \textbf{EvoSteer ($\Delta\uparrow$)} \\
\midrule
\multicolumn{12}{@{}l}{\textit{(a) In-Distribution (IID) benchmarks}} \\
\midrule
\textbf{HotpotQA} & Ans EM & 57.66\,{\large$_{\pm1.28}$} & 70.94\,{\large$_{\pm0.35}$} & 63.59\,{\large$_{\pm1.18}$} & 69.22\,{\large$_{\pm0.43}$} & 87.34\,{\large$_{\pm0.86}$} & 86.09\,{\large$_{\pm0.86}$} & 89.22\,{\large$_{\pm0.65}$} & 89.06\,{\large$_{\pm0.00}$} & 88.28\,{\large$_{\pm0.00}$} & \textbf{92.34}\,{\large$_{\pm0.35}$}\,\textcolor{evopink}{(+34.7)} \\
\textbf{} & Ans F1 & 73.75\,{\large$_{\pm1.24}$} & 82.33\,{\large$_{\pm0.47}$} & 77.70\,{\large$_{\pm1.23}$} & 81.64\,{\large$_{\pm0.50}$} & 88.97\,{\large$_{\pm0.85}$} & 88.42\,{\large$_{\pm0.88}$} & 90.30\,{\large$_{\pm0.66}$} & 92.00\,{\large$_{\pm0.11}$} & 90.34\,{\large$_{\pm0.44}$} & \textbf{92.84}\,{\large$_{\pm0.35}$}\,\textcolor{evopink}{(+19.1)} \\
\textbf{NQ-Open} & Ans EM & 23.59\,{\large$_{\pm0.86}$} & 38.59\,{\large$_{\pm1.31}$} & 25.78\,{\large$_{\pm0.00}$} & 24.22\,{\large$_{\pm1.10}$} & 76.56\,{\large$_{\pm0.78}$} & 81.41\,{\large$_{\pm1.28}$} & 79.53\,{\large$_{\pm1.28}$} & 82.66\,{\large$_{\pm0.86}$} & 82.34\,{\large$_{\pm1.31}$} & \textbf{85.00}\,{\large$_{\pm0.65}$}\,\textcolor{evopink}{(+61.4)} \\
\textbf{} & Ans F1 & 33.04\,{\large$_{\pm1.22}$} & 49.60\,{\large$_{\pm1.78}$} & 37.69\,{\large$_{\pm0.14}$} & 32.59\,{\large$_{\pm1.45}$} & 80.77\,{\large$_{\pm0.82}$} & 85.46\,{\large$_{\pm1.75}$} & 84.31\,{\large$_{\pm1.36}$} & 86.56\,{\large$_{\pm0.80}$} & 86.28\,{\large$_{\pm1.27}$} & \textbf{88.92}\,{\large$_{\pm0.69}$}\,\textcolor{evopink}{(+55.9)} \\
\textbf{MedQA} & Acc. & 71.41\,{\large$_{\pm0.43}$} & 84.53\,{\large$_{\pm0.35}$} & 73.13\,{\large$_{\pm0.43}$} & 77.66\,{\large$_{\pm0.43}$} & 89.22\,{\large$_{\pm0.65}$} & 88.44\,{\large$_{\pm0.86}$} & 90.31\,{\large$_{\pm1.18}$} & 91.41\,{\large$_{\pm0.78}$} & 91.72\,{\large$_{\pm0.70}$} & \textbf{93.13}\,{\large$_{\pm0.86}$}\,\textcolor{evopink}{(+21.7)} \\
\textbf{AIME 2026} & Acc. & 48.67\,{\large$_{\pm2.98}$} & 50.67\,{\large$_{\pm1.49}$} & 32.67\,{\large$_{\pm2.79}$} & 32.00\,{\large$_{\pm1.83}$} & 53.33\,{\large$_{\pm0.00}$} & 60.67\,{\large$_{\pm2.79}$} & 64.67\,{\large$_{\pm2.98}$} & 63.33\,{\large$_{\pm2.36}$} & 66.00\,{\large$_{\pm1.49}$} & \textbf{74.67}\,{\large$_{\pm1.83}$}\,\textcolor{evopink}{(+26.0)} \\
\textbf{MBPP+} & Pass@1 & 78.91\,{\large$_{\pm1.10}$} & 84.53\,{\large$_{\pm0.86}$} & 78.44\,{\large$_{\pm0.70}$} & 81.88\,{\large$_{\pm0.35}$} & 85.00\,{\large$_{\pm0.65}$} & 86.09\,{\large$_{\pm0.65}$} & 87.34\,{\large$_{\pm0.86}$} & 88.59\,{\large$_{\pm0.70}$} & 90.16\,{\large$_{\pm0.43}$} & \textbf{92.19}\,{\large$_{\pm1.10}$}\,\textcolor{evopink}{(+13.3)} \\
\textbf{ALFWorld} & SR & 46.88\,{\large$_{\pm1.24}$} & 60.47\,{\large$_{\pm0.43}$} & 40.31\,{\large$_{\pm0.70}$} & 50.47\,{\large$_{\pm0.70}$} & 73.28\,{\large$_{\pm0.65}$} & 76.09\,{\large$_{\pm0.43}$} & 81.56\,{\large$_{\pm0.70}$} & 83.28\,{\large$_{\pm0.70}$} & 85.78\,{\large$_{\pm0.86}$} & \textbf{89.69}\,{\large$_{\pm1.02}$}\,\textcolor{evopink}{(+42.8)} \\
\cmidrule(lr){1-12}
\multirow{3}{*}{\textbf{Avg.\,(IID)}} & Ans EM & 40.63\,{\large$_{\pm0.77}$} & 54.77\,{\large$_{\pm0.68}$} & 44.69\,{\large$_{\pm0.59}$} & 46.72\,{\large$_{\pm0.59}$} & 81.95\,{\large$_{\pm0.58}$} & 83.75\,{\large$_{\pm0.77}$} & 84.38\,{\large$_{\pm0.72}$} & 85.86\,{\large$_{\pm0.43}$} & 85.31\,{\large$_{\pm0.65}$} & \textbf{88.67}\,{\large$_{\pm0.37}$}\,\textcolor{evopink}{(+48.0)} \\
 & Ans F1 & 53.39\,{\large$_{\pm0.87}$} & 65.97\,{\large$_{\pm0.92}$} & 57.70\,{\large$_{\pm0.62}$} & 57.12\,{\large$_{\pm0.77}$} & 84.87\,{\large$_{\pm0.59}$} & 86.94\,{\large$_{\pm0.98}$} & 87.31\,{\large$_{\pm0.76}$} & 89.28\,{\large$_{\pm0.40}$} & 88.31\,{\large$_{\pm0.67}$} & \textbf{90.88}\,{\large$_{\pm0.39}$}\,\textcolor{evopink}{(+37.5)} \\
 & Acc. & 61.46\,{\large$_{\pm0.86}$} & 70.05\,{\large$_{\pm0.45}$} & 56.14\,{\large$_{\pm0.75}$} & 60.50\,{\large$_{\pm0.51}$} & 75.21\,{\large$_{\pm0.28}$} & 77.82\,{\large$_{\pm0.76}$} & 80.97\,{\large$_{\pm0.85}$} & 81.65\,{\large$_{\pm0.67}$} & 83.41\,{\large$_{\pm0.48}$} & \textbf{87.42}\,{\large$_{\pm0.63}$}\,\textcolor{evopink}{(+26.0)} \\
\midrule
\multicolumn{12}{@{}l}{\textit{(b) Out-of-Distribution (OOD) benchmarks}} \\
\midrule
\textbf{TriviaQA} & Ans EM & 44.22\,{\large$_{\pm0.43}$} & 70.31\,{\large$_{\pm0.00}$} & 46.09\,{\large$_{\pm0.55}$} & 47.34\,{\large$_{\pm1.18}$} & 90.94\,{\large$_{\pm1.05}$} & 89.06\,{\large$_{\pm0.00}$} & 91.25\,{\large$_{\pm0.35}$} & 90.16\,{\large$_{\pm1.05}$} & 92.34\,{\large$_{\pm1.28}$} & \textbf{95.63}\,{\large$_{\pm1.18}$}\,\textcolor{evopink}{(+51.4)} \\
\textbf{} & Ans F1 & 53.46\,{\large$_{\pm0.52}$} & 80.81\,{\large$_{\pm0.21}$} & 58.77\,{\large$_{\pm0.70}$} & 58.50\,{\large$_{\pm1.45}$} & 92.34\,{\large$_{\pm1.06}$} & 90.44\,{\large$_{\pm0.04}$} & 92.71\,{\large$_{\pm0.35}$} & 91.79\,{\large$_{\pm1.07}$} & 93.41\,{\large$_{\pm1.30}$} & \textbf{96.81}\,{\large$_{\pm1.23}$}\,\textcolor{evopink}{(+43.4)} \\
\textbf{MuSiQue} & Ans EM & 39.22\,{\large$_{\pm0.86}$} & 50.00\,{\large$_{\pm0.78}$} & 39.06\,{\large$_{\pm0.55}$} & 40.63\,{\large$_{\pm0.55}$} & 77.66\,{\large$_{\pm0.70}$} & 79.06\,{\large$_{\pm1.16}$} & 80.00\,{\large$_{\pm1.18}$} & 80.94\,{\large$_{\pm0.43}$} & 81.09\,{\large$_{\pm0.65}$} & \textbf{84.53}\,{\large$_{\pm0.86}$}\,\textcolor{evopink}{(+45.3)} \\
\textbf{} & Ans F1 & 49.12\,{\large$_{\pm1.17}$} & 58.73\,{\large$_{\pm0.72}$} & 50.79\,{\large$_{\pm0.95}$} & 50.09\,{\large$_{\pm0.68}$} & 82.70\,{\large$_{\pm0.74}$} & 84.36\,{\large$_{\pm1.24}$} & 85.74\,{\large$_{\pm1.47}$} & 84.82\,{\large$_{\pm0.45}$} & 86.09\,{\large$_{\pm0.69}$} & \textbf{87.84}\,{\large$_{\pm0.89}$}\,\textcolor{evopink}{(+38.7)} \\
\textbf{GPQA} & Acc. & 61.72\,{\large$_{\pm1.10}$} & 73.91\,{\large$_{\pm0.43}$} & 68.91\,{\large$_{\pm0.86}$} & 67.50\,{\large$_{\pm0.89}$} & 75.63\,{\large$_{\pm1.02}$} & 79.69\,{\large$_{\pm0.96}$} & 83.75\,{\large$_{\pm0.65}$} & 82.19\,{\large$_{\pm0.86}$} & 81.88\,{\large$_{\pm0.35}$} & \textbf{85.63}\,{\large$_{\pm0.70}$}\,\textcolor{evopink}{(+23.9)} \\
\textbf{MATH-Hard} & Acc. & 89.06\,{\large$_{\pm0.78}$} & 92.19\,{\large$_{\pm0.55}$} & 88.59\,{\large$_{\pm1.31}$} & 87.50\,{\large$_{\pm0.78}$} & 91.09\,{\large$_{\pm1.05}$} & 92.66\,{\large$_{\pm0.70}$} & 93.44\,{\large$_{\pm0.70}$} & 92.34\,{\large$_{\pm1.02}$} & 94.69\,{\large$_{\pm0.65}$} & \textbf{95.47}\,{\large$_{\pm0.86}$}\,\textcolor{evopink}{(+6.4)} \\
\textbf{SWE-Bench} & Resolved & 15.94\,{\large$_{\pm0.43}$} & 36.25\,{\large$_{\pm0.70}$} & 15.00\,{\large$_{\pm0.86}$} & 16.88\,{\large$_{\pm1.05}$} & 28.28\,{\large$_{\pm0.65}$} & 38.75\,{\large$_{\pm0.70}$} & 40.31\,{\large$_{\pm0.70}$} & 39.84\,{\large$_{\pm0.78}$} & 41.56\,{\large$_{\pm0.65}$} & \textbf{42.50}\,{\large$_{\pm0.70}$}\,\textcolor{evopink}{(+26.6)} \\
\textbf{WebShop} & SR & 32.66\,{\large$_{\pm0.86}$} & 65.47\,{\large$_{\pm0.86}$} & 32.81\,{\large$_{\pm0.78}$} & 36.25\,{\large$_{\pm0.70}$} & 59.53\,{\large$_{\pm0.86}$} & 76.09\,{\large$_{\pm0.43}$} & 78.28\,{\large$_{\pm0.86}$} & 82.81\,{\large$_{\pm1.10}$} & 85.94\,{\large$_{\pm0.78}$} & \textbf{90.63}\,{\large$_{\pm0.00}$}\,\textcolor{evopink}{(+58.0)} \\
\cmidrule(lr){1-12}
\multirow{3}{*}{\textbf{Avg.\,(OOD)}} & Ans EM & 41.72\,{\large$_{\pm0.48}$} & 60.16\,{\large$_{\pm0.39}$} & 42.58\,{\large$_{\pm0.39}$} & 43.98\,{\large$_{\pm0.65}$} & 84.30\,{\large$_{\pm0.63}$} & 84.06\,{\large$_{\pm0.58}$} & 85.63\,{\large$_{\pm0.62}$} & 85.55\,{\large$_{\pm0.57}$} & 86.72\,{\large$_{\pm0.72}$} & \textbf{90.08}\,{\large$_{\pm0.73}$}\,\textcolor{evopink}{(+48.4)} \\
 & Ans F1 & 51.29\,{\large$_{\pm0.64}$} & 69.77\,{\large$_{\pm0.38}$} & 54.78\,{\large$_{\pm0.59}$} & 54.29\,{\large$_{\pm0.80}$} & 87.52\,{\large$_{\pm0.65}$} & 87.40\,{\large$_{\pm0.62}$} & 89.22\,{\large$_{\pm0.76}$} & 88.31\,{\large$_{\pm0.58}$} & 89.75\,{\large$_{\pm0.74}$} & \textbf{92.32}\,{\large$_{\pm0.76}$}\,\textcolor{evopink}{(+41.0)} \\
 & Acc. & 49.84\,{\large$_{\pm0.41}$} & 66.95\,{\large$_{\pm0.33}$} & 51.33\,{\large$_{\pm0.49}$} & 52.03\,{\large$_{\pm0.43}$} & 63.63\,{\large$_{\pm0.45}$} & 71.80\,{\large$_{\pm0.36}$} & 73.95\,{\large$_{\pm0.37}$} & 74.30\,{\large$_{\pm0.47}$} & 76.02\,{\large$_{\pm0.31}$} & \textbf{78.55}\,{\large$_{\pm0.33}$}\,\textcolor{evopink}{(+28.7)} \\
\bottomrule
\end{tabular}
\end{adjustbox}
\vspace{5pt}
\caption{Main results (five-run mean $\pm$ std). All methods run on the Qwen3.5-9B
executor except v4-flash (DeepSeek-V4-Flash); GRPO$^\dagger$ trains the backbone;
$\Delta\uparrow$: gain over Qwen3.5-9B.}
\label{tab:main}
\end{table}
\subsection{Experimental Setup}

\noindent\textbf{Datasets.}
Six in-distribution (IID) benchmarks supply the training tasks and IID tests:
HotpotQA \citep{yang2018hotpotqa}, NQ-Open \citep{kwiatkowski2019natural}, MedQA
\citep{jin2021disease}, AIME 2026, MBPP+ \citep{liu2023your}, and ALFWorld
\citep{shridhar2020alfworld}. Six out-of-distribution (OOD) benchmarks are held out:
TriviaQA \citep{joshi2017triviaqa}, MuSiQue \citep{trivedi2022musique}, GPQA
\citep{rein2023gpqa}, MATH-Hard \citep{hendrycks2021measuring}, SWE-Bench Verified
\citep{jimenez2024swe}, and WebShop \citep{yao2022webshop}. Each test set has 128
items (AIME 2026: 30) disjoint from training.

\noindent\textbf{Baselines.}
We compare with direct prompting (Qwen3.5-9B, DeepSeek-V4-Flash), backbone training
(SFT, GRPO$^\dagger$ \citep{shao2024deepseekmath}), workflow search (AFlow
\citep{zhang2025aflow}), RL-trained orchestration (AgentFlow \citep{li2026flow},
FlowSteer \citep{zhang2026flowsteer}), and skill evolution (SkillFlow
\citep{zhang2026skillflow}, SkillOpt \citep{yang2026skillopt}). All use Qwen3.5-9B as
executor and trained model, the same training tasks, and their authors' best
configurations; DeepSeek-V4-Flash only writes EvoSteer's candidate skills, and other
authors move the averages by under 2 points.
EvoSteer is tested with learned parts frozen, and every test score is
a five-run mean (Appendix~\ref{sec:exp-details}).

\begin{table}[p]
\centering
\large
\tabcolsep=1pt
\renewcommand{\arraystretch}{1.16}
\setlength{\aboverulesep}{0pt}\setlength{\belowrulesep}{0pt}%
\resizebox{\linewidth}{!}{
\begin{tabular}{@{}l|*{6}{C{47pt}}|*{6}{C{47pt}}@{}}
\toprule
\textbf{Variant} & \multicolumn{6}{c|}{\textbf{IID}} & \multicolumn{6}{c}{\textbf{OOD}} \\
\cmidrule(lr){2-7}\cmidrule(lr){8-13}
 & {\normalsize \textbf{HotpotQA}} & {\normalsize \textbf{NQ-Open}} & {\normalsize \textbf{MedQA}} & {\normalsize \textbf{AIME}} & {\normalsize \textbf{MBPP+}} & {\normalsize \textbf{ALFWorld}} & {\normalsize \textbf{TriviaQA}} & {\normalsize \textbf{MuSiQue}} & {\normalsize \textbf{GPQA}} & {\normalsize \textbf{MATH}} & {\normalsize \textbf{SWE}} & {\normalsize \textbf{WebShop}} \\
 & {\normalsize Ans EM} & {\normalsize Ans EM} & {\normalsize Acc.} & {\normalsize Acc.} & {\normalsize Pass@1} & {\normalsize SR} & {\normalsize Ans EM} & {\normalsize Ans EM} & {\normalsize Acc.} & {\normalsize Acc.} & {\normalsize Resolved} & {\normalsize SR} \\
\midrule
Qwen3.5-9B (frozen) & 57.66 & 23.59 & 71.41 & 48.67 & 78.91 & 46.88 & 44.22 & 39.22 & 61.72 & 89.06 & 15.94 & 32.66 \\
EvoSteer arch., untrained & 82.34 & 73.28 & 82.97 & 53.33 & 87.66 & 73.59 & 90.47 & 71.88 & 75.31 & 90.00 & 29.53 & 69.84 \\
\midrule
\multicolumn{1}{@{}l|}{\normalsize\textit{Fixed orchestration paradigms}} & & & & & & & & & & & & \\[-2pt]
Single agent with tools & 75.94 & 67.19 & 80.78 & 48.67 & 85.63 & 70.16 & 86.25 & 61.41 & 70.31 & 89.84 & 27.03 & 66.09 \\
Fixed multi-agent template & 82.03 & 74.84 & 85.47 & 56.67 & 83.28 & 66.72 & 89.84 & 70.31 & 73.75 & 90.47 & 24.06 & 54.84 \\
Plan-then-execute & 80.47 & 72.50 & 81.88 & 51.33 & 84.84 & 65.63 & 90.78 & 68.59 & 71.56 & 89.53 & 25.94 & 57.81 \\
Searched workflow & 87.19 & 76.41 & 88.91 & 54.67 & 85.16 & 73.13 & 91.56 & 77.50 & 75.63 & 90.78 & 28.28 & 59.06 \\
\midrule
\multicolumn{1}{@{}l|}{\normalsize\textit{Online orchestration (Sec.~4.1)}} & & & & & & & & & & & & \\[-2pt]
$-$ Interleaved execution & 89.38 & 83.75 & 89.53 & 58.67 & 90.63 & 80.94 & 95.00 & 79.84 & 81.41 & 93.28 & 31.25 & 75.47 \\
$-$ Execution features & 88.28 & 83.75 & 90.31 & 66.67 & 91.41 & 81.88 & 93.91 & 81.25 & 82.34 & 93.28 & 37.34 & 82.81 \\
$-$ Learnable repair & 90.31 & 84.06 & 89.84 & 67.33 & 90.47 & 83.59 & 95.31 & 80.16 & 82.81 & 92.50 & 31.88 & 87.03 \\
$-$ Reference value head & 83.28 & 84.22 & 91.09 & 68.00 & 89.84 & 80.16 & 94.22 & 79.22 & 81.09 & 92.81 & 39.22 & 82.66 \\
\midrule
\multicolumn{1}{@{}l|}{\normalsize\textit{AnchorTB (Sec.~4.2)}} & & & & & & & & & & & & \\[-2pt]
$-$ Measured flows & 90.16 & 83.59 & 89.22 & 56.67 & 90.00 & 77.81 & 94.38 & 76.25 & 81.88 & 91.09 & 33.75 & 80.63 \\
$-$ Flow corrections & 85.94 & 82.66 & 89.84 & 64.00 & 90.31 & 82.50 & 95.00 & 83.75 & 82.81 & 92.19 & 38.13 & 85.16 \\
\midrule
\multicolumn{1}{@{}l|}{\normalsize\textit{Skill admission (Sec.~4.3)}} & & & & & & & & & & & & \\[-2pt]
$-$ Skill evolution & 88.28 & 80.94 & 90.16 & 68.67 & 84.53 & 82.81 & 94.38 & 80.63 & 83.28 & 92.81 & 38.59 & 85.16 \\
$-$ Sequential validation & 90.47 & 81.72 & 89.22 & 66.00 & 90.00 & 78.59 & 94.84 & 81.25 & 83.28 & 94.53 & 35.00 & 86.09 \\
\midrule
\textbf{EvoSteer (Full)} & \textbf{92.34} & \textbf{85.00} & \textbf{93.13} & \textbf{74.67} & \textbf{92.19} & \textbf{89.69} & \textbf{95.63} & \textbf{84.53} & \textbf{85.63} & \textbf{95.47} & \textbf{42.50} & \textbf{90.63} \\
\bottomrule
\end{tabular}
}
\vspace{5pt}
\caption{Component ablation and paradigm comparison (untrained: initial
$\pi_\theta$). Paradigms (same executor and budget) fix the team before execution: a
ReAct-style agent with all tools, a hand-designed template, a planner that writes the
graph once, and a workflow searched offline on training tasks. Ablations: $-$ interleaved
execution builds the graph first; $-$ execution features drops $f$ from the state of
$\pi_\theta$; $-$ learnable repair masks \textsc{rerun} and
\textsc{drop}; $-$ reference value head withholds $\hat v_k$ and $\Delta\hat v$; $-$ measured
flows learns $u_q$ as a free scalar; $-$ flow corrections turns off $c(g)$ and $b_\psi$;
$-$ skill evolution runs without any skills; $-$ sequential validation admits after $k=5$
successes.}
\label{tab:ablation}
\end{table}
\begin{figure}[p]
\centering
\begin{minipage}[b]{0.677\linewidth}
\centering
\includegraphics[width=\linewidth,trim=3 7 4 1,clip]{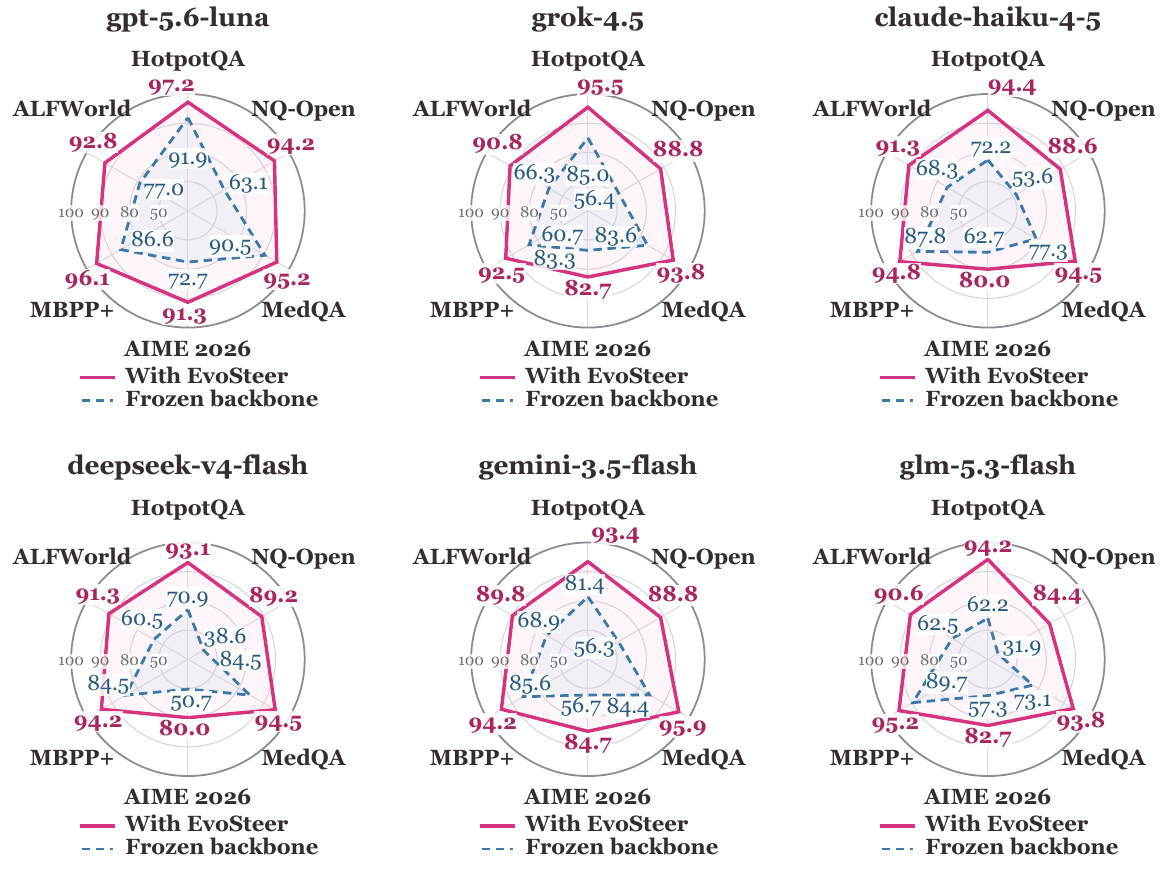}\\[1pt]
{\small (a) IID benchmark profile on each frozen backbone}
\end{minipage}\hspace{8pt}%
\begin{minipage}[b]{0.300\linewidth}
\centering
\includegraphics[width=\linewidth,trim=25 7 38 6,clip]{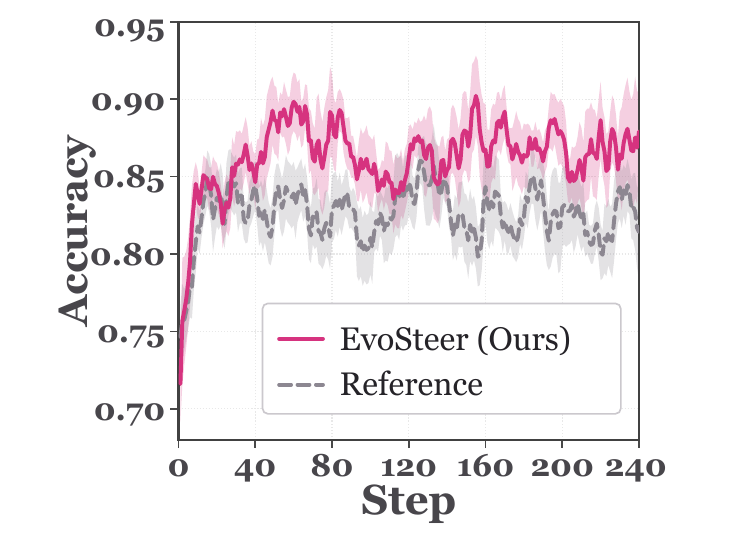}\\
\includegraphics[width=\linewidth,trim=26 7 38 8,clip]{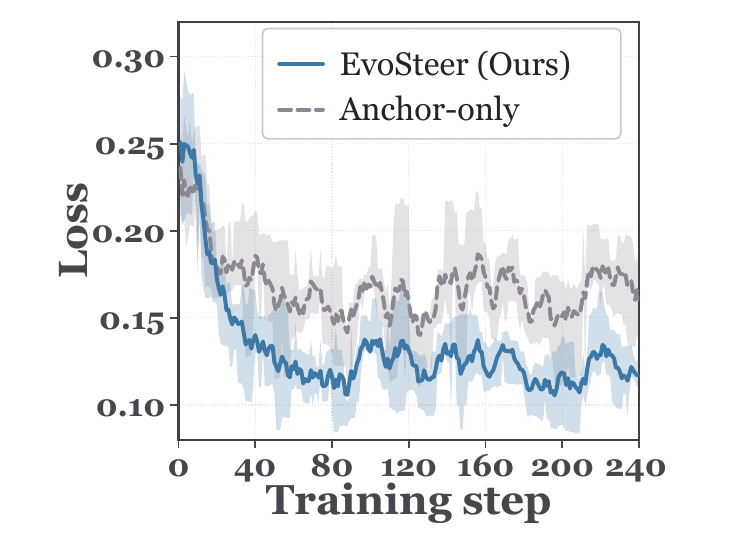}\\[1pt]
{\small (b) Training dynamics}
\end{minipage}\\[4pt]
\includegraphics[width=1.000\linewidth,trim=1 4 1 4,clip]{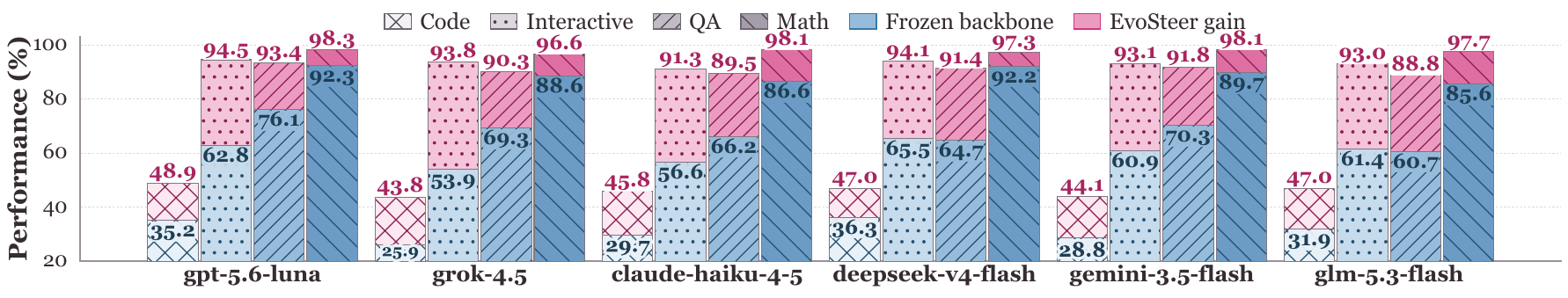}\\[1pt]
{\small (c) OOD scores aggregated by task domain}
\caption{Backbone transfer and training dynamics. \textbf{(a)} IID scores of six other
frozen executors (dashed) and with the same trained orchestrator (solid); the radius is
linear over 20--80 on the inner half and 80--100 on the outer half. \textbf{(b)} Accuracy of
$\pi_\theta$ and of the frozen $\rho$, and loss against the anchor-only level, over 240 steps.
\textbf{(c)} OOD scores per task domain.}
\label{fig:transfer}
\end{figure}

\begin{figure}[t]
\centering
\begin{minipage}[b]{0.690\linewidth}
\centering
\begin{tabular}{@{}c@{}}%
\resizebox*{\linewidth}{88pt}{%
\setlength{\tabcolsep}{4.43pt}\renewcommand{\arraystretch}{1.51}%
\begin{tabular}{@{}l|cccccc|ccc@{}}
\toprule
\textbf{Objective} & \textbf{Trivia} & \textbf{MuSiQue} & \textbf{GPQA} & \textbf{MATH} & \textbf{SWE} & \textbf{WebShop} & \textbf{GPU ms} & \textbf{GPU s} & \textbf{Tokens} \\
 & \textbf{EM} & \textbf{EM} & \textbf{Acc.} & \textbf{Acc.} & \textbf{Res.} & \textbf{SR} & \textbf{/ episode} & \textbf{/ step} & \textbf{/ problem} \\
\midrule
No training & 90.47 & 71.88 & 75.31 & 90.00 & 29.53 & 69.84 & -- & -- & -- \\
PPO & 93.59 & 79.22 & 80.63 & 92.81 & 35.94 & 82.97 & 2,002 & 384.4 & 17,834.3 \\
GRPO & 92.19 & 77.34 & 77.50 & 91.56 & 32.66 & 78.44 & 1,174 & 225.4 & 17,886.0 \\
Trajectory balance & 94.84 & 77.03 & 81.25 & 92.66 & 33.44 & 77.34 & 911 & 174.9 & 9,730.1 \\
Tempered TB & \textbf{95.63} & 81.09 & 82.50 & 93.75 & 40.00 & 87.81 & 1,737 & 333.5 & 19,373.8 \\
\textbf{AnchorTB (ours)} & \textbf{95.63} & \textbf{84.53} & \textbf{85.63} & \textbf{95.47} & \textbf{42.50} & \textbf{90.63} & 1,254 & 240.8 & 19,287.4 \\
\bottomrule
\end{tabular}}\\[1pt]
{\small (a) Objective comparison and training cost (RQ5)}\end{tabular}
\end{minipage}\hfill
\begin{minipage}[b]{\dimexpr\linewidth-0.690\linewidth-6pt\relax}
\centering
\begin{tabular}{@{}c@{}}\includegraphics[height=88.0pt,trim=3 13 12 3,clip]{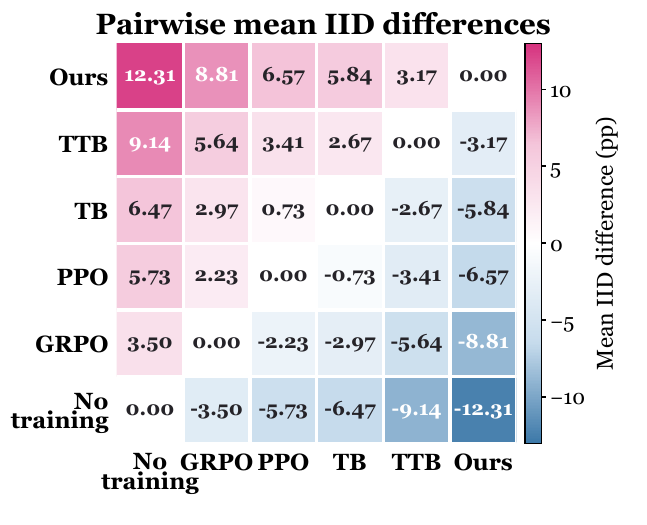}\\[1pt]{\small (b) Pairwise IID differences}\end{tabular}
\end{minipage}\\[6pt]
\begin{tabular}{@{}c@{}}\includegraphics[height=88.0pt,trim=9 60 15 17,clip]{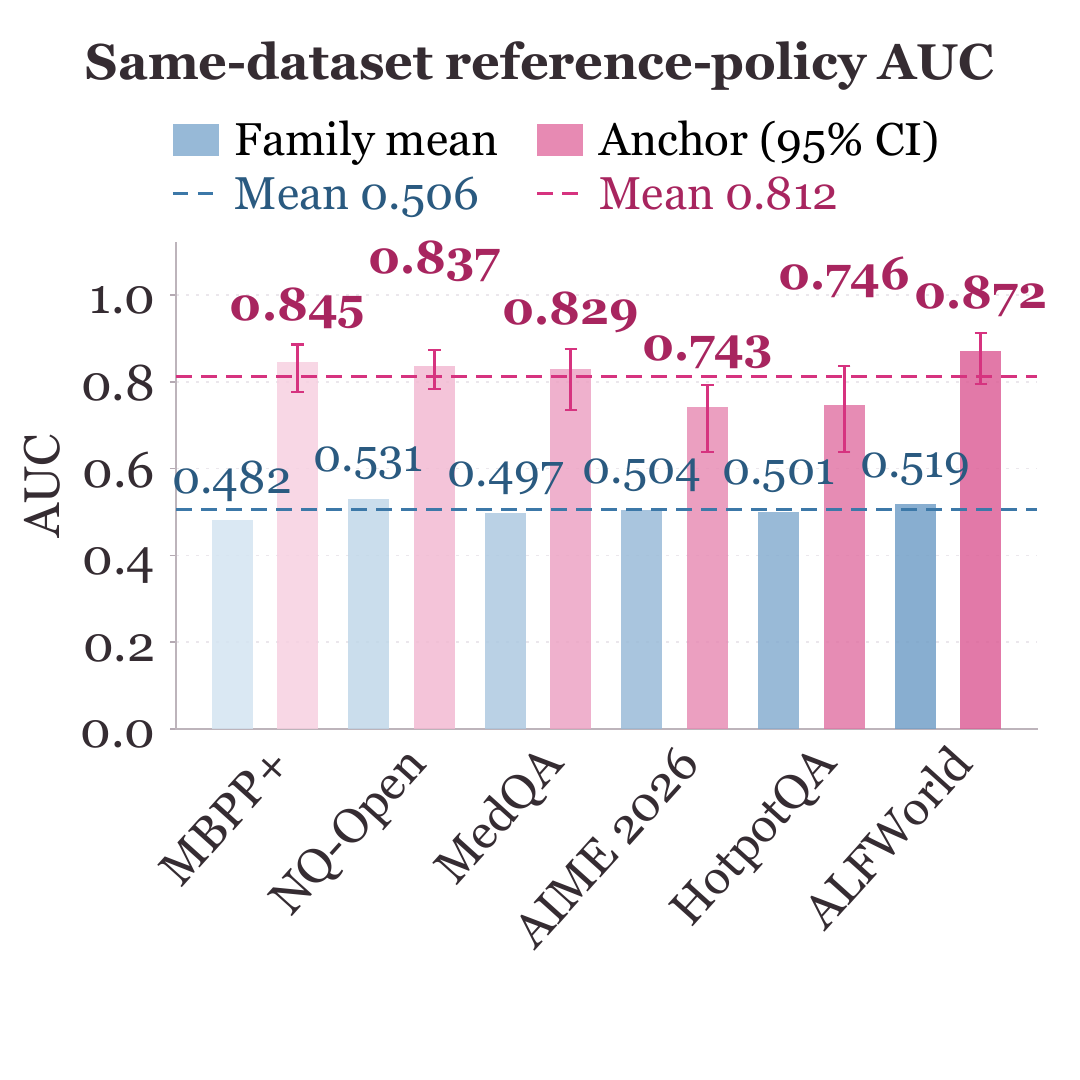}\\[1pt]{\small (c) Anchor AUC}\end{tabular}\hfill
\begin{tabular}{@{}c@{}}\includegraphics[height=88.0pt,trim=11 66 10 19,clip]{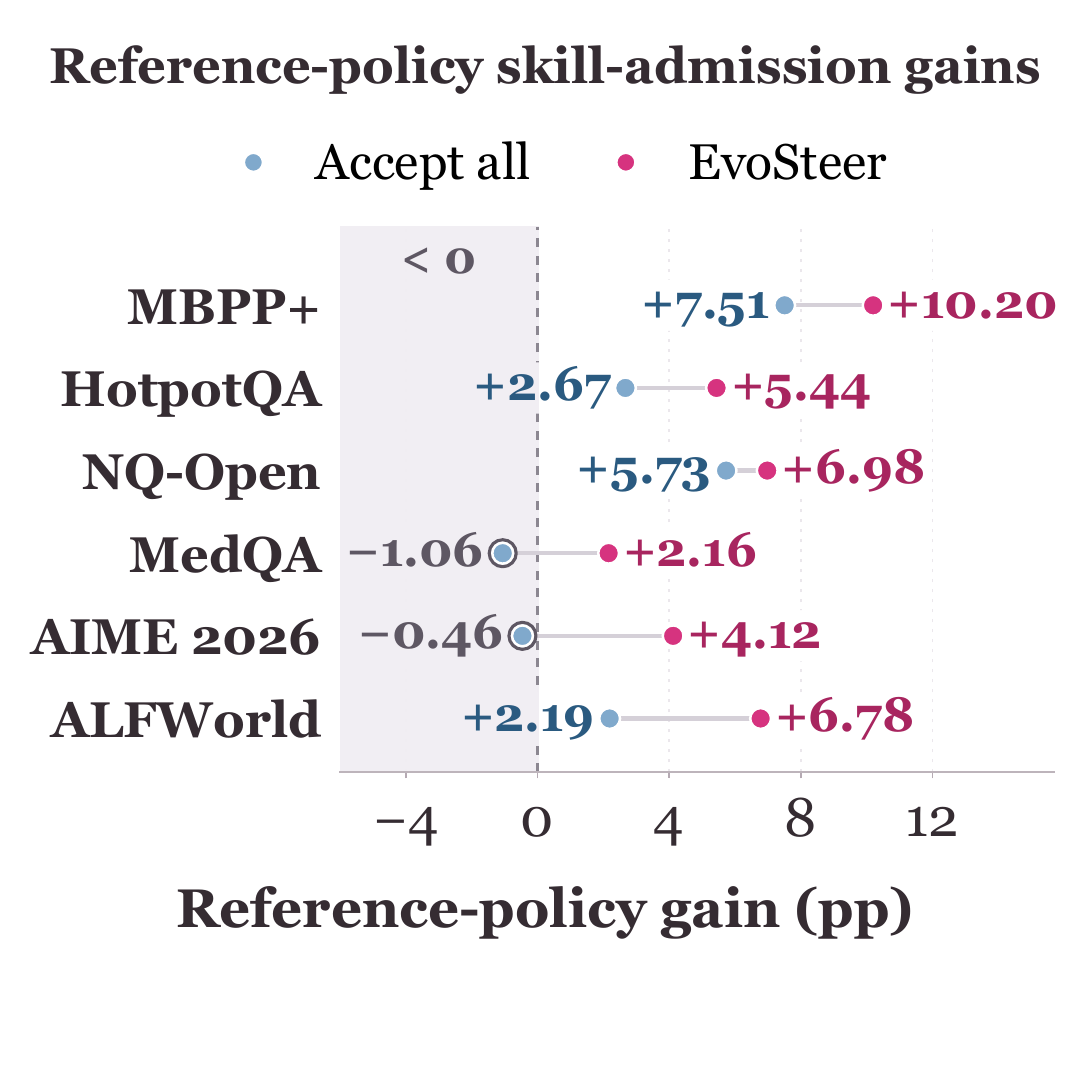}\\[1pt]{\small (d) Admission gain}\end{tabular}\hfill
\begin{tabular}{@{}c@{}}\includegraphics[height=88.0pt,trim=5.5 13.5 10.5 7,clip]{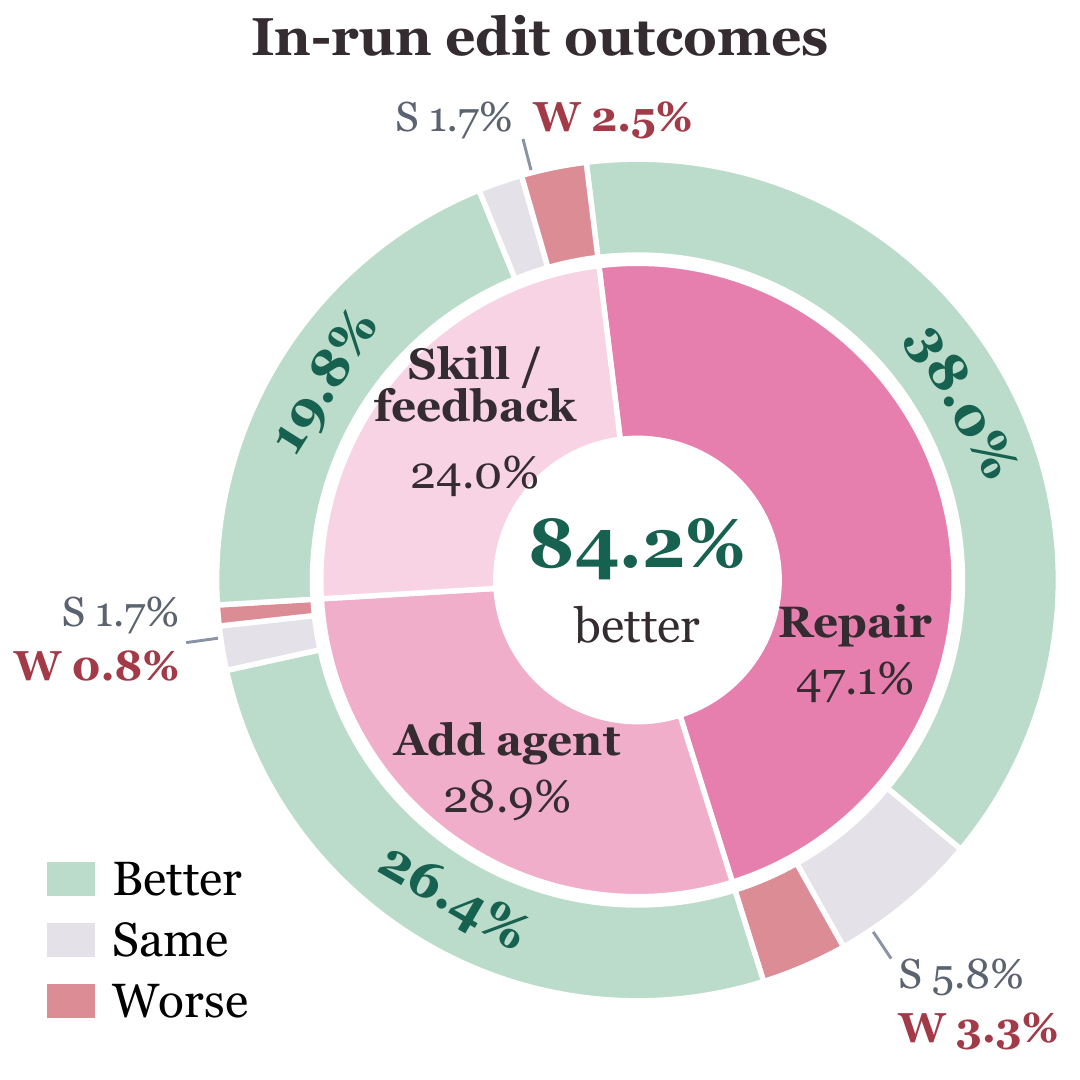}\\[1pt]{\small (e) Edit outcomes}\end{tabular}\hfill
\begin{tabular}{@{}c@{}}\includegraphics[height=88.0pt,trim=11 21 15 19,clip]{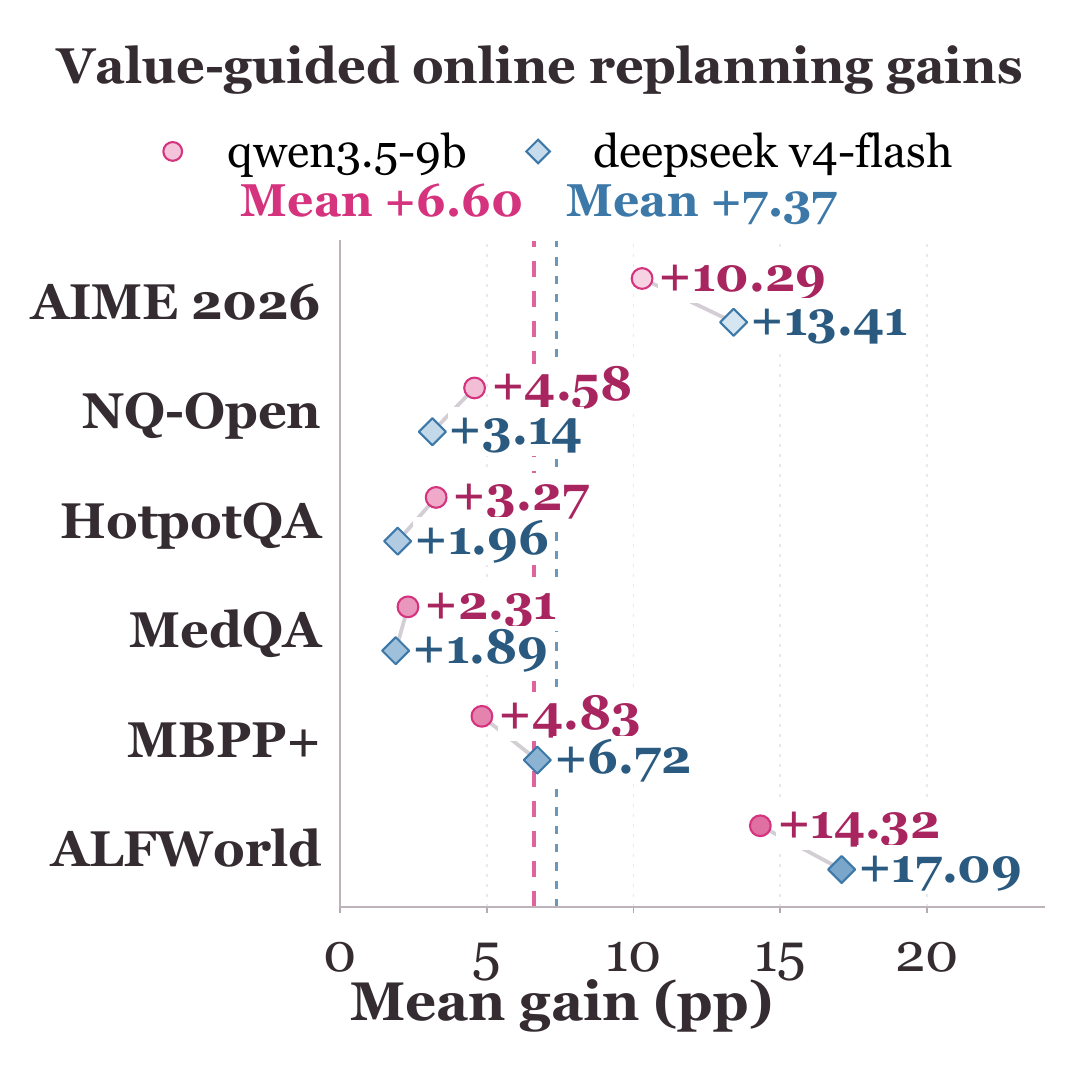}\\[1pt]{\small (f) Replanning gain}\end{tabular}
\caption{Objective comparison and mechanism analysis (definitions in
Appendix~\ref{sec:exp-details}). \textbf{(a)} OOD scores and cost per training
objective. \textbf{(b)} Row minus column, IID mean (pp); TTB: tempered TB.
\textbf{(c)} AUC for predicting reference-rollout correctness. \textbf{(d)} Paired skill gain,
admitting all or only validated candidates. \textbf{(e)} In-run edits by type and
outcome. \textbf{(f)} Gain of value-guided replanning in training.}
\label{fig:mechanism}
\end{figure}

\subsection{Main Results and Backbone Transfer (RQ1--RQ3)}

\noindent\textbf{Main results.}
EvoSteer is best on all twelve benchmarks (Table~\ref{tab:main}), ahead of the
strongest baseline by 2.81 EM and 4.01 accuracy points on IID averages and by 3.36 and
2.53 on OOD, with standard deviations at most 1.83. The lead is largest on AIME 2026
(8.67) and smallest on MATH-Hard (0.78), where the backbone is near its ceiling. Training the
backbone alone (SFT, GRPO$^\dagger$) adds at most 6.09 points to the IID EM average, so the gain comes from orchestration over the same frozen executor, and with a 9B executor
EvoSteer beats the larger DeepSeek-V4-Flash on every benchmark.

\noindent\textbf{Backbone transfer.}
The trained orchestrator transfers unchanged to six other executors and improves all 72
backbone--benchmark scores (Figure~\ref{fig:transfer}a,c), so one trained policy serves every executor. Weaker executors gain more, and the IID spread across backbones shrinks
from 17.5 to 4.3 points; gains are largest on interactive tasks and smallest on math,
where frozen scores are already near the ceiling.

\subsection{Component Analysis (RQ4, RQ5)}

\noindent\textbf{Online orchestration (Section~4.1).}
Each of its mechanisms contributes (Table~\ref{tab:ablation}): building the graph
before execution, dropping the execution features $f$, masking rerun and drop, and
withholding the value estimate cost 5.69, 4.12, 3.57, and 5.07 IID points. Keeping interleaving,
text feedback as in FlowSteer, and the value estimate but not $f$ costs 3.91 OOD
points, so $f$ carries information the text does not. Fixed paradigms help decomposable
questions but lag where results must be redone, and the best trails EvoSteer by
10.26 IID points; even untrained, the architecture leads all four on the OOD average.
In-run edits mostly help: 84.2\% raise the graded answer score and 6.6\% lower it
(Figure~\ref{fig:mechanism}e). Repairs, the most frequent edit, raise the score in 80.7\%
of cases, and value-guided replanning helps both executors on every IID dataset, most on
the interactive ALFWorld (Figure~\ref{fig:mechanism}f).

\noindent\textbf{AnchorTB (Section~4.2).}
Measured flows are the largest single contribution on IID: a free-scalar $u_q$ costs 6.59 IID
points, and removing $c(g)$ and $b_\psi$ costs 5.29. The anchor is informative on its own,
predicting reference outcomes before a rollout starts with AUC 0.812 against 0.506 for
the task-family mean (Figure~\ref{fig:mechanism}c), and the learned part lowers the loss
below the anchor-only level (Figure~\ref{fig:transfer}b). With only the loss changed,
AnchorTB beats tempered TB \citep{zhang2026skillflow}, TB \citep{malkin2022trajectory}, PPO
\citep{schulman2017proximal}, and GRPO by 3.17--8.81 IID points (Figure~\ref{fig:mechanism}a,b);
TB lags most on MuSiQue, SWE-Bench, and WebShop. AnchorTB needs 28\% less update compute than tempered TB, at a rollout
cost within 9\% of PPO and GRPO, so its gains come at comparable compute.

\noindent\textbf{Validated admission (Section~4.3).}
Removing skill evolution costs 5.27 IID and 3.26 OOD points, most on MBPP+ (7.66). A fixed
success count in place of the sequential test costs 5.17 IID points, nearly the same for any
count from 1 to 10 (Appendix~\ref{sec:exp-details}), and even trails the no-skill variant on
MedQA, AIME 2026, and ALFWorld. Accepting every candidate helps on average (+2.76) but
hurts MedQA and AIME 2026, whereas validated admission helps all six IID datasets (+5.95;
Figure~\ref{fig:mechanism}d).

\noindent\textbf{Architecture and training.}
The untrained architecture adds 21.01 IID and 24.04 OOD points to the frozen
executor, and AnchorTB training adds 12.31 and 11.22, so each supplies much of the gain.

\section{Conclusion}
EvoSteer unifies construction, repair, and skill growth in one learned loop: each action
executes as it is issued, AnchorTB credits it against a frozen reference, and paired
sequential tests decide which skills are kept. It beats all baselines on twelve
benchmarks and improves every executor backbone.

\section*{AI Use Statement}
In this work, generative AI tools were used to edit and polish the text for clarity and
readability, assist with \LaTeX{} typesetting and page layout, and adjust the placement and
sizing of existing figures and tables. They were also used for retrieval and discovery, to help
search for and identify related work; all bibliographic entries were taken from Google Scholar.
The authors manually reviewed and verified all AI-assisted text, figures, numerical values,
citations, and formatting against the underlying implementation and experimental records. The
authors take full responsibility for the final content of the paper and all artifacts produced
with the assistance of generative AI.

\section*{Ethics Statement}
This work follows the ICLR Code of Ethics. We aim to conduct and report our
research with scientific integrity, transparency, and reproducibility. Experimental results are
reported without fabrication, falsification, or intentional misrepresentation, and sufficient
implementation and evaluation details are provided to facilitate verification and reproduction.
We acknowledge prior work and use existing benchmarks, models, and associated resources in
accordance with their intended research purposes and applicable licenses. This study involves no
human participants or personal information.

\section*{Reproducibility Statement}
We support reproducibility by documenting the orchestration environment, action space, and value
estimate in Section~4.1, the AnchorTB objective in Section~4.2, and the validated skill admission
procedure in Section~4.3. Section~5.1 specifies the benchmarks, baselines, and evaluation setting.
The appendix provides the assumptions and complete proofs of all theoretical results
(Appendices~\mbox{\ref{app:evs-A}--\ref{app:evs-C}}), the training algorithm and all hyperparameters
(Appendix~\ref{app:evs-C}, Table~\ref{tab:config}), and the benchmark splits, metrics, baseline
settings, evaluation protocol, and compute (Appendix~\ref{sec:exp-details}). All test scores are
five-run means, and Table~\ref{tab:main} also reports their standard deviations.
\textbf{Code availability.} Our code, the exact training and test splits, and the full
configuration are publicly available at
\mbox{\href{https://github.com/beita6969/evosteer}{\nolinkurl{https://github.com/beita6969/evosteer}}}.

\setlength{\bibsep}{1.5pt plus 1pt minus 1pt}%
\bibliographystyle{iclr2027_conference}
\bibliography{refs}

\begin{thebibliography}{69}
\providecommand{\natexlab}[1]{#1}
\providecommand{\url}[1]{\texttt{#1}}
\expandafter\ifx\csname urlstyle\endcsname\relax
  \providecommand{\doi}[1]{doi: #1}\else
  \providecommand{\doi}{doi: \begingroup \urlstyle{rm}\Url}\fi

\bibitem[Bonagiri et~al.(2026)Bonagiri, Borkar, Anderias, Rafatirad, and
  Homayoun]{bonagiri2026causalflow}
Akash Bonagiri, Devang Borkar, Gerard~Janno Anderias, Setareh Rafatirad, and
  Houman Homayoun.
\newblock Causalflow: Causal attribution and counterfactual repair for llm
  agent failures.
\newblock \emph{arXiv preprint arXiv:2605.25338}, 2026.

\bibitem[Cemri et~al.(2026)Cemri, Pan, Yang, Agrawal, Chopra, Tiwari, Keutzer,
  Parameswaran, Klein, Ramchandran, et~al.]{cemri2026multi}
Mert Cemri, Melissa~Z Pan, Shuyi Yang, Lakshya~A Agrawal, Bhavya Chopra,
  Rishabh Tiwari, Kurt Keutzer, Aditya Parameswaran, Dan Klein, Kannan
  Ramchandran, et~al.
\newblock Why do multi-agent llm systems fail?
\newblock \emph{Advances in Neural Information Processing Systems}, 38, 2026.

\bibitem[Chen et~al.(2026{\natexlab{a}})Chen, Zhong, Liu, and
  Du]{chen2026skillcat}
Kunfeng Chen, Qihuang Zhong, Juhua Liu, and Bo~Du.
\newblock Skillcat: Contrastive assessment and topology-aware skill
  self-evolution for llm agents.
\newblock \emph{arXiv preprint arXiv:2606.13317}, 2026{\natexlab{a}}.

\bibitem[Chen et~al.(2026{\natexlab{b}})Chen, Liu, Wei, and
  Ding]{chen2026lemon}
Xudong Chen, Yixin Liu, Hua Wei, and Kaize Ding.
\newblock Lemon: Learning executable multi-agent orchestration via
  counterfactual reinforcement learning.
\newblock \emph{arXiv preprint arXiv:2605.14483}, 2026{\natexlab{b}}.

\bibitem[Chen et~al.(2026{\natexlab{c}})Chen, Sun, Wang, Wang, Zhang, Shen, Li,
  and Zhang]{chen2026exact}
Yanjun Chen, Yirong Sun, Hanlin Wang, Jinghan Wang, Xinming Zhang, Xiaoyu Shen,
  Wenjie Li, and Wei Zhang.
\newblock Exact is easier: Credit assignment for cooperative llm agents.
\newblock \emph{arXiv preprint arXiv:2603.06859}, 2026{\natexlab{c}}.

\bibitem[Dang et~al.(2026)Dang, Qian, Luo, Fan, Xie, Shi, Chen, Yang, Che,
  Tian, et~al.]{dang2026multi}
Yufan Dang, Chen Qian, Xueheng Luo, Jingru Fan, Zihao Xie, Ruijie Shi, Weize
  Chen, Cheng Yang, Xiaoyin Che, Ye~Tian, et~al.
\newblock Multi-agent collaboration via evolving orchestration.
\newblock \emph{Advances in neural information processing systems},
  38:\penalty0 165025--165059, 2026.

\bibitem[Deshmukh et~al.(2026)Deshmukh, Subramanian, Addanki, and
  Vlassis]{deshmukh2026cosac}
Shripad Deshmukh, Jayakumar Subramanian, Raghavendra Addanki, and Nikos
  Vlassis.
\newblock Cosac: Counterfactual credit assignment in sequential cooperative
  teams.
\newblock \emph{arXiv preprint arXiv:2604.17693}, 2026.

\bibitem[Fawkes \& Hartford(2026)Fawkes and Hartford]{fawkes2026f}
Jake Fawkes and Jason Hartford.
\newblock $ f $-trajectory balance: A loss family for tuning gflownets,
  generative models, and llms with off-and on-policy data.
\newblock \emph{arXiv preprint arXiv:2605.15417}, 2026.

\bibitem[Gao et~al.(2026)Gao, Chen, Wang, Guo, Pang, Liu, Shen, and
  Cheng]{gao2026skillaudit}
Haowen Gao, Haoran Chen, Can Wang, Shasha Guo, Liang Pang, Zhaoyang Liu, Huawei
  Shen, and Xueqi Cheng.
\newblock Skillaudit: Ground-truth-free skill evolution via paired trajectory
  auditing.
\newblock \emph{arXiv preprint arXiv:2606.14239}, 2026.

\bibitem[Hao et~al.(2026)Hao, Wang, Dong, Liu, Wang, Lin, Lin, Wang, Dong, and
  Chen]{hao2026evolve}
Zhezheng Hao, Tianfu Wang, Huanshuo Dong, Ziyan Liu, Hong Wang, Xiankun Lin,
  Qiang Lin, Can Wang, Hande Dong, and Jiawei Chen.
\newblock Evolve as a team: Collaborative self-evolution for llm-based
  multi-agent systems.
\newblock \emph{arXiv preprint arXiv:2605.29790}, 2026.

\bibitem[He \& Yang(2026)He and Yang]{he2026skillcommit}
Yu~He and Weikai Yang.
\newblock Skillcommit: Evolving agent skills through behaviorally validated
  scope expansion.
\newblock \emph{arXiv preprint arXiv:2608.15165}, 2026.

\bibitem[Hendrycks et~al.(2021)Hendrycks, Burns, Kadavath, Arora, Basart, Tang,
  Song, and Steinhardt]{hendrycks2021measuring}
Dan Hendrycks, Collin Burns, Saurav Kadavath, Akul Arora, Steven Basart, Eric
  Tang, Dawn Song, and Jacob Steinhardt.
\newblock Measuring mathematical problem solving with the math dataset.
\newblock \emph{arXiv preprint arXiv:2103.03874}, 2021.

\bibitem[Hong et~al.(2024)Hong, Zhuge, Chen, Zheng, Cheng, Wang, Zhang, Yau,
  Lin, Zhou, et~al.]{hong2024metagpt}
Sirui Hong, Mingchen Zhuge, Jonathan Chen, Xiawu Zheng, Yuheng Cheng, Jinlin
  Wang, Ceyao Zhang, Steven Yau, Zijuan Lin, Liyang Zhou, et~al.
\newblock Metagpt: Meta programming for a multi-agent collaborative framework.
\newblock In \emph{International Conference on Learning Representations},
  volume 2024, pp.\  23247--23275, 2024.

\bibitem[Hu et~al.(2026)Hu, Chu, Zhang, Wu, Jin, Zhao, Shao, Wang, and
  Wen]{hu2026skillbrew}
Wentao Hu, Zhendong Chu, Yiming Zhang, Junda Wu, Ming Jin, Xiangyu Zhao, Yilei
  Shao, Yanfeng Wang, and Qingsong Wen.
\newblock Skillbrew: Multi-objective curation of skill banks for llm agents.
\newblock \emph{arXiv preprint arXiv:2605.29440}, 2026.

\bibitem[Jiang et~al.(2026)Jiang, Li, Wang, Zhai, and Zhang]{jiang2026don}
Yanze Jiang, Mingxuan Li, Yuhao Wang, Shengfang Zhai, and Jiaheng Zhang.
\newblock Don't solve, just compare: Tiny advisors for runtime intervention in
  llm agents.
\newblock \emph{arXiv preprint arXiv:2608.21027}, 2026.

\bibitem[Jimenez et~al.(2024)Jimenez, Yang, Wettig, Yao, Pei, Press, and
  Narasimhan]{jimenez2024swe}
Carlos~E Jimenez, John Yang, Alexander Wettig, Shunyu Yao, Kexin Pei, Ofir
  Press, and Karthik Narasimhan.
\newblock Swe-bench: Can language models resolve real-world github issues?
\newblock In \emph{International Conference on Learning Representations},
  volume 2024, pp.\  54107--54157, 2024.

\bibitem[Jin et~al.(2021)Jin, Pan, Oufattole, Weng, Fang, and
  Szolovits]{jin2021disease}
Di~Jin, Eileen Pan, Nassim Oufattole, Wei-Hung Weng, Hanyi Fang, and Peter
  Szolovits.
\newblock What disease does this patient have? a large-scale open domain
  question answering dataset from medical exams.
\newblock \emph{Applied Sciences}, 11\penalty0 (14):\penalty0 6421, 2021.

\bibitem[Joshi et~al.(2017)Joshi, Choi, Weld, and
  Zettlemoyer]{joshi2017triviaqa}
Mandar Joshi, Eunsol Choi, Daniel~S Weld, and Luke Zettlemoyer.
\newblock Triviaqa: A large scale distantly supervised challenge dataset for
  reading comprehension.
\newblock In \emph{Proceedings of the 55th Annual Meeting of the Association
  for Computational Linguistics (Volume 1: Long Papers)}, pp.\  1601--1611,
  2017.

\bibitem[Kwiatkowski et~al.(2019)Kwiatkowski, Palomaki, Redfield, Collins,
  Parikh, Alberti, Epstein, Polosukhin, Devlin, Lee,
  et~al.]{kwiatkowski2019natural}
Tom Kwiatkowski, Jennimaria Palomaki, Olivia Redfield, Michael Collins, Ankur
  Parikh, Chris Alberti, Danielle Epstein, Illia Polosukhin, Jacob Devlin,
  Kenton Lee, et~al.
\newblock Natural questions: a benchmark for question answering research.
\newblock \emph{Transactions of the Association for Computational Linguistics},
  7:\penalty0 453--466, 2019.

\bibitem[Li et~al.(2026{\natexlab{a}})Li, Chen, Sun, and Wang]{li2026multi}
Haoran Li, Shulun Chen, Shaoyuan Sun, and Hanchen Wang.
\newblock Multi-agent coordination adaptation via structure-guided
  orchestration.
\newblock \emph{arXiv preprint arXiv:2605.25746}, 2026{\natexlab{a}}.

\bibitem[Li \& Ramakrishnan(2026)Li and Ramakrishnan]{li2026experience}
Sha Li and Naren Ramakrishnan.
\newblock Experience as a compass: Multi-agent rag with evolving orchestration
  and agent prompts.
\newblock \emph{arXiv preprint arXiv:2604.00901}, 2026.

\bibitem[Li et~al.(2026{\natexlab{b}})Li, Tian, Chen, Zhang, Liu, Ban, and
  Zhuang]{li2026counterfactual}
Zhongyi Li, Wan Tian, Jinju Chen, Huiming Zhang, Yang Liu, Yikun Ban, and
  Fuzhen Zhuang.
\newblock Counterfactual credit policy optimization for multi-agent
  collaboration.
\newblock \emph{arXiv preprint arXiv:2603.21563}, 2026{\natexlab{b}}.

\bibitem[Li et~al.(2026{\natexlab{c}})Li, Zhang, Han, Liu, Xie, Zhang, Choi,
  Zou, and Lu]{li2026flow}
Zhuofeng Li, Haoxiang Zhang, Seungju Han, Sheng Liu, Jianwen Xie, Yu~Zhang,
  Yejin Choi, James~Y Zou, and Pan Lu.
\newblock In-the-flow agentic system optimization for effective planning and
  tool use.
\newblock In \emph{International Conference on Learning Representations},
  volume 2026, pp.\  50524--50570, 2026{\natexlab{c}}.

\bibitem[Li et~al.(2026{\natexlab{d}})Li, Yu, Zang, Zhuang, Mo, and
  Gan]{li2026last}
Zongyue Li, Chengyue Yu, Lei Zang, Chenyi Zhuang, Linjian Mo, and Leilei Gan.
\newblock Last step matters: Early uncertainty cannot predict failure in
  long-horizon agents.
\newblock \emph{arXiv preprint arXiv:2608.29685}, 2026{\natexlab{d}}.

\bibitem[Liang et~al.(2026)Liang, Liu, Luo, Yang, Zhang, Min, Huang, Zhang, Qi,
  Xu, et~al.]{liang2026granularity}
Taoran Liang, Yang Liu, Shang Luo, Yingguang Yang, Rongrong Zhang, Yingzong
  Min, Yulin Huang, Jianshen Zhang, Yongzhi Qi, Kefu Xu, et~al.
\newblock Granularity-adaptive credit assignment for long-horizon llm agent
  reinforcement learning.
\newblock \emph{arXiv preprint arXiv:2609.12424}, 2026.

\bibitem[Liu et~al.(2023)Liu, Xia, Wang, and Zhang]{liu2023your}
Jiawei Liu, Chunqiu~Steven Xia, Yuyao Wang, and Lingming Zhang.
\newblock Is your code generated by chatgpt really correct? rigorous evaluation
  of large language models for code generation.
\newblock \emph{Advances in neural information processing systems},
  36:\penalty0 21558--21572, 2023.

\bibitem[Liu et~al.(2026)Liu, Xu, Stokes, Smolensky, Burger, and
  Gao]{liu2026gflowrl}
Xiaodong Liu, Michael Xu, Jack~W Stokes, Paul Smolensky, Doug Burger, and
  Jianfeng Gao.
\newblock Gflowrl: Scaling distribution-matching rl to large language models.
\newblock \emph{arXiv preprint arXiv:2607.13394}, 2026.

\bibitem[Lu \& Zhang(2026)Lu and Zhang]{lu2026autonomous}
Hanxiao Lu and Tianyi Zhang.
\newblock Autonomous repair for multi-agent systems via monte-carlo tree
  search.
\newblock \emph{arXiv preprint arXiv:2607.29055}, 2026.

\bibitem[Luan et~al.(2026)Luan, Zhang, Hu, Yang, Yu, and Chen]{luan2026repair}
Zhongwen Luan, Xiaoyu Zhang, Ming Hu, Yue Yang, Jiongchi Yu, and Xiaohong Chen.
\newblock Repair or resample? rethinking failure debugging in llm multi-agent
  systems.
\newblock \emph{arXiv preprint arXiv:2608.25920}, 2026.

\bibitem[Madan et~al.(2023)Madan, Rector-Brooks, Korablyov, Bengio, Jain, Nica,
  Bosc, Bengio, and Malkin]{madan2023learning}
Kanika Madan, Jarrid Rector-Brooks, Maksym Korablyov, Emmanuel Bengio, Moksh
  Jain, Andrei~Cristian Nica, Tom Bosc, Yoshua Bengio, and Nikolay Malkin.
\newblock Learning gflownets from partial episodes for improved convergence and
  stability.
\newblock In \emph{International Conference on Machine Learning}, pp.\
  23467--23483. PMLR, 2023.

\bibitem[Malkin et~al.(2022)Malkin, Jain, Bengio, Sun, and
  Bengio]{malkin2022trajectory}
Nikolay Malkin, Moksh Jain, Emmanuel Bengio, Chen Sun, and Yoshua Bengio.
\newblock Trajectory balance: Improved credit assignment in gflownets.
\newblock \emph{Advances in Neural Information Processing Systems},
  35:\penalty0 5955--5967, 2022.

\bibitem[Mishra et~al.(2026)Mishra, Chakraborty, and
  Kapusuzoglu]{mishra2026policy}
Amritansh Mishra, Supriyo Chakraborty, and Berkcan Kapusuzoglu.
\newblock On the policy gradient foundations of group relative policy
  optimization: Credit assignment, gradient sparsity, and rank collapse.
\newblock \emph{arXiv preprint arXiv:2606.29238}, 2026.

\bibitem[Pan et~al.(2026)Pan, Liu, Gao, Gao, Liu, Lin, Fu, Wang, Zhang, and
  Yu]{pan2026skillmas}
Shuai Pan, Yixiang Liu, Jiaye Gao, Te~Gao, Weiwen Liu, Jianghao Lin, Zhihui Fu,
  Jun Wang, Weinan Zhang, and Yong Yu.
\newblock Skillmas: Skill co-evolution with llm-based multi-agent system.
\newblock \emph{arXiv preprint arXiv:2605.09341}, 2026.

\bibitem[Rein et~al.(2023)Rein, Hou, Stickland, Petty, Pang, Dirani, Michael,
  and Bowman]{rein2023gpqa}
David Rein, Betty~Li Hou, Asa~Cooper Stickland, Jackson Petty, Richard~Yuanzhe
  Pang, Julien Dirani, Julian Michael, and Samuel~R Bowman.
\newblock Gpqa: A graduate-level google-proof q\&a benchmark.
\newblock \emph{arXiv preprint arXiv:2311.12022}, 2023.

\bibitem[Schulman et~al.(2017)Schulman, Wolski, Dhariwal, Radford, and
  Klimov]{schulman2017proximal}
John Schulman, Filip Wolski, Prafulla Dhariwal, Alec Radford, and Oleg Klimov.
\newblock Proximal policy optimization algorithms.
\newblock \emph{arXiv preprint arXiv:1707.06347}, 2017.

\bibitem[Sengupta(2026)]{sengupta2026self}
Biswa Sengupta.
\newblock Self-evolving agents with anytime-valid certificates.
\newblock \emph{arXiv preprint arXiv:2607.00871}, 2026.

\bibitem[Shah(2026)]{shah2026causal}
Jaineet Shah.
\newblock Causal agent replay: Counterfactual attribution for llm-agent
  failures.
\newblock \emph{arXiv preprint arXiv:2606.08275}, 2026.

\bibitem[Shang \& Yang(2026)Shang and Yang]{shang2026hypothesis}
Fangxin Shang and Yehui Yang.
\newblock Hypothesis-driven skill optimization for llm agents.
\newblock \emph{arXiv preprint arXiv:2606.22330}, 2026.

\bibitem[Shang et~al.(2026)Shang, Xu, Sun, Xia, Hu, Xu, and
  Zheng]{shang2026self}
Linfang Shang, Ming Xu, Yiding Sun, Tianle Xia, Lingxiang Hu, Lan Xu, and Ning
  Zheng.
\newblock When self-evolution backfires: Pre-commit gating against skill
  contamination in llm agents.
\newblock \emph{arXiv preprint arXiv:2608.05810}, 2026.

\bibitem[Shao et~al.(2024)Shao, Wang, Zhu, Xu, Song, Bi, Zhang, Zhang, Li, Wu,
  et~al.]{shao2024deepseekmath}
Zhihong Shao, Peiyi Wang, Qihao Zhu, Runxin Xu, Junxiao Song, Xiao Bi, Haowei
  Zhang, Mingchuan Zhang, YK~Li, Yang Wu, et~al.
\newblock Deepseekmath: Pushing the limits of mathematical reasoning in open
  language models.
\newblock \emph{arXiv preprint arXiv:2402.03300}, 2024.

\bibitem[Shawn(2026)]{shawn2026pace}
Zayx Shawn.
\newblock Pace: Anytime-valid acceptance tests for self-evolving agents.
\newblock \emph{arXiv preprint arXiv:2606.08106}, 2026.

\bibitem[Shen et~al.(2026)Shen, Zhang, Zhao, and Cheng]{shen2026dynamic}
Junhao Shen, Teng Zhang, Xiaoyan Zhao, and Hong Cheng.
\newblock Dynamic skill lifecycle management for agentic reinforcement
  learning.
\newblock \emph{arXiv preprint arXiv:2605.10923}, 2026.

\bibitem[Shridhar et~al.(2020)Shridhar, Yuan, C{\^o}t{\'e}, Bisk, Trischler,
  and Hausknecht]{shridhar2020alfworld}
Mohit Shridhar, Xingdi Yuan, Marc-Alexandre C{\^o}t{\'e}, Yonatan Bisk, Adam
  Trischler, and Matthew Hausknecht.
\newblock Alfworld: Aligning text and embodied environments for interactive
  learning.
\newblock \emph{arXiv preprint arXiv:2010.03768}, 2020.

\bibitem[Trivedi et~al.(2022)Trivedi, Balasubramanian, Khot, and
  Sabharwal]{trivedi2022musique}
Harsh Trivedi, Niranjan Balasubramanian, Tushar Khot, and Ashish Sabharwal.
\newblock ♫ musique: Multihop questions via single-hop question composition.
\newblock \emph{Transactions of the Association for Computational Linguistics},
  10:\penalty0 539--554, 2022.

\bibitem[Venkatraman et~al.(2024)Venkatraman, Jain, Scimeca, Kim, Sendera,
  Hasan, Rowe, Mittal, Lemos, Bengio, et~al.]{venkatraman2024amortizing}
Siddarth Venkatraman, Moksh Jain, Luca Scimeca, Minsu Kim, Marcin Sendera,
  Mohsin Hasan, Luke Rowe, Sarthak Mittal, Pablo Lemos, Emmanuel Bengio, et~al.
\newblock Amortizing intractable inference in diffusion models for vision,
  language, and control.
\newblock \emph{Advances in neural information processing systems},
  37:\penalty0 76080--76114, 2024.

\bibitem[Wang et~al.(2026{\natexlab{a}})Wang, Lyu, He, Yang, Zhong, Harel, and
  Lo]{wang2026fail}
Chenyu Wang, Yunbo Lyu, Junda He, Zhou Yang, Chenxing Zhong, Yaniv Harel, and
  David Lo.
\newblock Fail-fast, restart-smart: Early failure prediction and restart for
  swe agentic tasks.
\newblock \emph{arXiv preprint arXiv:2608.03222}, 2026{\natexlab{a}}.

\bibitem[Wang et~al.(2023)Wang, Xie, Jiang, Mandlekar, Xiao, Zhu, Fan, and
  Anandkumar]{wang2023voyager}
Guanzhi Wang, Yuqi Xie, Yunfan Jiang, Ajay Mandlekar, Chaowei Xiao, Yuke Zhu,
  Linxi Fan, and Anima Anandkumar.
\newblock Voyager: An open-ended embodied agent with large language models.
\newblock \emph{arXiv preprint arXiv:2305.16291}, 2023.

\bibitem[Wang et~al.(2026{\natexlab{b}})Wang, Zheng, and Xu]{wang2026rtmc}
Tao Wang, Suhang Zheng, and Xiaoxiao Xu.
\newblock Rtmc: Step-level credit assignment via rollout trees.
\newblock \emph{arXiv preprint arXiv:2604.11037}, 2026{\natexlab{b}}.

\bibitem[Wang et~al.(2026{\natexlab{c}})Wang, Zhou, Liang, Zhang, Liu, Zhou,
  and Yao]{wang2026not}
Yixuan Wang, Yiyang Zhou, Yiming Liang, Congyu Zhang, Fuxiao Liu, Jiawei Zhou,
  and Huaxiu Yao.
\newblock Not all skills help: Measuring and repairing agent knowledge.
\newblock \emph{arXiv preprint arXiv:2606.15390}, 2026{\natexlab{c}}.

\bibitem[Wu et~al.(2026)Wu, Yang, Liu, Lin, Zhang, Shi, Jiang, Xu, Li, and
  Guo]{wu2026bayesian}
Xiaojun Wu, Cehao Yang, Honghao Liu, Xueyuan Lin, Wenjie Zhang, Zhichao Shi,
  Xuhui Jiang, Chengjin Xu, Jia Li, and Jian Guo.
\newblock Bayesian-agent: Posterior-guided skill evolution for llm agent
  harnesses.
\newblock \emph{arXiv preprint arXiv:2606.08348}, 2026.

\bibitem[Xu et~al.(2026)Xu, Hu, Wang, Lin, Wang, Liu, and Feng]{xu2026tacomas}
Chen Xu, Yicheng Hu, Ruizi Wang, Xinyu Lin, Wenjie Wang, Dongrui Liu, and Fuli
  Feng.
\newblock Tacomas: Test-time co-evolution of topology and capability in
  llm-based multi-agent systems.
\newblock \emph{arXiv preprint arXiv:2605.09539}, 2026.

\bibitem[Yang et~al.(2026{\natexlab{a}})Yang, Ma, Huang, Wang, Li, Hu, Wang,
  and Chu]{yang2026skillforge}
Shidong Yang, Ziyu Ma, Tongwen Huang, Xucong Wang, Renda Li, Yiming Hu, Yong
  Wang, and Xiangxiang Chu.
\newblock Skillforge: Evolving verifiable skills for reinforcement learning
  agents.
\newblock \emph{arXiv preprint arXiv:2608.24747}, 2026{\natexlab{a}}.

\bibitem[Yang et~al.(2026{\natexlab{b}})Yang, Gong, Huang, Yang, Zhou, Huang,
  Li, Gao, Dai, Liu, et~al.]{yang2026skillopt}
Yifan Yang, Ziyang Gong, Weiquan Huang, Qihao Yang, Ziwei Zhou, Zisu Huang, Yan
  Li, Xuemei Gao, Qi~Dai, Bei Liu, et~al.
\newblock Skillopt: Executive strategy for self-evolving agent skills.
\newblock \emph{arXiv preprint arXiv:2605.23904}, 2026{\natexlab{b}}.

\bibitem[Yang et~al.(2018)Yang, Qi, Zhang, Bengio, Cohen, Salakhutdinov, and
  Manning]{yang2018hotpotqa}
Zhilin Yang, Peng Qi, Saizheng Zhang, Yoshua Bengio, William Cohen, Ruslan
  Salakhutdinov, and Christopher~D Manning.
\newblock Hotpotqa: A dataset for diverse, explainable multi-hop question
  answering.
\newblock In \emph{Proceedings of the 2018 conference on empirical methods in
  natural language processing}, pp.\  2369--2380, 2018.

\bibitem[Yao et~al.(2026)Yao, Liu, Fleming, Chen, and Wei]{yao2026maskills}
Huaiyuan Yao, Xiaoou Liu, Charles Fleming, Tianlong Chen, and Hua Wei.
\newblock Maskills: Continual skills optimization for multi-agent llm systems.
\newblock \emph{arXiv preprint arXiv:2609.02094}, 2026.

\bibitem[Yao et~al.(2022{\natexlab{a}})Yao, Chen, Yang, and
  Narasimhan]{yao2022webshop}
Shunyu Yao, Howard Chen, John Yang, and Karthik Narasimhan.
\newblock Webshop: Towards scalable real-world web interaction with grounded
  language agents.
\newblock \emph{Advances in Neural Information Processing Systems},
  35:\penalty0 20744--20757, 2022{\natexlab{a}}.

\bibitem[Yao et~al.(2022{\natexlab{b}})Yao, Zhao, Yu, Du, Shafran, Narasimhan,
  and Cao]{yao2022react}
Shunyu Yao, Jeffrey Zhao, Dian Yu, Nan Du, Izhak Shafran, Karthik Narasimhan,
  and Yuan Cao.
\newblock React: Synergizing reasoning and acting in language models.
\newblock \emph{arXiv preprint arXiv:2210.03629}, 2022{\natexlab{b}}.

\bibitem[Zhang et~al.(2026{\natexlab{a}})Zhang, Zhu, Shi, Liu, and
  Tang]{zhang2026agentforesight}
Boxuan Zhang, Jianing Zhu, Zeru Shi, Dongfang Liu, and Ruixiang Tang.
\newblock Agentforesight: Online auditing for early failure prediction in
  multi-agent systems.
\newblock \emph{arXiv preprint arXiv:2605.08715}, 2026{\natexlab{a}}.

\bibitem[Zhang(2026)]{zhang2026reinforcement}
Chenchen Zhang.
\newblock Reinforcement learning for llm-based multi-agent systems through
  orchestration traces.
\newblock \emph{arXiv preprint arXiv:2605.02801}, 2026.

\bibitem[Zhang et~al.(2024)Zhang, Yue, Sun, Wan, Yu, Fang, Wang, Chen, and
  Cheng]{zhang2024g}
Guibin Zhang, Yanwei Yue, Xiangguo Sun, Guancheng Wan, Miao Yu, Junfeng Fang,
  Kun Wang, Tianlong Chen, and Dawei Cheng.
\newblock G-designer: Architecting multi-agent communication topologies via
  graph neural networks.
\newblock \emph{arXiv preprint arXiv:2410.11782}, 2024.

\bibitem[Zhang et~al.(2025{\natexlab{a}})Zhang, Yue, Li, Yun, Wan, Wang, Cheng,
  Yu, and Chen]{zhang2025cut}
Guibin Zhang, Yanwei Yue, Zhixun Li, Sukwon Yun, Guancheng Wan, Kun Wang, Dawei
  Cheng, Jeffrey Yu, and Tianlong Chen.
\newblock Cut the crap: An economical communication pipeline for llm-based
  multi-agent systems.
\newblock In \emph{International Conference on Learning Representations},
  volume 2025, pp.\  75389--75428, 2025{\natexlab{a}}.

\bibitem[Zhang et~al.(2025{\natexlab{b}})Zhang, Xiang, Yu, Teng, Chen, Chen,
  Zhuge, Cheng, Hong, Wang, et~al.]{zhang2025aflow}
Jiayi Zhang, Jinyu Xiang, Zhaoyang Yu, Fengwei Teng, Xionghui Chen, Jiaqi Chen,
  Mingchen Zhuge, Xin Cheng, Sirui Hong, Jinlin Wang, et~al.
\newblock Aflow: Automating agentic workflow generation.
\newblock In \emph{International Conference on Learning Representations},
  volume 2025, pp.\  34040--34077, 2025{\natexlab{b}}.

\bibitem[Zhang et~al.(2026{\natexlab{b}})Zhang, Liu, Shen, Lin, Mao, Cambria,
  Tang, and Luo]{zhang2026flowsteer}
Mingda Zhang, Wenjin Liu, Tiesunlong Shen, Qika Lin, Rui Mao, Erik Cambria,
  Xiaoying Tang, and Haoran Luo.
\newblock Flowsteer: Towards agents designing agentic workflows via reinforced
  progressive canvas editing.
\newblock \emph{arXiv preprint arXiv:2602.01664}, 2026{\natexlab{b}}.

\bibitem[Zhang et~al.(2026{\natexlab{c}})Zhang, Shen, Luo, Liu, Xiao, Cambria,
  and Tang]{zhang2026skillflow}
Mingda Zhang, Tiesunlong Shen, Haoran Luo, Wenjin Liu, Zikai Xiao, Erik
  Cambria, and Xiaoying Tang.
\newblock Skillflow: Flow-driven recursive skill evolution for agentic
  orchestration.
\newblock \emph{arXiv preprint arXiv:2605.14089}, 2026{\natexlab{c}}.

\bibitem[Zhang \& Li(2026)Zhang and Li]{zhang2026consistencygate}
Yan Zhang and Shibo Li.
\newblock Consistencygate: Preventing memory contamination in llm agents via
  self-consistency admission control.
\newblock \emph{arXiv preprint arXiv:2607.22962}, 2026.

\bibitem[Zhang et~al.(2026{\natexlab{d}})Zhang, Xu, Dai, Shao, Wu, and
  Wang]{zhang2026evochamber}
Yaolun Zhang, Tianyi Xu, Shengyu Dai, Zhenwen Shao, Qingyun Wu, and Huazheng
  Wang.
\newblock Evochamber: Test-time co-evolution of multi-agent system at
  individual, team, and population scales.
\newblock \emph{arXiv preprint arXiv:2605.11136}, 2026{\natexlab{d}}.

\bibitem[Zhao et~al.(2026)Zhao, Zhang, Gu, Sun, Pei, Bansal, Rajmohan, and
  Ma]{zhao2026agenttether}
Chenyu Zhao, Shenglin Zhang, Wenwei Gu, Yongqian Sun, Dan Pei, Chetan Bansal,
  Saravan Rajmohan, and Minghua Ma.
\newblock Agenttether: Graph-guided diagnosis and runtime intervention for
  reliable llm agent operation.
\newblock \emph{arXiv preprint arXiv:2607.06273}, 2026.

\bibitem[Zhu et~al.(2026)Zhu, Chen, Zhang, Wu, and Dai]{zhu2026recognize}
Junze Zhu, Weihao Chen, Xuanwang Zhang, Zhen Wu, and Xinyu Dai.
\newblock Recognize your orchestrator: An entropy dynamics perspective for llm
  multi-agent systems.
\newblock \emph{arXiv preprint arXiv:2606.01351}, 2026.

\bibitem[Zhuge et~al.(2024)Zhuge, Wang, Kirsch, Faccio, Khizbullin, and
  Schmidhuber]{zhuge2024language}
Mingchen Zhuge, Wenyi Wang, Louis Kirsch, Francesco Faccio, Dmitrii Khizbullin,
  and J{\"u}rgen Schmidhuber.
\newblock Language agents as optimizable graphs.
\newblock \emph{arXiv preprint arXiv:2402.16823}, 2024.

\end{thebibliography}

\clearpage
\appendix
\setlength{\intextsep}{10pt plus 2pt minus 2pt}
\setlength{\textfloatsep}{12pt plus 2pt minus 2pt}
\setlength{\abovecaptionskip}{6pt}
\makeatletter
\def\thm@space@setup{\thm@preskip=5pt plus 2pt minus 1pt \thm@postskip=\thm@preskip}
\renewenvironment{proof}[1][\proofname]{\par\pushQED{\qed}\normalfont
  \topsep2\p@\@plus2\p@\relax\trivlist\item[\hskip\labelsep\itshape#1\@addpunct{.}]\ignorespaces}
  {\popQED\endtrivlist\@endpefalse}
\renewcommand\paragraph{\@startsection{paragraph}{4}{\z@}{0.9ex plus 0.3ex minus 0.2ex}{-1em}{\normalsize\bf}}
\makeatother
\relpenalty=10000
\binoppenalty=10000
\numberwithin{proposition}{section}
\numberwithin{definition}{section}
\numberwithin{assumption}{section}
\numberwithin{remark}{section}
\numberwithin{equation}{section}
\setcounter{proposition}{0}
\setcounter{definition}{0}
\setcounter{assumption}{0}
\setcounter{remark}{0}

\section{Anchored Trajectory Balance}
\label{app:evs-A}

This appendix treats three objects in turn: the reward-tilted history law, the
regression loss evaluated on recorded trajectories, and the statistical procedure
for skill validation. We first establish
structural and distributional properties conditional on a frozen environment
context, then analyze the implemented trajectory-indexed anchors and paired
tests. Algebraic identities hold on every recorded trajectory, and each
distributional or statistical result holds under the assumptions it states.

\subsection{Formal Setting, Histories, and Legal Actions}
\label{sec:evs-A1}

\begin{definition}[Context and complete-history representation]
\label{def:evs-context}
Fix a task and a rollout-batch context $C$. The context includes the task,
tools, role catalogue, executor configuration, skill-menu and value-head
snapshots, and rules that generate legal masks. A state $s_t$ retains the
complete ordered history, including the current communication graph, execution
records, features, and budget information. With $s_0$ the initial state,
\begin{equation}
 s_{t+1}=s_t\oplus(a_t,o_t^{\mathrm{exec}},f_{t+1}),\qquad
 a_t:s_t\longrightarrow s_{t+1},\qquad 0\le t<T,
 \label{eq:evs-history-update}
\end{equation}
where $f_{t+1}=f(s_t,a_t,o_t^{\mathrm{exec}})$ is the feature record produced by
action $a_t$. Thus $f(s_{t+1})=f_{t+1}$ denotes the latest feature record; the
initial state has its interface-specified initial features. The displayed
value estimate and its change are deterministic functions of this retained
history and the frozen head in $C$; they add no transition
randomness. A complete
history is $x=(s_0,a_0,s_1,\ldots,a_{T-1},s_T)$, identified with its
terminal state $s_T$, which retains the whole record. The final
\textsc{stop} transition is included in $T\ge1$. Write $x\succeq s$ when
$x$ extends prefix $s$, and let $|s|$ count appended action records.
\end{definition}

\begin{definition}[Legal actions and fixed executor]
\label{def:evs-kernel}
Let $\mathcal A_C(s)$ be the nonempty legal action set at a nonterminal
history. It is a state-dependent subset of the batch-frozen action
vocabulary. Let $\mathsf K_C(s'\mid s,a)$ be the normalized kernel of
execution, tool observations, and history updates after a legal action.
For a normalized history-dependent action policy $p$, its joint successor
kernel is $p(a\mid s,C)\mathsf K_C(s'\mid s,a)$. The executor is fixed:
this conditional kernel is the same for every $p$, and its outcomes may be
stochastic and history-dependent.
\end{definition}

\begin{lemma}[Strict growth and unique parentage]
\label{lem:evs-history-tree}
Under Definition~\ref{def:evs-context}, the history graph is a rooted tree:
each edge increases $|s|$ by one and every non-root state has a unique
parent. Its backward policy on parents is therefore identically one, so no backward model needs to be learned.
\end{lemma}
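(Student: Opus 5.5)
The plan is to read the history graph directly off Definition~\ref{def:evs-context}. Its vertices are the reachable histories: finite sequences $s=(s_0,a_0,s_1,\ldots,a_{k-1},s_k)$ produced by the update rule \eqref{eq:evs-history-update}. There is a directed edge $s\to s'$ exactly when $s'=s\oplus(a,o^{\mathrm{exec}},f)$ for some legal $a\in\mathcal A_C(s)$, some outcome $o^{\mathrm{exec}}$ in the support of $\mathsf K_C(\cdot\mid s,a)$, and the induced feature record $f=f(s,a,o^{\mathrm{exec}})$. The argument has three steps, taken in order: strict growth of $|s|$ along each edge, unique parentage via a truncation map, and then rootedness with acyclicity, which together give the tree.

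For strict growth, note that $\oplus$ appends exactly one action record, so $|s'|=|s|+1$ on every edge. For parentage, define $\mathrm{par}(s')$ as $s'$ with its last appended record $(a,o^{\mathrm{exec}},f)$ removed. If $s\to s'$, then $s'=s\oplus(\cdot)$, so removing the last record gives back $s$, and hence $s=\mathrm{par}(s')$. Any parent of $s'$ is therefore uniquely determined by $s'$ itself. The key point is that the state retains the complete ordered history, so concatenation is injective. Two distinct parents, or the same parent reached through different actions or outcomes, would yield different sequences and so could not produce the same $s'$. Every non-root reachable state has at least one parent by the definition of reachability.

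For rootedness and acyclicity, apply $\mathrm{par}$ repeatedly to any reachable $s$. This strictly decreases $|s|$ and terminates after $|s|$ steps at the unique state with $|s|=0$, namely $s_0$. So every vertex has a unique path back to $s_0$. A directed cycle is impossible because $|\cdot|$ strictly increases along every edge. A connected, acyclic digraph in which each non-root vertex has in-degree one is a rooted tree.

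The backward-policy claim then follows immediately. A backward policy $P_B(\cdot\mid s')$ must be a distribution over the parents of $s'$, and there is exactly one, so $P_B(\mathrm{par}(s')\mid s')=1$ and nothing needs to be learned. This is the fact used in \eqref{eq:subtb}, where backward terms drop out.

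The main obstacle is the injectivity subtlety: two different actions at $s$ might, through a stochastic or deterministic executor, yield identical graphs. I would handle this by stressing that states are histories rather than graph configurations. Because the appended record contains $a$ itself, coinciding graphs still give distinct states. The displayed value estimate and the features are deterministic functions of the retained history, so they cannot break this injectivity.
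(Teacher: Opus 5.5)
Your proposal is correct and follows essentially the paper's argument: strict growth of $|s|$ rules out directed cycles, deleting the last appended record recovers the unique parent, and a normalized distribution on a singleton parent set is identically one. You also spell out rootedness by iterating the truncation map down to $s_0$, and you note that coinciding graphs still give distinct histories; the paper treats the latter only in its closing remark that returning to an earlier topology keeps the intervening records.
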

\begin{proof}
Equation~\eqref{eq:evs-history-update} appends exactly one action record,
so $|s_{t+1}|=|s_t|+1$. A directed cycle would strictly increase this
integer and return to its original value, a contradiction. Deleting the
last action, observation, and feature record from a non-root history
recovers its preceding prefix. Any parent must produce precisely this
last appended record and the same preceding ordered history, so no second
parent exists. A normalized distribution on this singleton parent set has
probability one. Returning to an earlier topology keeps the intervening records, so the history still grows.
\end{proof}

\begin{assumption}[Support, normalization, and termination]
\label{ass:evs-laws}
The serialized history space is finite or countable. At every
reference-reachable nonterminal prefix, $\rho(\cdot\mid s,C)$ is normalized
and positive on $\mathcal A_C(s)$. A policy compared to it in a log-ratio
has the same positive action support and uses the same $\mathsf K_C$.
From every such prefix, both continuation processes reach a terminal
history almost surely, and every terminal record has a reward
$r(x)\in[0,1]$, with $0\le\beta<\infty$.
\end{assumption}

\begin{proposition}[Normalized terminal and prefix laws]
\label{prop:evs-normalization}
Under Assumption~\ref{ass:evs-laws}, terminal histories form a probability-one
partition for $p\in\{\rho,\pi_\theta\}$. In particular,
\begin{equation}
 \sum_{x\in\mathcal X_C}P_p(x\mid C)=1,\qquad
 P_p(s\mid C)=\sum_{x\succeq s}P_p(x\mid C),\qquad P_p(s_0\mid C)=1,
 \label{eq:evs-prefix-mass}
\end{equation}
where $P_p$ denotes the history law induced by $p$ and the executor.
\end{proposition}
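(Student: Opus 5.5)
We define the history law on the countable tree of Lemma~\ref{lem:evs-history-tree} by path products and read the three identities off termination and the tree structure. For $p\in\{\rho,\pi_\theta\}$ and a finite prefix $s=(s_0,a_0,s_1,\ldots,a_{t-1},s_t)$, set
\begin{equation}
 P_p(s\mid C)=\prod_{k=0}^{t-1}p(a_k\mid s_k,C)\,\mathsf K_C(s_{k+1}\mid s_k,a_k).
\end{equation}
By Definition~\ref{def:evs-kernel} each factor is a normalized successor kernel on the legal set. Because every state retains its full history, a state determines its entire root path. So this product is well defined on states, and $P_p(s_0\mid C)=1$ as the empty product.

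The first step is one-step conservation. For a nonterminal $s$, summing over all children gives
\[
 \sum_{a\in\mathcal A_C(s)}\sum_{s'}P_p(s\mid C)\,p(a\mid s,C)\,\mathsf K_C(s'\mid s,a)=P_p(s\mid C).
\]
This uses normalization of $p$ and $\mathsf K_C$. Unique parentage ensures each child $s'$ is counted once, so the sum runs over the children of $s$ in the tree. Inducting on depth, the mass of $s$ equals the mass of its terminal descendants of length at most $n$ plus the mass of its nonterminal descendants at depth $|s|+n$. The latter equals $\Pr_p(\text{not yet terminated after }n\text{ further steps}\mid s)\,P_p(s\mid C)$.

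The next step lets $n\to\infty$. Assumption~\ref{ass:evs-laws} says the continuation process from every reachable prefix terminates almost surely, so the remainder term tends to zero. The terminal descendants of length at most $n$ form an increasing family, so monotone convergence applies and gives $P_p(s\mid C)=\sum_{x\succeq s}P_p(x\mid C)$. The sum is over a countable set and the terms are nonnegative, so it is order-independent. For $s$ not reachable the identity is trivial, since both sides vanish.

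Taking $s=s_0$ gives $\sum_{x\in\mathcal X_C}P_p(x\mid C)=1$. Distinct terminal histories are disjoint events because terminal states are leaves identified with their full record. Hence the terminal histories form a probability-one partition.

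The main obstacle is the limit step. We need almost-sure termination for $\pi_\theta$ as well as $\rho$, and we must be careful that ``almost surely'' refers to the same path measure just constructed. To avoid circularity, I would first build the path measure on infinite-or-stopped sequences by Ionescu-Tulcea, identify the non-termination probability with the remainder mass, and then invoke the assumption for both $p$.
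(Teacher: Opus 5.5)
Your proposal is correct and essentially follows the paper's argument. The paper also starts from the path law built from the normalized local kernels, notes that distinct terminal histories are disjoint events, uses almost-sure termination to make their union a probability-one event (and, restricted to reaching $s$, the union of the terminal descendants of $s$), and concludes by countable additivity; your depth-$n$ conservation identity with the monotone limit just spells out that countable-additivity step, and becomes optional once the path measure is built by Ionescu-Tulcea as you propose.
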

\begin{proof}
Normalized local kernels define the successive action and observation
probabilities. Different terminal histories describe disjoint events:
a terminal record cannot be a proper prefix of a later executed history.
Almost-sure termination makes their union a probability-one event. The
same reasoning, restricted to the event of reaching $s$, partitions that
event by its terminal descendants. Countable additivity proves both sums;
the root is reached with probability one.
\end{proof}

\begin{remark}[Legal actions and horizon]
\label{rem:evs-structural-scope}
A frozen snapshot fixes the interface and the rules that generate masks, and
the legal set $\mathcal A_C(s)$ at each state follows from them; for example, a
rerun or drop action requires an applicable agent and sufficient resources.
The communication graph may contain cycles. Finite-horizon results below
assume a bound $T\le H$.
\end{remark}

\subsection{Path Laws and Positive Affine Tilting}
\label{sec:evs-A2}

\begin{lemma}[Path factorization and kernel cancellation]
\label{lem:evs-path-ratio}
Under Assumption~\ref{ass:evs-laws},
\begin{equation}
 P_p(x\mid C)=\prod_{t=0}^{T-1}
 p(a_t\mid s_t,C)\mathsf K_C(s_{t+1}\mid s_t,a_t).
 \label{eq:evs-path-law}
\end{equation}
For a supported continuation of $s=s_m$, define
$\Delta\ell_t=\log\pi_\theta(a_t\mid s_t,C)-\log\rho(a_t\mid s_t,C)$.
Then
\begin{equation}
 \frac{P_\theta(x\mid s,C)}{P_\rho(x\mid s,C)}
 =\prod_{t=m}^{T-1}\frac{\pi_\theta(a_t\mid s_t,C)}{\rho(a_t\mid s_t,C)}
 =\exp\!\left(\sum_{t=m}^{T-1}\Delta\ell_t\right).
 \label{eq:evs-suffix-ratio}
\end{equation}
Here $P_\theta=P_{\pi_\theta}$ and $P_\rho(x\mid C)$ is the quantity
written $\rho(x\mid C)$ in Eq.~\eqref{eq:target}.
\end{lemma}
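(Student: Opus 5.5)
The proof is a chain-rule computation on the history tree, followed by cancellation of the shared executor kernel. First I establish the path factorization~\eqref{eq:evs-path-law}. By Definition~\ref{def:evs-context}, a complete history $x=(s_0,a_0,s_1,\ldots,a_{T-1},s_T)$ is generated by alternately drawing $a_t\sim p(\cdot\mid s_t,C)$ and then $s_{t+1}\sim\mathsf K_C(\cdot\mid s_t,a_t)$. Definition~\ref{def:evs-kernel} says the joint successor kernel is the product $p(a\mid s,C)\mathsf K_C(s'\mid s,a)$. I apply the chain rule (induction on $t$) to the event that the process follows $x$ up to step $t$. Lemma~\ref{lem:evs-history-tree} guarantees that each state $s_{t+1}$ determines its unique parent and the action record appended to it. So there is exactly one action--observation path reaching $s_T$, and no summation over alternative paths is needed. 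The probability of $x$ is therefore the product of the $T$ one-step factors, and $P_p(s_0\mid C)=1$ from Proposition~\ref{prop:evs-normalization} starts the induction.

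The same argument works conditionally on a prefix $s=s_m$. On a supported continuation, $P_p(s_m\mid C)>0$, and the Markov-in-history structure gives
\[
P_p(x\mid s_m,C)=\prod_{t=m}^{T-1}p(a_t\mid s_t,C)\,\mathsf K_C(s_{t+1}\mid s_t,a_t).
\]
One way to see this is as the ratio $P_p(x\mid C)/P_p(s_m\mid C)$, where the first $m$ factors cancel. This uses Proposition~\ref{prop:evs-normalization} to identify $P_p(s_m\mid C)$ with the product of the prefix factors, which is again the unique-path property.

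Next I form the ratio for $p=\pi_\theta$ over $p=\rho$. Two points from the assumptions make this well defined.
\begin{itemize}
\item By Definition~\ref{def:evs-kernel}, the kernel $\mathsf K_C$ is the same for both policies, so every kernel factor appears identically in the numerator and the denominator.
\item By Assumption~\ref{ass:evs-laws}, $\rho$ is positive on $\mathcal A_C(s_t)$ at reference-reachable prefixes, and $\pi_\theta$ has the same support. Along a supported continuation, every $\rho(a_t\mid s_t,C)$ and every kernel factor is therefore strictly positive, so there is no division by zero.
\end{itemize}
Cancelling the kernel factors leaves the product of policy ratios. Taking the logarithm, which is legitimate because every term is positive, turns the product into $\exp\bigl(\sum_{t=m}^{T-1}\Delta\ell_t\bigr)$. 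The final identification $P_\rho(x\mid C)=\rho(x\mid C)$ is just the case $m=0$ of the factorization, read against the notation of Eq.~\eqref{eq:target}.

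The only delicate step is the support and positivity bookkeeping. One must confirm that every intermediate prefix of a supported continuation is itself reference-reachable, so that the positivity clause of Assumption~\ref{ass:evs-laws} applies at each $s_t$ with $t\ge m$. I would handle this by induction along the path. If $s_t$ is reference-reachable and $a_t$ together with $s_{t+1}$ have positive probability under $\rho$ and $\mathsf K_C$, then $s_{t+1}$ is reference-reachable. This closes the argument.
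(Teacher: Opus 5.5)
Your proposal is correct and follows essentially the same route as the paper. The chain rule over alternating action and executor draws gives the full-path and suffix products; on the common support the identical, strictly positive $\mathsf K_C$ factors cancel, leaving the action ratios whose logarithm is $\sum_{t=m}^{T-1}\Delta\ell_t$. Your extra bookkeeping only makes explicit what the paper's short proof leaves implicit: identifying $x$ with $s_T$ via the unique-parent property of Lemma~\ref{lem:evs-history-tree}, and the induction showing every intermediate prefix stays reference-reachable so that positivity applies at each $s_t$.
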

\begin{proof}
Apply the probability chain rule, conditioning each action on the history
before it and each observation on that history and action. This gives
Eq.~\eqref{eq:evs-path-law}; starting at $s_m$ gives the corresponding
suffix product. On the common support every executor factor in numerator
and denominator is equal and positive, so it cancels. The remaining
product consists of the action ratios, whose logarithm is the displayed
sum. This cancellation compares two policies on the same recorded
history.
\end{proof}

\begin{lemma}[Masked token likelihoods]
\label{lem:evs-token-mask}
Suppose legal actions have unique finite token serializations, including
any required action-ending symbol, and masked token generation terminates
almost surely. Use the same legal token sets for both policies, with
each policy normalized on each such set. The sum of executed token
log-probabilities is the log-probability of that action. Singleton legal
sets contribute zero to its log-ratio, since both policies give them probability one.
\end{lemma}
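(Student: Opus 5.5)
The plan is to prove Lemma~\ref{lem:evs-token-mask} by the chain rule for autoregressive generation under a grammar mask. The argument has three steps: identify legal actions with legal token strings, factor the action probability into token conditionals, and then split off the positions whose legal set is a singleton.

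\textbf{Step 1: actions correspond to masked token paths.} Fix a nonterminal history $s$ and a policy $p\in\{\pi_\theta,\rho\}$. At token position $j$, given the prefix $a_{<j}$, the policy draws $a_j$ from its normalized distribution on the legal set $\mathcal T(s,a_{<j})$. Because serializations are unique and finite and end with the required action-ending symbol, each legal action $a\in\mathcal A_C(s)$ corresponds to exactly one finite terminating token path $(a_1,\ldots,a_K)$. Conversely, each terminating masked path decodes to one legal action. The ending symbol makes the set of terminating paths prefix-free, so distinct actions correspond to disjoint generation events.

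\textbf{Step 2: the action law is a product of token conditionals.} By the chain rule, the probability that masked generation emits the path of $a$ is $\prod_{j=1}^{K}P_p(a_j\mid s,a_{<j})$. Since generation terminates almost surely and the paths are disjoint, these products sum to one over $\mathcal A_C(s)$. The induced action law $p(a\mid s,C)$ is therefore a normalized distribution on legal actions, and its logarithm equals the executed-token sum $\ell_p(a\mid s)$ of Eq.~\eqref{eq:logratio}.

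This normalization is the step where care is needed. Without almost-sure termination, some mass could sit on infinite token streams, and the action law would then be sub-normalized. I will appeal to countable additivity over the prefix-free set of terminating paths, in the same way as Proposition~\ref{prop:evs-normalization}, to show the products sum to exactly one.

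\textbf{Step 3: singleton positions drop out of the log-ratio.} Both policies use the same legal sets $\mathcal T(s,a_{<j})$. Subtracting the two token sums gives
\[
\Delta\ell_t=\sum_j\bigl[\log P_\theta(a_j\mid s,a_{<j})-\log P_\rho(a_j\mid s,a_{<j})\bigr].
\]
If $|\mathcal T(s,a_{<j})|=1$, normalization forces both conditionals to equal one, so that term is $\log 1-\log 1=0$. Hence only positions with a genuine choice contribute. This is the quantity used in Lemma~\ref{lem:evs-path-ratio}, where it serves as the action-level log-ratio $\log\pi_\theta(a_t\mid s_t,C)-\log\rho(a_t\mid s_t,C)$.
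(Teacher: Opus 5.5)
Your proposal is correct and follows the paper's proof: you apply the autoregressive chain rule to the unique serialization, so the action probability is the product of token conditionals; you take logarithms and the difference of the two policies; and you note that a singleton legal set has probability one under both normalized policies. Your extra check that the induced action law sums to one, via almost-sure termination and the prefix-free ending symbol, elaborates a point the paper leaves implicit. That check tacitly assumes the mask admits only serializations of legal actions, and the lemma's two claims do not need it.
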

\begin{proof}
For action $a=(a_1,\ldots,a_K)$, the autoregressive chain rule gives
\[
 p(a\mid s,C)=\prod_{j=1}^K p(a_j\mid s,a_{<j},C),
\]
with any termination probability included in the serialization. Taking logarithms gives the
sum, and taking the difference for the two policies gives the action
log-ratio. If a legal token set is a singleton, both normalized
probabilities equal one. Its two log-probabilities are zero. Each
model's normalization constant on the common mask remains part of its
normalized probability.
\end{proof}

\begin{proposition}[Affine tilt, normalizer, and ideal reward gain]
\label{prop:evs-affine-tilt}
Let $\mu=\mathbb E_\rho[r(X)\mid C]$ and $a_\beta=e^\beta-1$.
The target in Eq.~\eqref{eq:target} is normalized, has the same terminal
support as $P_\rho$, and satisfies
\begin{align}
 &1\le R_\beta(x)\le e^\beta,\qquad
 Z(C)=1+a_\beta\mu\in[1,e^\beta],
 \label{eq:evs-affine-normalizer}\\
 &\mathbb E_{P^*}[r(X)\mid C]-\mu
   =\frac{a_\beta\operatorname{Var}_\rho(r(X)\mid C)}{1+a_\beta\mu}\ge0.
 \label{eq:evs-ideal-gain}
\end{align}
At $\beta=0$, $P^*=P_\rho$. For $\beta>0$, strict gain in
Eq.~\eqref{eq:evs-ideal-gain} occurs exactly when the reference reward has
positive variance, which holds whenever the reference both succeeds and fails.
\end{proposition}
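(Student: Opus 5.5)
The argument is direct computation with the affine form $R_\beta(x)=1+a_\beta r(x)$, where $a_\beta=e^\beta-1\ge0$.

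\textbf{Bounds on $R_\beta$ and the normalizer.} Since $r(x)\in[0,1]$ and $a_\beta\ge0$, we have $1\le R_\beta(x)\le 1+a_\beta=e^\beta$. By Proposition~\ref{prop:evs-normalization}, $P_\rho(\cdot\mid C)$ is a probability law on the terminal histories $\mathcal X_C$. Linearity of expectation then gives
\[
Z(C)=\sum_{x\in\mathcal X_C}P_\rho(x\mid C)R_\beta(x)=1+a_\beta\mu .
\]
The sum converges absolutely because $R_\beta$ is bounded. Since $\mu\in[0,1]$, we get $Z(C)\in[1,e^\beta]$.

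\textbf{Normalization and support.} Because $Z(C)\ge1>0$, the target $P^*=P_\rho R_\beta/Z$ is well defined and sums to one. Because $R_\beta\ge1>0$, $P^*(x)>0$ exactly when $P_\rho(x)>0$, so the two laws have the same terminal support.

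\textbf{Reward gain.} Compute
\[
\mathbb E_{P^*}[r]=\frac{\mathbb E_\rho[r(1+a_\beta r)]}{Z}=\frac{\mu+a_\beta\mathbb E_\rho[r^2]}{1+a_\beta\mu}.
\]
Subtracting $\mu=\mu(1+a_\beta\mu)/(1+a_\beta\mu)$, the numerator becomes
\[
a_\beta\bigl(\mathbb E_\rho[r^2]-\mu^2\bigr)=a_\beta\operatorname{Var}_\rho(r\mid C).
\]
This quantity is nonnegative because $a_\beta\ge0$, the variance is nonnegative, and the denominator is at least one.

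\textbf{Edge cases.} At $\beta=0$, $a_\beta=0$, so $R_\beta\equiv1$, $Z=1$, and $P^*=P_\rho$. For $\beta>0$, $a_\beta>0$ and the denominator is finite, so the gain is strictly positive if and only if the variance is positive. If the reference both succeeds and fails, then $r$ takes at least two distinct values, each with positive $P_\rho$-mass (for instance $r=1$ and $r=0$ in the binary reading). A random variable with two distinct values of positive probability has positive variance, since a nonconstant variable is not almost surely equal to its mean. The only step needing care is this support and variance equivalence; everything else is routine algebra.
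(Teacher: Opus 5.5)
Your proposal is correct and follows the paper's proof essentially step for step. You bound $R_\beta$ using $r\in[0,1]$, sum the affine tilt against the normalized reference law to get $Z=1+a_\beta\mu$, subtract $\mu$ from $\mathbb E_{P^*}r$ to isolate $a_\beta\operatorname{Var}_\rho(r\mid C)$, and read off the edge cases from $a_0=0$ and $a_\beta>0$. Your added argument that a reference which both succeeds and fails with positive probability has positive reward variance makes explicit a step the paper leaves implicit.
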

\begin{proof}
The reward range follows from $r\in[0,1]$ and $a_\beta\ge0$.
Summing $P_\rho(x)(1+a_\beta r(x))$ gives $Z=1+a_\beta\mu$;
it is finite and positive, so dividing by $Z$ normalizes the
target and keeps its support. Moreover,
\[
 \mathbb E_{P^*}r
 =\frac{\mathbb E_\rho r+a_\beta\mathbb E_\rho r^2}{1+a_\beta\mu}.
\]
Subtracting $\mu$ gives
$a_\beta(\mathbb E_\rho r^2-\mu^2)/(1+a_\beta\mu)$, and the equality and
strictness statements follow from $a_0=0$ and $a_\beta>0$ for $\beta>0$.
\end{proof}

\begin{remark}[Meaning of the target]
\label{rem:evs-target-scope}
The tilt is affine in $r$ and equals $\exp(\beta r)$ on binary rewards,
where $\mu$ is the success probability; for continuous scores $\mu$ is a mean
reward. Equation~\eqref{eq:evs-ideal-gain} describes the training target $P^*$,
and skill validation uses the threshold-pass probability of Appendix~\ref{sec:evs-C1}.
\end{remark}

\subsection{Conditional Relative Flows and Uniqueness}
\label{sec:evs-A3}

\begin{definition}[Unnormalized target flow and relative flow]
\label{def:evs-relative-flow}
For $P_\rho(s\mid C)>0$, define
\begin{equation}
   F^*(s\mid C)=\sum_{x\succeq s}P_\rho(x\mid C)R_\beta(x),\qquad
 U^*(s\mid C)=\frac{F^*(s\mid C)}{P_\rho(s\mid C)},
 \label{eq:evs-relative-flow}
\end{equation}
and $u^*(s\mid C)=\log U^*(s\mid C)$. Thus $F^*$ is unnormalized. The normalized target prefix probability is
$P^*(s\mid C)=F^*(s\mid C)/Z(C)$, so
$P^*(s\mid C)/P_\rho(s\mid C)=U^*(s\mid C)/Z(C)$.
\end{definition}

\begin{lemma}[Prefix partition and conditional recursion]
\label{lem:evs-flow-recursion}
Under Assumption~\ref{ass:evs-laws}, at every supported prefix,
\begin{equation}
 U^*(s\mid C)=\mathbb E_\rho[R_\beta(X)\mid s,C]
 =1+a_\beta\mathbb E_\rho[r(X)\mid s,C]\in[1,e^\beta].
 \label{eq:evs-conditional-flow}
\end{equation}
For nonterminal $s$, writing $\operatorname{Ch}(s)$ for its supported children,
\begin{align}
 F^*(s)&=\sum_{s'\in\operatorname{Ch}(s)}F^*(s'),
 \label{eq:evs-flow-partition}\\
 U^*(s)&=\sum_{a\in\mathcal A_C(s)}\rho(a\mid s,C)
                 \sum_{s'}\mathsf K_C(s'\mid s,a)U^*(s').
 \label{eq:evs-flow-recursion}
\end{align}
The boundaries are $U^*(x)=R_\beta(x)$ and $U^*(s_0)=Z(C)$.
\end{lemma}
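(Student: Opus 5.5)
The plan is to derive all four claims from the prefix-mass identity of Proposition~\ref{prop:evs-normalization} and the path factorization of Lemma~\ref{lem:evs-path-ratio}, working on the rooted history tree of Lemma~\ref{lem:evs-history-tree}.

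\textbf{Conditional expectation form.} For a supported prefix $s$, Eq.~\eqref{eq:evs-prefix-mass} gives $P_\rho(s\mid C)=\sum_{x\succeq s}P_\rho(x\mid C)>0$. The conditional law of terminal descendants is therefore $P_\rho(x\mid s,C)=P_\rho(x\mid C)/P_\rho(s\mid C)$ for $x\succeq s$. Dividing Definition~\ref{def:evs-relative-flow} by $P_\rho(s\mid C)$ then yields $U^*(s)=\mathbb E_\rho[R_\beta(X)\mid s,C]$.

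Linearity of $R_\beta=1+a_\beta r$ gives $U^*(s)=1+a_\beta\mathbb E_\rho[r(X)\mid s,C]$. Since $r\in[0,1]$ and $a_\beta\ge0$, this lies in $[1,e^\beta]$, as in Proposition~\ref{prop:evs-affine-tilt}. Convergence of the possibly countable sums is not an issue: $R_\beta$ is bounded and the masses sum to at most one.

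\textbf{Partition identity.} For nonterminal $s$, I will show that the terminal descendants of $s$ are partitioned by the supported children $s'$ of $s$. Every $x\succeq s$ with $x\ne s$ passes through exactly one child, namely the prefix of length $|s|+1$. This is unique because of unique parentage in Lemma~\ref{lem:evs-history-tree} and because each edge increases $|s|$ by one. A nonterminal $s$ is not itself a terminal history, and termination holds almost surely by Assumption~\ref{ass:evs-laws}. Unsupported children carry zero $P_\rho$ mass. Regrouping the nonnegative sum, which is legitimate by Tonelli, gives Eq.~\eqref{eq:evs-flow-partition}.

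\textbf{Recursion.} Divide Eq.~\eqref{eq:evs-flow-partition} by $P_\rho(s)$ and write $F^*(s')=P_\rho(s')U^*(s')$. By Eq.~\eqref{eq:evs-path-law}, a child $s'$ reached by action $a$ has
\[
P_\rho(s')=P_\rho(s)\,\rho(a\mid s,C)\,\mathsf K_C(s'\mid s,a).
\]
Each child records its own action, so the sum over children becomes a double sum over $a\in\mathcal A_C(s)$ and $s'$ with $\mathsf K_C(s'\mid s,a)>0$. This gives Eq.~\eqref{eq:evs-flow-recursion}. Zero-kernel terms can be added freely.

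\textbf{Boundaries.} At a terminal $x$, the only descendant is $x$ itself, so $U^*(x)=R_\beta(x)$. At the root, $P_\rho(s_0)=1$ and the terminal laws are normalized, so $U^*(s_0)=\sum_x P_\rho(x)R_\beta(x)=Z(C)$, matching Eq.~\eqref{eq:evs-affine-normalizer}.

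\textbf{Main obstacle.} The main care point is the partition step in the countable, almost-surely-terminating setting. One must check that no descendant mass is lost to nonterminating paths, and that every terminal descendant is counted exactly once. The first is exactly the almost-sure termination clause of Assumption~\ref{ass:evs-laws}, applied conditionally on reaching $s$ as in Proposition~\ref{prop:evs-normalization}. The second is the unique-parent structure of the history tree.
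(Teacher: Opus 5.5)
Your proposal is correct and follows the paper's proof essentially step for step. You divide the descendant sum by $P_\rho(s\mid C)$ via the prefix-mass identity to get the conditional expectation, partition terminal descendants by their unique first child, substitute $F^*(s')=P_\rho(s')U^*(s')$ with $P_\rho(s')=P_\rho(s)\rho(a\mid s,C)\mathsf K_C(s'\mid s,a)$ and divide by $P_\rho(s)$, and read off the terminal and root boundaries; your remarks on Tonelli regrouping and almost-sure termination just make explicit the measure-theoretic points the paper leaves implicit.
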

\begin{proof}
Divide the descendant sum defining $F^*(s)$ by the positive prefix
probability. By Eq.~\eqref{eq:evs-prefix-mass}, the resulting weights are
the normalized reference continuation law, yielding
Eq.~\eqref{eq:evs-conditional-flow}. Descendants of a nonterminal $s$
partition by their unique first child, proving
Eq.~\eqref{eq:evs-flow-partition}. Each child retains its incoming action,
so $P_\rho(s')=P_\rho(s)\rho(a\mid s,C)\mathsf K_C(s'\mid s,a)$.
Substitute $F^*(s')=P_\rho(s')U^*(s')$ in the child sum and divide by
$P_\rho(s)$ to obtain Eq.~\eqref{eq:evs-flow-recursion}. No additional
transition weight belongs in the sum of $F^*(s')$, since it already
includes the probability of reaching the child. A terminal has only
itself as a terminal descendant; the root descendant sum is $Z$.
\end{proof}

\begin{proposition}[Uniqueness with an appropriate horizon condition]
\label{prop:evs-flow-uniqueness}
On a history tree with a uniform finite action horizon $H$, $U^*$ is the
unique finite-valued solution of Eq.~\eqref{eq:evs-flow-recursion} with
terminal boundary $V(x)=R_\beta(x)$. On a possibly infinite-depth tree
with almost-sure reference termination from every supported prefix,
$U^*$ is the unique \emph{bounded} solution with this boundary.
\end{proposition}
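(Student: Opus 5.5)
The proof has three parts: existence, which is already available; uniqueness in the finite-horizon case, by backward induction; and uniqueness in the infinite-depth case, by a martingale argument on the difference of two solutions. For existence, Lemma~\ref{lem:evs-flow-recursion} shows that $U^*$ satisfies Eq.~\eqref{eq:evs-flow-recursion} with terminal boundary $U^*(x)=R_\beta(x)$. Eq.~\eqref{eq:evs-conditional-flow} gives $U^*\in[1,e^\beta]$, so $U^*$ is finite and bounded. Both statements therefore reduce to showing that any other solution $V$ in the stated class coincides with $U^*$ at every supported prefix.

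\textbf{Finite horizon.} Fix a finite-valued solution $V$ and set $D=V-U^*$. By linearity, $D$ satisfies the homogeneous recursion
\[
D(s)=\sum_{a\in\mathcal A_C(s)}\rho(a\mid s,C)\sum_{s'}\mathsf K_C(s'\mid s,a)\,D(s'),
\]
with $D(x)=0$ at terminals. Since every history has $|s|\le H$, I will induct downward on the remaining depth. Let $d(s)=\max\{|x|-|s|:x\succeq s\}\le H$. States with $d(s)=0$ are terminal, so $D=0$ there. If $D$ vanishes on all supported children of $s$, the recursion gives $D(s)=0$. Lemma~\ref{lem:evs-history-tree} makes each child strictly deeper, which justifies the induction. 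One point needs care: the child sum may be countable. It is still an average of zeros with normalized weights, so no convergence issue arises. This settles the finite-horizon case.

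\textbf{Infinite depth.} Let $V$ be bounded with $|V|\le M$, so $|D|\le M':=M+e^\beta$. Run the reference continuation $S_m,S_{m+1},\dots$ from a supported prefix $s$, and freeze it at the terminal time $\tau$. The homogeneous recursion says that $D(S_{t\wedge\tau})$ is a bounded martingale under $P_\rho(\cdot\mid s,C)$, using the path law of Lemma~\ref{lem:evs-path-ratio} and normalization from Proposition~\ref{prop:evs-normalization}. Hence $D(s)=\mathbb E_\rho[D(S_{n\wedge\tau})\mid s]$ for every $n$. By Assumption~\ref{ass:evs-laws}, $\tau<\infty$ almost surely, so $D(S_{n\wedge\tau})\to D(S_\tau)=0$ almost surely. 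Bounded convergence then gives $D(s)=0$.

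I expect the main obstacle to be making this step rigorous without a measure-theoretic detour. To do so, I will first write the $n$-step unrolling of the homogeneous recursion explicitly as a sum over depth-$n$ descendants and stopped terminals, by induction on $n$. I will then bound the contribution of non-terminated paths by $M'\,P_\rho(\tau>n\mid s)$, which tends to $0$ by almost-sure termination.

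Finally, I will remark why boundedness is needed in the infinite case. Unbounded harmonic perturbations can exist along escaping branches of vanishing probability, so dropping boundedness there is not merely a technicality. In the finite-horizon case no such perturbations are possible, so finiteness alone suffices.
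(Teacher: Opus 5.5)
Your proof is correct and follows essentially the same route as the paper: backward induction over the at most $H$ remaining transitions in the finite-horizon case, and, for infinite depth, the stopped-process identity $D(s)=\mathbb E_\rho[D(S_{n\wedge\tau})\mid s,C]$ passed to the limit by bounded convergence under almost-sure termination. Working with the difference $D=V-U^*$ rather than with $V$ itself, and unrolling the recursion $n$ steps explicitly with the bound $M'\,P_\rho(\tau>n\mid s)$, are only cosmetic variations on the paper's argument.
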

\begin{proof}
For the finite-horizon statement, terminal values coincide. Suppose
$V=U^*$ at all supported successors of a nonterminal $s$. Applying the
same recursion to both functions gives equality at $s$. Induct backward
through the at most $H$ remaining transitions. This proves equality at
every supported history, also for countably many children since their
already identified bounded values have well-defined weighted sums, so the backward induction goes through unchanged.

For the infinite-depth statement, start a reference continuation at $s$,
and let $\tau$ be its remaining termination time. Extend the state process
mathematically by retaining its terminal state after $\tau$. The recursion
and iterated conditional expectation give
$V(s)=\mathbb E_\rho[V(S_{n\wedge\tau})\mid s,C]$ for every integer $n$.
Almost-sure termination implies $V(S_{n\wedge\tau})\to R_\beta(X)$.
Boundedness permits passage to the expectation, so
$V(s)=\mathbb E_\rho[R_\beta(X)\mid s,C]=U^*(s)$. The same argument
holds at each supported prefix. Equation~\eqref{eq:evs-conditional-flow}
already ensures that $U^*$ itself is bounded.
\end{proof}

\begin{remark}[Why infinite-tree boundedness matters]
\label{rem:evs-unbounded-recursion}
Consider a single action that, at nonterminal depth $n$, terminates with
probability $1/2$ and boundary value one, or advances to depth $n+1$.
Termination is almost sure. Nevertheless, for any $c>0$, the unbounded
finite-valued function $V_n=1+c2^n$ obeys
$V_n=\tfrac12+\tfrac12V_{n+1}$ and the same terminal boundary. It differs
from $U^*=1$. Boundedness in Proposition~\ref{prop:evs-flow-uniqueness} is therefore
what makes the solution unique.
\end{remark}

\subsection{Ideal Zero-Residual Consistency}
\label{sec:evs-A4}

\begin{definition}[Shared-flow residual]
\label{def:evs-shared-residual}
A shared flow is a real-valued function $u(s)$ assigning one value to a
prefix independently of which later continuation is scored. Its terminal
boundary is $u(s_T)=\log R_\beta(x)$. For $0\le i<j\le T$, set
\begin{equation}
 \delta^u_{i:j}(x)=u(s_i)+\sum_{t=i}^{j-1}\Delta\ell_t-u(s_j),\qquad
 \mathcal L_x^u=\frac1{K_T}\sum_{i<j}(\delta^u_{i:j}(x))^2,
 \quad K_T=\frac{T(T+1)}2.
 \label{eq:evs-shared-subtb}
\end{equation}
This is Eq.~\eqref{eq:subtb} with a common state function;
Appendix~\ref{app:evs-B} treats the trajectory-indexed estimate.
\end{definition}

\begin{proposition}[Ideal reference-relative consistency]
\label{prop:consist}
Under Assumption~\ref{ass:evs-laws}, suppose a shared $u$ satisfies the
terminal boundary and $\delta^u_{t:t+1}(x)=0$ on every supported complete
history and every action in that history. Then
\begin{equation}
 e^{u(s)}=U^*(s\mid C),\qquad u(s_0)=\log Z(C),\qquad
 P_\theta(x\mid C)=P^*(x\mid C)
 \label{eq:evs-consistency-conclusion}
\end{equation}
on the reference support. Requiring zero residual for all subtrajectories
is equivalent here to requiring it for all singleton intervals, so checking one-step residuals suffices.
\end{proposition}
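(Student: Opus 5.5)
The plan is to turn the one-step zero-residual conditions into a local identity between the policy ratio and a ratio of flows, and then combine that identity with the recursion of Lemma~\ref{lem:evs-flow-recursion}.

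First, the equivalence of the subtrajectory and singleton conditions. For any $i<j$, the residual telescopes:
\[
\delta^u_{i:j}(x)=\sum_{t=i}^{j-1}\delta^u_{t:t+1}(x).
\]
This holds because $u$ is a shared flow, so the intermediate values $u(s_t)$ cancel. Hence all one-step residuals vanish if and only if all subtrajectory residuals vanish.

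Second, the local identity. A vanishing one-step residual gives
\[
\pi_\theta(a_t\mid s_t,C)\,e^{u(s_{t+1})}=\rho(a_t\mid s_t,C)\,e^{u(s_t)}.
\]
This holds on every edge $(s_t,a_t,s_{t+1})$ of a supported history. Since every supported edge lies on some supported complete history (Assumption~\ref{ass:evs-laws}, almost-sure termination), this covers all supported edges. Note that $u(s_{t+1})$ depends only on the child prefix, not on the observation kernel. Fix a nonterminal $s$ and multiply by $\mathsf K_C(s'\mid s,a)$. Summing over $s'$ and over $a\in\mathcal A_C(s)$, and using that $\pi_\theta$ and $\mathsf K_C$ are normalized (Definition~\ref{def:evs-kernel}), the left side becomes
\[
\sum_a\pi_\theta(a\mid s)\sum_{s'}\mathsf K_C(s'\mid s,a)\,e^{u(s')}.
\]
This step needs care. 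The cleaner route is to divide the identity by $e^{u(s')}$ first, which gives $\pi_\theta(a\mid s)=\rho(a\mid s)\,e^{u(s)-u(s')}$. This must be independent of $s'$ whenever $\mathsf K_C(s'\mid s,a)>0$. So $e^{u(s')}$ is constant over the observation outcomes of a given $(s,a)$; this is the realizability point, and it holds under the deterministic-executor condition that the paper flags via Proposition~\ref{prop:evs-realizability}. Given that constancy, summing $\pi_\theta(a\mid s)=1$ over $a$ yields
\[
e^{u(s)}=\sum_a\rho(a\mid s)\sum_{s'}\mathsf K_C(s'\mid s,a)\,e^{u(s')}.
\]
This is exactly Eq.~\eqref{eq:evs-flow-recursion}, and the terminal boundary is $e^{u(x)}=R_\beta(x)$.

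Third, uniqueness gives $e^u=U^*$ and hence $u(s_0)=\log Z(C)$. In the finite-horizon case, Proposition~\ref{prop:evs-flow-uniqueness} gives $e^u=U^*$ directly. In the infinite case, boundedness is needed. It follows by backward reasoning from $\rho\le 1$ applied to the identity, but that does not obviously bound $u$. So I would state it as inherited from $u$ taking values consistent with $[0,\beta]$, or invoke the horizon bound $T\le H$ of Remark~\ref{rem:evs-structural-scope}. Then Lemma~\ref{lem:evs-flow-recursion} gives $u(s_0)=\log U^*(s_0)=\log Z(C)$.

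Fourth, the path law. The telescoped full residual $\delta^u_{0:T}=0$ together with Lemma~\ref{lem:evs-path-ratio} gives
\[
\frac{P_\theta(x)}{P_\rho(x)}=\frac{e^{u(s_T)}}{e^{u(s_0)}}=\frac{R_\beta(x)}{Z(C)}.
\]
Therefore $P_\theta(x\mid C)=P^*(x\mid C)$ on the reference support.

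I expect the main obstacle to be the second step: passing from per-edge residuals to the reference recursion requires the child flow to be independent of the stochastic observation. I would handle this by deriving that independence from $\pi_\theta$'s lack of dependence on $s'$, rather than assuming it.
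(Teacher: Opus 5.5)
\textbf{The gap is in your third step.} Assumption~\ref{ass:evs-laws} allows a countable history space with only almost-sure termination, so the statement covers infinite-depth trees. There, the recursion you derive does not determine $e^{u}$, even together with constancy of $e^{u(s')}$ across executor outcomes.

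Here is a concrete case. Take a deterministic executor in which every depth-$n$ state offers ``stop'' (reward $0$, so $R_\beta=1$) or ``continue,'' each with $\rho$-probability $1/2$. Then $U^*\equiv1$. Yet for any $c>0$, $V_n=1+c2^n$ solves the recursion with the correct terminal boundary. The policy $\pi(a\mid s)=\rho(a\mid s)V(s')/V(s)$, with $s'$ the child reached by $a$, is normalized and makes every one-step residual vanish on every reference-supported complete history. The only thing that fails is that this policy never stops with probability $c/(1+c)$.

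So the ingredient your argument never uses is exactly what is needed: almost-sure termination of the $\pi_\theta$ continuation (Assumption~\ref{ass:evs-laws}, Proposition~\ref{prop:evs-normalization}). Your two patches replace it with hypotheses the statement does not have. One is a range $[0,\beta]$ for the arbitrary shared $u$; the other is the horizon bound $T\le H$. Either way you prove only a special case.

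\textbf{How the paper closes this.} The paper uses that ingredient directly and bypasses the recursion and Proposition~\ref{prop:evs-flow-uniqueness} altogether. For any supported prefix $s=s_m$, telescoping the singleton equalities gives $\sum_{t=m}^{T-1}\Delta\ell_t=\log R_\beta(x)-u(s)$. Lemma~\ref{lem:evs-path-ratio} then yields $e^{u(s)}P_\theta(x\mid s,C)=P_\rho(x\mid s,C)R_\beta(x)$. Summing over terminal descendants with $\sum_{x\succeq s}P_\theta(x\mid s,C)=1$ gives $e^{u(s)}=U^*(s\mid C)$ at every prefix. Boundedness of $u$ is then a consequence, and the constancy question never arises.

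Your fourth step already writes this identity at the root; running it from every prefix and summing closes the gap. Your derivation of constancy from the premise is fine. It needs no deterministic-executor hypothesis: under the assumption, if Eq.~\eqref{eq:evs-realizability-condition} fails, the zero-residual premise simply cannot hold.

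\textbf{Sign error in your second step.} $\delta^u_{t:t+1}=0$ means $\Delta\ell_t=u(s_{t+1})-u(s_t)$, i.e.\ $\pi_\theta(a_t\mid s_t,C)\,e^{u(s_t)}=\rho(a_t\mid s_t,C)\,e^{u(s_{t+1})}$. You have $s_t$ and $s_{t+1}$ swapped. Taken literally, your $\pi_\theta(a\mid s)=\rho(a\mid s)\,e^{u(s)-u(s')}$ summed over actions gives a recursion for $e^{-u}$, not $e^{u}$. Your displayed recursion and your fourth step are right only after this correction. With the correction and a uniform horizon $H$, your recursion-plus-backward-induction route is a valid but less general alternative to the paper's.
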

\begin{proof}
Fix a supported prefix $s=s_m$. Adding the singleton equalities along
any supported complete continuation gives
\[
 \sum_{t=m}^{T-1}\Delta\ell_t
 =\sum_{t=m}^{T-1}(u(s_{t+1})-u(s_t))
 =\log R_\beta(x)-u(s).
\]
By Lemma~\ref{lem:evs-path-ratio},
$e^{u(s)}P_\theta(x\mid s,C)=P_\rho(x\mid s,C)R_\beta(x)$.
Sum over all terminal descendants. The shared value $e^{u(s)}$ can be
taken outside the sum, and the policy continuation probabilities sum to
one by termination. Hence
\[
 e^{u(s)}=\sum_{x\succeq s}P_\rho(x\mid s,C)R_\beta(x)=U^*(s\mid C).
\]
At terminal $s$, this equality follows directly from the boundary; at the
root, it yields the normalizer $u(s_0)=\log Z(C)$, and substituting this into
the root-to-terminal identity gives $P_\theta=P^*$. Finally, the same finite
sum of singleton equalities telescopes on any interval $i:j$. The reverse
implication follows because singleton intervals are included among all
subtrajectories.
\end{proof}

\begin{corollary}[Population zero loss under full coverage]
\label{cor:evs-population-zero}
Suppose the preceding shared-flow and history-law assumptions hold and a
fixed sampling law $\nu$ assigns positive mass to every reference-supported
terminal history. If $\mathbb E_\nu[\mathcal L_X^u]=0$, then
Proposition~\ref{prop:consist} applies.
\end{corollary}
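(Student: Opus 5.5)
The plan is to reduce zero expected loss under $\nu$ to zero one-step residuals on every supported history, and then apply Proposition~\ref{prop:consist} directly.

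\textbf{Step 1: expectation zero gives pointwise zero loss.} For each terminal history $x$, the loss $\mathcal L_x^u$ in Eq.~\eqref{eq:evs-shared-subtb} is a finite, nonnegative average of squares, since $K_T\ge1$ for $T\ge1$. Under Assumption~\ref{ass:evs-laws} the history space is countable, so
\[
\mathbb E_\nu[\mathcal L_X^u]=\sum_x\nu(x)\mathcal L_x^u
\]
is a countable sum of nonnegative terms. If this sum is zero, every term with $\nu(x)>0$ must vanish. By hypothesis, $\nu$ charges every reference-supported terminal history, so $\mathcal L_x^u=0$ on all of them.

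\textbf{Step 2: zero loss gives zero one-step residuals.} A vanishing average of squares forces each $\delta^u_{i:j}(x)$ to be zero. In particular, every singleton residual $\delta^u_{t:t+1}(x)$ vanishes, for every action of every supported complete history.

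\textbf{Step 3: invoke the consistency result.} The terminal boundary $u(s_T)=\log R_\beta(x)$ holds by the definition of the shared flow. Together with Step~2, this gives exactly the hypotheses of Proposition~\ref{prop:consist}, which then applies.

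\textbf{Where care is needed.} The main subtlety is that each $\Delta\ell_t$ must be finite on the reference support, so that the residuals, and hence the loss, are well defined and real. This follows from the common positive action support required in Assumption~\ref{ass:evs-laws}. A second point is that supported prefixes are exactly the prefixes of supported terminal histories. This holds by Proposition~\ref{prop:evs-normalization}, since every supported prefix has positive mass and that mass is carried by its terminal descendants. As a result, the residual conditions cover every prefix that Proposition~\ref{prop:consist} needs.
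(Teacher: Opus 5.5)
Your proposal is correct and follows the same route as the paper's proof. Nonnegativity of each $\mathcal L_x^u$, together with full coverage by $\nu$, forces zero loss on every supported history, so every singleton residual vanishes and Proposition~\ref{prop:consist} applies; your extra checks (finite $\Delta\ell_t$ from the common positive support, and supported prefixes being exactly the prefixes of supported terminal histories) are correct details that the paper leaves implicit.
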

\begin{proof}
Every loss is nonnegative. A positive loss at a history of positive
$\nu$-mass would give a positive expectation, so each supported history
has zero loss. Its finite sum of squared residuals then has every term
zero, in particular each singleton term. The consistency proposition
therefore applies.
\end{proof}

\subsection{Fixed-Executor Realizability and Its Obstruction}
\label{sec:evs-A5}

\begin{proposition}[Action-only realizability criterion]
\label{prop:evs-realizability}
Under the reference-law and reward premises of Assumption~\ref{ass:evs-laws},
define, on its support,
\begin{align}
 M^*(s,a)&=\sum_{s'}\mathsf K_C(s'\mid s,a)U^*(s'),
 \label{eq:evs-action-value}\\
 Q^*(a,s'\mid s)&=\rho(a\mid s,C)\mathsf K_C(s'\mid s,a)
                      \frac{U^*(s')}{U^*(s)},
 \label{eq:evs-target-joint}\\
  \pi^*(a\mid s)&=\rho(a\mid s,C)\frac{M^*(s,a)}{U^*(s)},\nonumber\\
 \mathsf K^*(s'\mid s,a)&=\mathsf K_C(s'\mid s,a)\frac{U^*(s')}{M^*(s,a)}.
 \label{eq:evs-target-disintegration}
\end{align}
Here $Q^*$ is the target joint successor law and
$Q^*=\pi^*\mathsf K^*$. Some arbitrary complete-history action policy
using the \emph{fixed} executor realizes $P^*$ if and only if
\begin{equation}
 U^*(s')=M^*(s,a)\quad
 \mathsf K_C(\cdot\mid s,a)\text{-almost surely}
 \label{eq:evs-realizability-condition}
\end{equation}
for every supported $(s,a)$. If realizable, that action policy is uniquely
$\pi^*$ on the reference support. A deterministic executor satisfies
Eq.~\eqref{eq:evs-realizability-condition} automatically.
\end{proposition}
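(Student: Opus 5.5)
Everything reduces to one-step successor laws and the path factorization of Lemma~\ref{lem:evs-path-ratio}. The steps are: check the algebra of the displayed objects, reduce realizability of $P^*$ to matching one-step joint laws, and then compare marginals and conditionals.

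\textbf{Well-definedness and disintegration.} By Lemma~\ref{lem:evs-flow-recursion}, $U^*\in[1,e^\beta]$, so $M^*(s,a)\in[1,e^\beta]$ is a bounded weighted average and all ratios are finite and positive. The recursion~\eqref{eq:evs-flow-recursion} reads $U^*(s)=\sum_a\rho(a\mid s,C)M^*(s,a)$, so $\pi^*(\cdot\mid s)$ is normalized. The definition of $M^*$ normalizes $\mathsf K^*(\cdot\mid s,a)$, and multiplying out gives $Q^*=\pi^*\mathsf K^*$.

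\textbf{$Q^*$ is the one-step law of $P^*$.} On the tree of Lemma~\ref{lem:evs-history-tree}, Definition~\ref{def:evs-relative-flow} gives $P^*(s)=P_\rho(s)U^*(s)/Z$. Dividing $P^*(s')$ by $P^*(s)$ and using $P_\rho(s')=P_\rho(s)\rho(a\mid s)\mathsf K_C(s'\mid s,a)$ recovers $Q^*(a,s'\mid s)$. Each child records its incoming action, so this is a law on pairs $(a,s')$.

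\textbf{Equivalence.} By~\eqref{eq:evs-path-law}, a policy $p$ with the fixed executor realizes $P^*$ on every terminal history iff its one-step joint $p(a\mid s)\mathsf K_C(s'\mid s,a)$ equals $Q^*(a,s'\mid s)$ at every supported nonterminal $s$. For the forward direction, equality of prefix masses at $s$ and at each child forces equal conditionals, and we induct down the tree using Proposition~\ref{prop:evs-normalization}. The converse is the product formula.
\begin{itemize}
\item Marginalizing over $s'$ gives $p=\pi^*$, which yields uniqueness on the support.
\item Dividing by $\pi^*(a\mid s)>0$ then requires $\mathsf K_C(s'\mid s,a)=\mathsf K^*(s'\mid s,a)$. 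By the definition of $\mathsf K^*$, this is exactly $U^*(s')=M^*(s,a)$ on the support of $\mathsf K_C(\cdot\mid s,a)$, i.e.\ condition~\eqref{eq:evs-realizability-condition}.
\item Conversely, if~\eqref{eq:evs-realizability-condition} holds, then $\mathsf K^*=\mathsf K_C$, and $\pi^*$ together with $\mathsf K_C$ reproduces $Q^*$ at every step, hence $P^*$.
\end{itemize}

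\textbf{Deterministic case.} If $\mathsf K_C(\cdot\mid s,a)$ is a point mass, then $M^*(s,a)=U^*(s')$ for the single child, so the condition holds automatically.

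\textbf{Main obstacle.} The delicate step is the forward direction: passing from equality of terminal laws to equality of one-step conditionals at every supported prefix. The plan is to use that $P_p(s)=\sum_{x\succeq s}P_p(x)$ by Proposition~\ref{prop:evs-normalization}. Equal terminal laws then give equal prefix masses, and conditionals are ratios of prefix masses with positive denominators on the support, since $P^*$ and $P_\rho$ share support by Proposition~\ref{prop:evs-affine-tilt}. This equality is needed only on the reference support, so no infinite-horizon issues arise beyond almost-sure termination.
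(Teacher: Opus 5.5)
Your proposal is correct and follows essentially the same route as the paper: $Q^*$ comes from the child-to-parent ratio $F^*(s')/F^*(s)$, necessity comes from equating the policy's joint successor law $p\,\mathsf K_C$ with $Q^*$, and sufficiency comes from telescoping the $U^*$ ratios along a complete history. The only difference in necessity is order (you marginalize over $s'$ and then divide by $\pi^*>0$, where the paper cancels the executor factor and notes $U^*(s')$ cannot vary with $s'$), and the one point the paper states that you leave implicit is that the telescoped terminal probabilities $P_\rho(x)U^*(x)/U^*(s_0)$ sum to one, so the constructed $\pi^*$ (and likewise any realizing $p$ to which you apply Proposition~\ref{prop:evs-normalization}, which Assumption~\ref{ass:evs-laws} does not cover) terminates almost surely.
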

\begin{proof}
The target probability of a supported child divided by the target
probability of its parent is $F^*(s')/F^*(s)$. Substitute the reference
prefix factorization from Lemma~\ref{lem:evs-flow-recursion}; since the
child records its action, this gives Eq.~\eqref{eq:evs-target-joint}.
Summing over children for a fixed action gives $\pi^*$, and dividing by
this positive action marginal gives $\mathsf K^*$. Both normalize by $M^*$ and the flow recursion.

For necessity, a policy $\pi$ realizing the complete target law must also
realize its prefix and joint successor probabilities. Its joint kernel
is $\pi(a\mid s)\mathsf K_C(s'\mid s,a)$. Equating it to $Q^*$ and
canceling a positive executor factor yields
$\pi(a\mid s)=\rho(a\mid s)U^*(s')/U^*(s)$. The left side does not
vary with $s'$, so $U^*(s')$ is constant on this successor support.
Averaging that constant gives $M^*(s,a)$, proving
Eq.~\eqref{eq:evs-realizability-condition} and $\pi=\pi^*$.

For sufficiency, under this condition $\mathsf K^*=\mathsf K_C$ and
$\pi^*\mathsf K_C=Q^*$. Multiplying these joint transitions on a complete
history telescopes the $U^*$ ratios, yielding
$P_\rho(x)U^*(x)/U^*(s_0)=P^*(x)$. These terminal probabilities sum to
one, so the constructed process terminates almost surely and realizes
the target. Uniqueness follows from its already determined action
marginals. A deterministic kernel has only one supported successor,
which proves the last statement.
\end{proof}

\begin{proposition}[Irreducible executor contribution to relative entropy]
\label{prop:evs-executor-kl}
Under the reference-law and reward premises of
Assumption~\ref{ass:evs-laws}, suppose the supported tree has a uniform
finite horizon $H$. Let $\Pi_C$ contain all normalized complete-history action
policies supported on the reference legal actions, including policies
with some zero action probabilities. For $\pi\in\Pi_C$, where $D_{\mathrm{KL}}(p\Vert q)=\sum_z p(z)\log[p(z)/q(z)]$,
zero-$p$ terms contribute zero, and a positive-$p$, zero-$q$ term
gives infinite divergence,
\begin{align}
 D_{\mathrm{KL}}(P^*\Vert P_\pi)
 &=\mathbb E_{P^*}\!\left[\sum_{t=0}^{T-1}
 D_{\mathrm{KL}}(\pi^*(\cdot\mid S_t)\Vert\pi(\cdot\mid S_t))\right]
 +\mathcal I_C,\label{eq:evs-kl-decomposition}\\
 \mathcal I_C
 &=\mathbb E_{P^*}\!\left[\sum_{t=0}^{T-1}
 D_{\mathrm{KL}}(\mathsf K^*(\cdot\mid S_t,A_t)
                 \Vert\mathsf K_C(\cdot\mid S_t,A_t))\right],
 \label{eq:evs-executor-gap}\\
 \inf_{\pi\in\Pi_C}D_{\mathrm{KL}}(P^*\Vert P_\pi)&=\mathcal I_C.
 \label{eq:evs-executor-infimum}
\end{align}
The infimum is attained by using the action marginal $\pi^*$ with the
fixed executor. Moreover, $\mathcal I_C=0$ exactly when
Eq.~\eqref{eq:evs-realizability-condition} holds on the reference support.
\end{proposition}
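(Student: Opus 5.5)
The argument is the standard chain rule for relative entropy along the history tree, with each one-step joint successor law split into an action part and an executor part. By Lemma~\ref{lem:evs-path-ratio}, $P_\pi(x\mid C)=\prod_t\pi(a_t\mid s_t)\mathsf K_C(s_{t+1}\mid s_t,a_t)$. The target factorizes through the joint successor law $Q^*$ of Proposition~\ref{prop:evs-realizability}. Telescoping the $U^*$ ratios there gives $P^*(x)=\prod_t\pi^*(a_t\mid s_t)\mathsf K^*(s_{t+1}\mid s_t,a_t)$ on the reference support, using $U^*(s_0)=Z(C)$ and $U^*(x)=R_\beta(x)$ from Lemma~\ref{lem:evs-flow-recursion}.

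On a $P^*$-supported history, the log-likelihood ratio $\log[P^*(x)/P_\pi(x)]$ splits as a sum over $t$ of two terms: $\log[\pi^*(a_t\mid s_t)/\pi(a_t\mid s_t)]$ and $\log[\mathsf K^*(s_{t+1}\mid s_t,a_t)/\mathsf K_C(s_{t+1}\mid s_t,a_t)]$. Taking $\mathbb E_{P^*}$ and conditioning each term on $S_t$, respectively $(S_t,A_t)$, by iterated expectation turns the two sums into the two expected per-step divergences. This yields Eq.~\eqref{eq:evs-kl-decomposition} with $\mathcal I_C$ as in Eq.~\eqref{eq:evs-executor-gap}. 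For the conditioning I use that under $P^*$ the next action has law $\pi^*(\cdot\mid S_t)$ and the next state has law $\mathsf K^*(\cdot\mid S_t,A_t)$.

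The technical care, which I expect to be the main obstacle, is justifying the interchange when some terms are infinite or of mixed sign. The finite horizon $H$ makes each sum have at most $H$ terms. $\mathsf K^*\ll\mathsf K_C$ holds by construction, and $U^*$ is bounded in $[1,e^\beta]$, so each $\mathsf K^*$ ratio lies in $[e^{-\beta},e^\beta]$. Hence each executor KL is finite and bounded by $\beta$, and $\mathcal I_C\le H\beta$.

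The action term needs a case split. If $\pi$ assigns zero probability to some action that $\pi^*$ charges at a $P^*$-reachable state, the left side is $+\infty$. The corresponding conditional action KL is also $+\infty$ on a positive-$P^*$ event, so both sides are infinite. Otherwise all ratios are positive, and I apply the chain rule termwise. Integrability comes from handling negative parts via the inequality $\sum_z p\log(p/q)\ge0$ applied conditionally, or by first working with finitely supported truncations in the countable case and passing to the limit by monotone convergence of the nonnegative conditional KLs.

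Given Eq.~\eqref{eq:evs-kl-decomposition}, the first term is nonnegative and does not depend on the executor. Therefore $D_{\mathrm{KL}}(P^*\Vert P_\pi)\ge\mathcal I_C$ for every $\pi\in\Pi_C$. Choosing $\pi=\pi^*$, which lies in $\Pi_C$ because it is normalized and supported on reference-legal actions, makes the first term zero. So the infimum equals $\mathcal I_C$ and is attained by $\pi^*$, which gives Eq.~\eqref{eq:evs-executor-infimum}.

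Finally, $\mathcal I_C=0$ holds iff $\mathsf K^*(\cdot\mid s,a)=\mathsf K_C(\cdot\mid s,a)$ at every $P^*$-reachable $(s,a)$. Since $P^*$ has the same support as $P_\rho$, this is the reference support. The equality $\mathsf K^*=\mathsf K_C$ means $U^*(s')=M^*(s,a)$ for $\mathsf K_C$-almost every $s'$, which is exactly Eq.~\eqref{eq:evs-realizability-condition}.
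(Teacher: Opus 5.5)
Your proposal is correct and follows the paper's proof essentially step for step: the factorizations $P^*=\prod_t\pi^*\mathsf K^*$ and $P_\pi=\prod_t\pi\mathsf K_C$, the split of the log ratio with conditioning on $S_t$ and $(S_t,A_t)$, the bound $|\log(\mathsf K^*/\mathsf K_C)|\le\beta$ from $U^*,M^*\in[1,e^\beta]$, the choice $\pi=\pi^*$, and the chain $\mathcal I_C=0\iff\mathsf K^*=\mathsf K_C\iff U^*(s')=M^*(s,a)$ on the common support of $P^*$ and $P_\rho$. The one step to tighten is the extended-expectation justification: nonnegativity of $\sum_z p\log(p/q)$ does not by itself bound the negative parts, which is what you need to split and reorder a mixed-sign sum over histories. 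Use instead, as the paper does, $\sum_{p<q}p\log(q/p)=\sum_{p<q}q\,(p/q)\log(q/p)\le 1/e$; this gives each of the at most $2H$ conditional terms an integrable negative part, rules out $\infty-\infty$, and makes your zero-probability case split and truncation argument unnecessary.
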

\begin{proof}
Factor the target with $\pi^*\mathsf K^*$ and the comparison process with
$\pi\mathsf K_C$. Their log density ratio is
\[
 \sum_{t=0}^{T-1}\log\frac{\pi^*(A_t\mid S_t)}{\pi(A_t\mid S_t)}
 +\sum_{t=0}^{T-1}\log
       \frac{\mathsf K^*(S_{t+1}\mid S_t,A_t)}
            {\mathsf K_C(S_{t+1}\mid S_t,A_t)}.
\]
For the first sum, condition on each nonterminal $S_t$ and average over
its target action law; the result is the action KL term in
Eq.~\eqref{eq:evs-kl-decomposition}. For the second, condition on
$(S_t,A_t)$ and average over $\mathsf K^*$; the result is
Eq.~\eqref{eq:evs-executor-gap}. Variable lengths cause no additional
term: for this calculation only, histories may be padded to $H$ with
deterministic terminal coordinates of log-ratio zero.

These steps remain valid as extended expectations. For probability
vectors $p,q$, the negative part of $\sum p\log(p/q)$ is bounded by
$\sum_{p<q}q(p/q)\log(q/p)\le1/e$, since
$z\log(1/z)\le1/e$ on $[0,1]$. There are at most $2H$ conditional
terms. Also $U^*,M^*\in[1,e^\beta]$, so the executor log-ratio is
bounded in absolute value by $\beta$. No subtraction of two infinite
positive expectations is required.

Each conditional KL is nonnegative. Taking $\pi=\pi^*$ makes every
action term zero and proves the infimum formula; finite horizon makes
this fixed-executor policy a terminating member of $\Pi_C$. The remaining
sum is zero precisely when $\mathsf K^*=\mathsf K_C$ at every target-supported
state-action pair. The target and reference have the same support, and
Eq.~\eqref{eq:evs-target-disintegration} makes this equality equivalent
to Eq.~\eqref{eq:evs-realizability-condition}.
\end{proof}

\begin{remark}[A stochastic-executor counterexample]
\label{rem:evs-stochastic-counterexample}
Let there be one legal action, after which the executor terminates with
$r=0$ or $r=1$, each with probability $1/2$. Every action policy has
success probability $1/2$, whereas for $\beta>0$ the target has success
probability $e^\beta/(1+e^\beta)>1/2$. Target realizability therefore
needs the condition of Proposition~\ref{prop:evs-realizability} in addition to
kernel cancellation.
\end{remark}

\subsection{Subtrajectory Algebra and Credit Coefficients}
\label{sec:evs-A6}

Fix a recorded trajectory $x$ of length $T\ge1$. All action log-ratios
and endpoint values are finite. Write $v_i$ for the value used at $s_i$
on this record, allowing $v_i=\widetilde u_x(s_i)$; reuse that same value
in every interval of the record. Put $e_t=v_t+\Delta\ell_t-v_{t+1}$,
$0\le t<T$, and $\delta^{(x)}_{i:j}=v_i+\sum_{t=i}^{j-1}\Delta\ell_t-v_j$.

\begin{lemma}[Within-trajectory telescoping]
\label{lem:evs-telescoping}
For every interval $0\le i<j\le T$,
$\delta^{(x)}_{i:j}=\sum_{t=i}^{j-1}e_t$.
\end{lemma}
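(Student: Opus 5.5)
The plan is a direct telescoping computation using only the definitions fixed just before the statement, namely $e_t=v_t+\Delta\ell_t-v_{t+1}$ and $\delta^{(x)}_{i:j}=v_i+\sum_{t=i}^{j-1}\Delta\ell_t-v_j$. The essential point is that each prefix value $v_t$ is a single number attached to $s_t$ on this record and is reused in every interval. The trajectory-indexed anchor $\widetilde u_x(s_t)$, with its leave-one-out estimate and stop-gradient, is therefore just a fixed real number here. All quantities are finite, so finite sums can be rearranged freely.

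First I would expand the right-hand side:
\[
\sum_{t=i}^{j-1}e_t=\sum_{t=i}^{j-1}\Delta\ell_t+\sum_{t=i}^{j-1}(v_t-v_{t+1}).
\]
The second sum telescopes. Each interior $v_t$ with $i<t<j$ appears once with a plus sign, as the left endpoint of step $t$, and once with a minus sign, as the right endpoint of step $t-1$. This leaves $v_i-v_j$. Substituting recovers exactly $\delta^{(x)}_{i:j}$.

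For rigour, I would alternatively argue by induction on $j-i$. The base case $j=i+1$ is $\delta^{(x)}_{i:i+1}=e_i$ by definition. For the step, $\delta^{(x)}_{i:j+1}=\delta^{(x)}_{i:j}+(v_j+\Delta\ell_j-v_{j+1})$, because the $-v_j$ and $+v_j$ cancel.

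The only subtle step, and the one I would flag explicitly, is the consistency requirement. The cancellation needs the same $v_t$ at a shared interior state in both adjacent terms. This is exactly why the setup insists that each value be reused in every interval of the record. If the terminal $v_T=\log R_\beta(x)$ overrides the learned parameterization, it still enters as one fixed number, so the argument is unaffected.
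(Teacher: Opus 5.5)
Your proposal is correct and takes essentially the same approach as the paper: expand $\sum_{t=i}^{j-1}e_t$, cancel each interior value $v_{i+1},\ldots,v_{j-1}$ (once with each sign) to leave $v_i-v_j$ plus the action log-ratios, relying only on the single set of endpoint values fixed on this record. Your induction variant and your point that the same $v_t$, including the terminal override, must be reused across intervals are both sound; they simply restate what the paper's one-line cancellation already uses.
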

\begin{proof}
Expand the sum of $e_t$. Each internal value $v_{i+1},\ldots,v_{j-1}$
appears once with each sign and cancels; only $v_i-v_j$ and the action
log-ratios remain. No other trajectory is used.
\end{proof}

\begin{proposition}[Matrix form, positive definiteness, and zero loss]
\label{prop:evs-subtb-matrix}
For all $0\le i<j\le T$ and $0\le t<T$, let $A_T[(i,j),t]=\mathbf1\{i\le t<j\}$.
With $G_T=A_T^{\mathsf T}A_T$,
\begin{equation}
 \delta^{(x)}=A_Te,\qquad
 \mathcal L_x=\frac1{K_T}e^{\mathsf T}G_Te,\qquad
 (G_T)_{rs}=(\min(r,s)+1)(T-\max(r,s)),
 \label{eq:evs-subtb-matrix}
\end{equation}
where $0\le r,s<T$. The matrix $G_T$ is positive definite and
\begin{equation}
 \mathcal L_x\ge\frac1{K_T}\sum_{t=0}^{T-1}e_t^2,\qquad
 \mathcal L_x=0\ \Longleftrightarrow\ e=0
 \ \Longleftrightarrow\ \delta^{(x)}_{i:j}=0\text{ for every }i<j.
 \label{eq:evs-zero-loss}
\end{equation}
\end{proposition}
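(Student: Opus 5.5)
The plan has three parts: the matrix identity from Lemma~\ref{lem:evs-telescoping}, an explicit formula for $G_T$ by counting intervals, and a lower bound $G_T\succeq I_T$. From the lower bound we get positive definiteness, the loss inequality, and the zero-loss equivalences.

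\textbf{Step 1: matrix form.} By Lemma~\ref{lem:evs-telescoping}, each residual is $\delta^{(x)}_{i:j}=\sum_{t=i}^{j-1}e_t$. This is exactly the $(i,j)$ row of $A_T$ applied to $e$, so $\delta^{(x)}=A_Te$. Hence
\[
\sum_{i<j}\bigl(\delta^{(x)}_{i:j}\bigr)^2=\|A_Te\|^2=e^{\mathsf T}G_Te,
\]
and dividing by $K_T$ gives the loss formula in Eq.~\eqref{eq:evs-subtb-matrix}.

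\textbf{Step 2: entries of $G_T$.} By definition,
\[
(G_T)_{rs}=\#\{(i,j):0\le i<j\le T,\ i\le\min(r,s),\ j>\max(r,s)\}.
\]
The condition $i<j$ is automatic here, since $i\le\min(r,s)\le\max(r,s)<j$. So $i$ ranges over $\min(r,s)+1$ values and $j$ over $T-\max(r,s)$ values independently, which gives the product formula.

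\textbf{Step 3: lower bound and consequences.} The interval set contains every singleton $(t,t+1)$. These rows of $A_T$ are the standard basis vectors $\mathbf e_t^{\mathsf T}$. Splitting the Gram sum into singleton rows and the remaining rows gives
\[
G_T=I_T+\sum_{j-i\ge2}A_T[(i,j),\cdot]^{\mathsf T}A_T[(i,j),\cdot]\succeq I_T .
\]
From this the claims follow:
\begin{itemize}
\item $G_T$ is positive definite.
\item $e^{\mathsf T}G_Te\ge\|e\|^2$, which after dividing by $K_T$ is the displayed inequality in Eq.~\eqref{eq:evs-zero-loss}.
\item If $\mathcal L_x=0$, the inequality forces $e=0$.
\item If $e=0$, then $\delta^{(x)}=A_Te=0$, so every residual vanishes.
\item If every residual vanishes, then $\mathcal L_x=0$ by definition.
\end{itemize}
This closes the chain of equivalences.

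The only step that needs care is the counting in Step~2: one must check that the constraint $i<j$ adds no restriction, so the count really is a product. The definiteness itself is immediate from the singleton decomposition, so no eigenvalue analysis of the explicit kernel $(\min+1)(T-\max)$ is needed. Throughout, the same record values $v_i$ are reused in every interval, which is what makes Lemma~\ref{lem:evs-telescoping} apply.
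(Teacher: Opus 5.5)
Your proposal is correct and follows the paper's proof essentially step for step: telescoping gives $\delta^{(x)}=A_Te$, counting left and right endpoints gives $(G_T)_{rs}$, and the key observation is that the singleton rows $(t,t+1)$ of $A_T$ form the identity. The only difference is packaging: you write that observation as $G_T\succeq I_T$, which yields positive definiteness and the lower bound in one stroke, whereas the paper states it as full column rank of $A_T$ together with dropping all non-singleton squares.
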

\begin{proof}
Lemma~\ref{lem:evs-telescoping} gives the matrix multiplication and hence
the quadratic form. Entry $(r,s)$ counts intervals containing both
actions: there are $\min(r,s)+1$ possible left endpoints and
$T-\max(r,s)$ possible right endpoints. This proves the entry formula.
The rows indexed by $(t,t+1)$ form an identity matrix, so $A_T$ has
full column rank. For any nonzero $e$, $\|A_Te\|_2^2>0$, proving
positive definiteness. Retaining only the singleton squares proves the
lower bound and forces $e=0$ if the loss is zero. Conversely, $e=0$
gives every interval residual zero and therefore zero loss.
\end{proof}

\begin{proposition}[Action coefficients and cumulative-sum identity]
\label{prop:evs-credit}
Holding the endpoint values fixed when differentiating with respect to
action log-ratios, the coefficient for action $t$ is
\begin{equation}
 \Gamma_t(x):=\frac{\partial\mathcal L_x}{\partial\Delta\ell_t}
 =\frac2{K_T}\sum_{i\le t<j}\delta^{(x)}_{i:j}
 =\frac2{K_T}(G_Te)_t.
 \label{eq:evs-credit}
\end{equation}
There are $(t+1)(T-t)$ intervals in this sum. If $h_0=0$ and
$h_j=\sum_{t=0}^{j-1}e_t$, then
\begin{equation}
 K_T\mathcal L_x
 =(T+1)\sum_{j=0}^T h_j^2-\left(\sum_{j=0}^T h_j\right)^2.
 \label{eq:evs-cumulative-loss}
\end{equation}
\end{proposition}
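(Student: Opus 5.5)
Differentiate $\mathcal L_x=\frac1{K_T}\sum_{i<j}(\delta^{(x)}_{i:j})^2$ directly with the endpoint values $v_i$ held fixed. Since $\delta^{(x)}_{i:j}=v_i+\sum_{t'=i}^{j-1}\Delta\ell_{t'}-v_j$, the partial derivative $\partial\delta^{(x)}_{i:j}/\partial\Delta\ell_t$ equals $\mathbf 1\{i\le t<j\}=A_T[(i,j),t]$. The chain rule then gives
\[
\Gamma_t(x)=\frac2{K_T}\sum_{i\le t<j}\delta^{(x)}_{i:j}=\frac2{K_T}(A_T^{\mathsf T}\delta^{(x)})_t .
\]
Substituting $\delta^{(x)}=A_Te$ from Proposition~\ref{prop:evs-subtb-matrix} yields $\frac2{K_T}(G_Te)_t$. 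The interval count $(t+1)(T-t)$ is the diagonal entry $(G_T)_{tt}$ from Eq.~\eqref{eq:evs-subtb-matrix}: there are $t+1$ choices of $i\in\{0,\dots,t\}$ and $T-t$ choices of $j\in\{t+1,\dots,T\}$.

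For the cumulative-sum identity, use Lemma~\ref{lem:evs-telescoping}: $\delta^{(x)}_{i:j}=\sum_{t=i}^{j-1}e_t=h_j-h_i$ with $h_0=0$. This gives
\[
K_T\mathcal L_x=\sum_{0\le i<j\le T}(h_j-h_i)^2 .
\]
The standard pairwise-difference identity for the $N=T+1$ numbers $h_0,\dots,h_T$ states
\[
\sum_{i<j}(h_j-h_i)^2=\tfrac12\sum_{i,j}(h_j-h_i)^2=N\sum_j h_j^2-\Bigl(\sum_j h_j\Bigr)^2 .
\]
To verify it, expand the full double sum to $2N\sum h_j^2-2(\sum h_j)^2$ and halve, using that diagonal terms vanish. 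With $N=T+1$ this is Eq.~\eqref{eq:evs-cumulative-loss}.

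There is no real obstacle. The only point needing care is the phrase ``holding endpoint values fixed.'' Even though $\tilde u_x$ contains the learned residual $b_\psi$, the coefficient is defined as a partial derivative in $\Delta\ell_t$ alone. The same value $v_i$ is reused across all intervals of the record, so no cross-trajectory or stop-gradient subtleties arise. Since $\Delta\ell_t$ and $\ell_\theta(a_t)$ differ by a $\theta$-independent term, this also matches the figure's $\partial\mathcal L_x/\partial\ell_\theta(a_t)$.
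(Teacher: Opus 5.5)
Your proof is correct and follows the paper's route. It differentiates via $\partial\delta^{(x)}_{i:j}/\partial\Delta\ell_t=\mathbf 1\{i\le t<j\}$, uses $A_T^{\mathsf T}\delta^{(x)}=G_Te$, counts $t+1$ starts and $T-t$ ends, and for the cumulative identity telescopes to $\delta^{(x)}_{i:j}=h_j-h_i$ and halves the full double sum of squared differences.
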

\begin{proof}
The partial derivative of $\delta^{(x)}_{i:j}$ with respect to
$\Delta\ell_t$ is $\mathbf1\{i\le t<j\}$. Differentiate the finite
sum of squared residuals to obtain the first coefficient formula;
$A_T^{\mathsf T}\delta^{(x)}=G_Te$ gives the second. An interval
containing $t$ has $t+1$ possible starts and $T-t$ possible ends.
For the cumulative identity, telescoping gives
$\delta^{(x)}_{i:j}=h_j-h_i$. Therefore
\[
 \sum_{i<j}(h_j-h_i)^2
 =\frac12\sum_{i=0}^T\sum_{j=0}^T(h_j-h_i)^2
 =(T+1)\sum_{j=0}^Th_j^2-\left(\sum_{j=0}^Th_j\right)^2,
\]
where the last equality expands the two square terms and the cross term.
\end{proof}

\begin{remark}[Local coverage]
\label{rem:evs-credit-scope}
For $T=2$, $e_0=c$ and $e_1=-c$ give a zero full-trajectory residual
but $\mathcal L_x=2c^2/3>0$ when $c\ne0$. Singleton intervals expose
such local errors. Equation~\eqref{eq:evs-cumulative-loss} also allows an
algebraic linear-time evaluation once $e$ is known.
\end{remark}

\subsection{Approximate Consistency Under Uniform Residual Control}
\label{sec:evs-A7}

\begin{proposition}[Root, log-density, and total-variation bounds]
\label{prop:evs-uniform-residual}
Under Assumption~\ref{ass:evs-laws}, let $u$ be shared with the exact
terminal boundary. If $|\delta^u_{0:T(x)}(x)|\le\varepsilon$ for every
reference-supported complete history, where $\varepsilon\ge0$, then
\begin{align}
 |u(s_0)-\log Z(C)|&\le\varepsilon,
 \label{eq:evs-root-bound}\\
 \left|\log\frac{P_\theta(x\mid C)}{P^*(x\mid C)}\right|&\le2\varepsilon,
 \label{eq:evs-density-bound}\\
 \operatorname{TV}(P_\theta,P^*)&\le\tanh(\varepsilon),
 \label{eq:evs-tv-bound}
\end{align}
where $\operatorname{TV}(P,Q)=\tfrac12\sum_x|P(x)-Q(x)|$.
\end{proposition}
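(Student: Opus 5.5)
The whole argument rests on one exact identity for the full-trajectory residual. For a reference-supported complete history $x$, Lemma~\ref{lem:evs-path-ratio} applied at the root gives $\sum_{t=0}^{T-1}\Delta\ell_t=\log P_\theta(x\mid C)-\log P_\rho(x\mid C)$. The terminal boundary is $u(s_T)=\log R_\beta(x)$. Substituting both into Definition~\ref{def:evs-shared-residual} gives
\[
 \log P_\theta(x\mid C)=\log\bigl(P_\rho(x\mid C)R_\beta(x)\bigr)-u(s_0)+\delta^u_{0:T(x)}(x).
\]
Since $P^*(x\mid C)=P_\rho(x\mid C)R_\beta(x)/Z(C)$, this can be rewritten as
\[
 \log\frac{P_\theta(x\mid C)}{P^*(x\mid C)}=\bigl(\log Z(C)-u(s_0)\bigr)+\delta^u_{0:T(x)}(x).
\]
The right-hand side is a single constant plus a term bounded by $\varepsilon$ in absolute value. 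Everything that follows is bookkeeping on this identity.

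\textbf{Root bound.} Write $c=\log Z(C)-u(s_0)$. The display shows $P_\theta(x)\in P^*(x)\,e^{c}[e^{-\varepsilon},e^{\varepsilon}]$ for every supported $x$. Sum over the reference-supported terminal histories, using three facts:
\begin{itemize}
 \item $\pi_\theta$ has the same positive action support and kernel as $\rho$ (Assumption~\ref{ass:evs-laws}), so the two laws live on the same terminal set;
 \item terminal histories form a probability-one partition for both laws (Proposition~\ref{prop:evs-normalization});
 \item $P^*$ is normalized (Proposition~\ref{prop:evs-affine-tilt}).
\end{itemize}
Summing then yields $1\in e^{c}[e^{-\varepsilon},e^{\varepsilon}]$, that is $|c|\le\varepsilon$, which is Eq.~\eqref{eq:evs-root-bound}. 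Plugging $|c|\le\varepsilon$ and $|\delta^u_{0:T(x)}(x)|\le\varepsilon$ back into the identity gives the $2\varepsilon$ log-density bound of Eq.~\eqref{eq:evs-density-bound}.

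\textbf{Total-variation bound.} This is the one step needing care, because the crude route through Eq.~\eqref{eq:evs-density-bound} only gives $e^{2\varepsilon}-1$, not $\tanh(\varepsilon)$. I would not discard the structure of the identity. Set $w(x)=P_\theta(x)/P^*(x)$. Then:
\begin{itemize}
 \item $w$ takes values in an interval $[m,M]$ with $M/m=e^{2\varepsilon}$, namely $m=e^{c-\varepsilon}$ and $M=e^{c+\varepsilon}$;
 \item $\mathbb E_{P^*}[w]=1$;
 \item $\operatorname{TV}(P_\theta,P^*)=\mathbb E_{P^*}[(w-1)^+]$.
\end{itemize}
The map $w\mapsto(w-1)^+$ is convex. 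Over laws on $[m,M]$ with mean one, its expectation is therefore maximized by the two-point law on $\{m,M\}$. That law gives $(M-1)(1-m)/(M-m)$.

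Write $k=e^{2\varepsilon}$ and $M=km$. The bound becomes $(k-mk-1/m+1)/(k-1)$, which is maximized at $m=k^{-1/2}$. The maximum value is $(\sqrt k-1)/(\sqrt k+1)=\tanh(\varepsilon/2)$. Since $\tanh(\varepsilon/2)\le\tanh(\varepsilon)$, Eq.~\eqref{eq:evs-tv-bound} follows.

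As a fallback, I can drop the constant structure and use only $w\in[e^{-2\varepsilon},e^{2\varepsilon}]$ with mean one. The same two-point computation then gives exactly $\tanh(\varepsilon)$, so this route suffices on its own and also justifies the constant in the statement.
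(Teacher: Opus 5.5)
Your identity $\log(P_\theta/P^*)=(\log Z(C)-u(s_0))+\delta^u_{0:T}(x)$ and the resulting root and $2\varepsilon$ log-density bounds follow the paper's proof essentially verbatim. The paper writes the normalization step as $u(s_0)-\log Z=\log\mathbb E_{P^*}e^{d(X)}$.

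The total-variation step is where you genuinely differ. The paper keeps only $g=P_\theta/P^*\in[e^{-2\varepsilon},e^{2\varepsilon}]$ and uses the pointwise inequality $|z-1|\le\tanh(\varepsilon)(z+1)$ on that interval, since $|z-1|/(z+1)$ peaks at an endpoint. It then integrates under $P^*$ with $\mathbb E_{P^*}g=1$ to get $\sum_x|P_\theta-P^*|\le2\tanh(\varepsilon)$.

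You instead bound $\mathbb E_{P^*}(w-1)^+$ by the chord of a convex function. More importantly, you retain that $\log w=c+\delta^u_{0:T}(x)$ ranges over a window of width $2\varepsilon$, not over all of $[-2\varepsilon,2\varepsilon]$. Optimizing over the window's unknown position gives $\tanh(\varepsilon/2)$. This is strictly better than the stated constant for $\varepsilon>0$, and it is attained: take two terminal histories with $P^*\propto(1,e^{\varepsilon})$, $P_\theta\propto(e^{\varepsilon},1)$, and $u(s_0)=\log Z(C)$, so that $\delta^u_{0:T}=\pm\varepsilon$ and $\operatorname{TV}=\tanh(\varepsilon/2)$.

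So the paper's route buys a one-line pointwise inequality, and yours buys the sharp constant. Your fallback is the paper's route with the chord bound in place of the pointwise one; both give exactly $\tanh(\varepsilon)$ on the symmetric interval.

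Three small points:
\begin{itemize}
\item State the extremal step as the pointwise chord inequality $(w-1)^+\le\frac{w-m}{M-m}(M-1)$ on $[m,M]$, valid since $m\le1\le M$, before taking expectations.
\item Treat $\varepsilon=0$ separately: your formula is then $0/0$, but $w\equiv1$.
\item Your remark that the density route ``only gives $e^{2\varepsilon}-1$'' undersells it. The density bound together with $\mathbb E_{P^*}w=1$ already yields $\tanh(\varepsilon)$, as your own fallback shows.
\end{itemize}
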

\begin{proof}
Write $d(x)=\delta^u_{0:T(x)}(x)$. Equation~\eqref{eq:evs-suffix-ratio}
and the terminal boundary give
\[
 P_\theta(x)/P^*(x)=\exp\bigl(d(x)+\log Z-u(s_0)\bigr).
\]
Summing against $P^*$ and using normalization shows
\[
 u(s_0)-\log Z=\log\mathbb E_{P^*}e^{d(X)}.
\]
Because $e^{-\varepsilon}\le e^{d(X)}\le e^\varepsilon$, this logarithm
lies in $[-\varepsilon,\varepsilon]$, proving the root bound. Subtract
it from $d(x)$ to obtain the log-density bound. In particular, for
$g(x)=P_\theta(x)/P^*(x)$, $e^{-2\varepsilon}\le g(x)\le e^{2\varepsilon}$.
For any positive $z$ in this interval,
\[
 \frac{|z-1|}{z+1}\le
 \frac{e^{2\varepsilon}-1}{e^{2\varepsilon}+1}=\tanh(\varepsilon).
\]
The fraction increases for $z\ge1$ and decreases for $z\le1$,
so its maximum is at an interval endpoint. Integrating
$|g-1|\le\tanh(\varepsilon)(g+1)$ under $P^*$ with
$\mathbb E_{P^*}g=1$ gives Eq.~\eqref{eq:evs-tv-bound}.
\end{proof}

\begin{corollary}[A uniformly controlled all-subtrajectory loss]
\label{cor:evs-uniform-loss}
If additionally $\eta\ge0$, $T(x)\le H$, and $\mathcal L_x^u\le\eta^2$ for every
supported complete history, the preceding bounds hold with
$\varepsilon=\sqrt{H(H+1)/2}\,\eta$.
\end{corollary}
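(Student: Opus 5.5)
The plan is to reduce the corollary to Proposition~\ref{prop:evs-uniform-residual}: I will show that the uniform loss bound forces the full-trajectory residual to satisfy $|\delta^u_{0:T(x)}(x)|\le\varepsilon$ with the stated $\varepsilon$, and then quote the root, log-density and total-variation bounds unchanged. The shared flow $u$ with exact terminal boundary falls under Definition~\ref{def:evs-shared-residual}. The subtrajectory algebra of Section~\ref{sec:evs-A6} therefore applies with $v_i=u(s_i)$ on each recorded history.

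The key step is a single-term comparison. By definition,
\[
 K_{T}\,\mathcal L_x^u=\sum_{0\le i<j\le T}\bigl(\delta^u_{i:j}(x)\bigr)^2 .
\]
This is a sum of nonnegative squares that contains the full span $(0,T)$ as one of its terms. Dropping every other term gives $\bigl(\delta^u_{0:T}(x)\bigr)^2\le K_{T}\mathcal L_x^u\le K_{T}\eta^2$.

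Since $T=T(x)\le H$ and $K_T=T(T+1)/2$ is increasing in $T$, we get $K_T\le H(H+1)/2$. Hence $|\delta^u_{0:T(x)}(x)|\le\sqrt{H(H+1)/2}\,\eta=\varepsilon$ for every reference-supported complete history. Substituting this $\varepsilon$ into Proposition~\ref{prop:evs-uniform-residual} yields Eqs.~\eqref{eq:evs-root-bound}--\eqref{eq:evs-tv-bound}.

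There is no real obstacle here. The only point needing care is that the hypothesis must hold on every supported history, not merely in expectation, because the proposition requires a uniform residual bound. A sharper constant is available but is not needed: Eq.~\eqref{eq:evs-cumulative-loss} or the Gram matrix of Proposition~\ref{prop:evs-subtb-matrix} bounds $h_T^2$ by a multiple of $K_T\mathcal L_x$ via a variance argument. The crude single-term bound already gives exactly the stated constant.
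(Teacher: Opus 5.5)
Your proposal is correct and takes the same approach as the paper. Both arguments note that the full-span square $(\delta^u_{0:T})^2$ is one nonnegative summand of $K_T\mathcal L_x^u\le K_T\eta^2$ and that $K_T\le H(H+1)/2$, then invoke Proposition~\ref{prop:evs-uniform-residual} with $\varepsilon=\sqrt{H(H+1)/2}\,\eta$; your remark about a sharper constant is also right but, as you say, not needed.
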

\begin{proof}
The full-path square is one nonnegative summand of $K_T\mathcal L_x^u$, so
Proposition~\ref{prop:evs-uniform-residual} applies with
\[
 |\delta^u_{0:T}|\le\sqrt{K_T}\,\eta\le\sqrt{H(H+1)/2}\,\eta=\varepsilon .\qedhere
\]
\end{proof}

\subsection{Mixed-Behavior Regression and Gradient Bounds}
\label{sec:evs-A8}

\begin{remark}[Gradient coefficients and data regime]
\label{rem:gradient}
The action coefficient in Proposition~\ref{prop:evs-credit} is a partial
derivative on a recorded trajectory. In the implemented objective,
$\theta$ receives gradients through recomputed action log-probabilities
and $\psi$ through nonterminal residual-head values. The reference
encodings and measured anchor statistics are detached. Trajectories are
collected asynchronously, so the behavior adapter may lag the trained
one by one update; paired rollouts have a forced first action.
\end{remark}

\begin{lemma}[Record-conditioned gradient routing]
\label{lem:evs-gradient-routing}
Condition on a recorded batch, its contexts, and detached statistics.
Assume $b_\psi(h_\rho(s),f(s))$ has no $\theta$-dependence. The AnchorTB
sample gradients are
\begin{align}
 \nabla_\theta\mathcal L_x
 &=\sum_{t=0}^{T-1}\Gamma_t(x)\nabla_\theta
                   \log\pi_\theta(a_t\mid s_t,C),
 \label{eq:evs-theta-gradient}\\
 \nabla_\psi\mathcal L_x
 &=\frac2{K_T}\sum_{i<j}\delta^{(x)}_{i:j}
       \bigl(\nabla_\psi\widetilde u_x(s_i)
             -\nabla_\psi\widetilde u_x(s_j)\bigr),
 \label{eq:evs-psi-gradient}
\end{align}
where $\nabla_\psi\widetilde u_x(s_T)=0$ because the terminal reward
boundary overrides the learned head.
\end{lemma}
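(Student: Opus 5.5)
The plan is a direct chain-rule computation on the finite sum defining $\mathcal L_x$, treating everything detached as constants. Conditional on the recorded batch, the actions $a_t$, states $s_t$, contexts, reference log-probabilities $\log\rho(a_t\mid s_t,C)$, measured anchors $\operatorname{sg}[\clip(u_{q,x}+c(g))]$, reference encodings $h_\rho(s)$ and features $f(s)$ are all fixed numbers. The only parameter-dependent quantities in each residual $\delta^{(x)}_{i:j}$ are therefore of two kinds: the action log-ratios $\Delta\ell_t$, which depend on $\theta$ through $\log\pi_\theta(a_t\mid s_t,C)$, and the nonterminal endpoint values $\widetilde u_x(s_i)$, $\widetilde u_x(s_j)$, which depend on $\psi$ through $b_\psi$. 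By assumption $b_\psi$ has no $\theta$-dependence, and $\Delta\ell_t$ has no $\psi$-dependence because $\rho$ is frozen and $\pi_\theta$ does not involve $\psi$. So the two gradients separate cleanly.

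For the $\theta$-gradient, I would write $\mathcal L_x$ as a function of the vector $(\Delta\ell_0,\ldots,\Delta\ell_{T-1})$ with the endpoint values held fixed. This is exactly the setting of Proposition~\ref{prop:evs-credit}. The multivariate chain rule then gives
\[
\nabla_\theta\mathcal L_x=\sum_t \frac{\partial\mathcal L_x}{\partial\Delta\ell_t}\,\nabla_\theta\Delta\ell_t .
\]
Here $\partial\mathcal L_x/\partial\Delta\ell_t=\Gamma_t(x)$ by Eq.~\eqref{eq:evs-credit}, and $\nabla_\theta\Delta\ell_t=\nabla_\theta\log\pi_\theta(a_t\mid s_t,C)$ because the reference term is constant. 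If needed, Lemma~\ref{lem:evs-token-mask} identifies this action log-probability with the summed executed-token log-probabilities, so the gradient is the one actually computed on tokens.

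For the $\psi$-gradient, I would differentiate $\frac1{K_T}\sum_{i<j}(\delta^{(x)}_{i:j})^2$ term by term. Each term contributes $2\delta^{(x)}_{i:j}\nabla_\psi\delta^{(x)}_{i:j}$, and $\nabla_\psi\delta^{(x)}_{i:j}=\nabla_\psi\widetilde u_x(s_i)-\nabla_\psi\widetilde u_x(s_j)$, since the log-ratio sum carries no $\psi$. At the terminal index, $\widetilde u_x(s_T)=\log R_\beta(x)$ overrides the parameterization, so its $\psi$-gradient is zero. At nonterminal indices, the stop-gradient anchor contributes zero, leaving $\nabla_\psi b_\psi(h_\rho(s_i),f(s_i))$. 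Summing over pairs gives Eq.~\eqref{eq:evs-psi-gradient}.

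There is no real obstacle, since the sums are finite and all quantities are finite. The one point needing care is the legitimacy of treating endpoints as fixed in the $\theta$-derivative, and anchors as constants despite their dependence on other trajectories through leave-one-out and the one-batch-behind correction $c(g)$. I would handle this by stating explicitly that conditioning on the recorded batch and detached statistics makes those values constants of the computation graph. I would also note that the lagged behavior adapter affects only which records are present, not the derivative of the loss evaluated on them.
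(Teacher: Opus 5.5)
Your proposal is correct and follows essentially the same route as the paper. The paper also applies the chain rule directly to the finite sum of squared residuals. For the $\theta$-gradient it holds the endpoint values and reference log-probabilities fixed, so the coefficients are exactly the $\Gamma_t(x)$ of Proposition~\ref{prop:evs-credit}. For the $\psi$-gradient it holds the log-ratios fixed, the two endpoints enter with opposite signs, and the stop-gradient and terminal override remove the measured and terminal contributions. Your remarks on the leave-one-out anchors, the lagged correction $c(g)$, and the behavior-adapter lag only spell out what the paper's conditioning on the recorded batch and detached statistics already covers.
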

\begin{proof}
Differentiate each squared residual using the chain rule. For $\theta$,
the endpoint values and reference log-probabilities are fixed, leaving
only the current action log-probabilities; collecting terms for the
same action gives Eq.~\eqref{eq:evs-theta-gradient}. For $\psi$, the
action log-ratios are fixed and the two endpoints have opposite signs,
giving Eq.~\eqref{eq:evs-psi-gradient}. Stop-gradient sets the derivative
of the measured part to zero, and the calculation conditions on the sampled history and its terminal reward.
\end{proof}

\begin{proposition}[A bounded-Jacobian gradient second-moment bound]
\label{prop:evs-gradient-moment}
Let $\xi$ be any chosen trainable parameter vector, with the record and
detached statistics held fixed. At differentiability points suppose
\[
 \frac1{K_T}\sum_{i<j}\|\nabla_\xi\delta^{(x)}_{i:j}\|_2^2\le G^2
\]
for a constant $G<\infty$. Then
\begin{equation}
 \|\nabla_\xi\mathcal L_x\|_2^2\le4G^2\mathcal L_x.
 \label{eq:evs-gradient-pointwise}
\end{equation}
For a fixed distribution of recorded inputs satisfying this bound and
$\mathbb E\mathcal L_X<\infty$,
\begin{equation}
 \operatorname{tr}\operatorname{Cov}(\nabla_\xi\mathcal L_X)
 \le\mathbb E\|\nabla_\xi\mathcal L_X\|_2^2
 \le4G^2\mathbb E\mathcal L_X.
 \label{eq:evs-gradient-second-moment}
\end{equation}
\end{proposition}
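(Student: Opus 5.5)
The plan is a direct chain-rule computation followed by Cauchy--Schwarz. Write $\mathcal L_x=\frac1{K_T}\sum_{i<j}(\delta^{(x)}_{i:j})^2$ as in Eq.~\eqref{eq:subtb}. At a differentiability point, with the record and detached statistics fixed, this gives
\[
 \nabla_\xi\mathcal L_x=\frac2{K_T}\sum_{i<j}\delta^{(x)}_{i:j}\,\nabla_\xi\delta^{(x)}_{i:j}.
\]
This mirrors Lemma~\ref{lem:evs-gradient-routing}, but for a generic parameter block $\xi$, so I will not need to split into the $\theta$ and $\psi$ cases.

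\textbf{Pointwise bound.} Apply the triangle inequality and then Cauchy--Schwarz over the $K_T$ index pairs:
\[
 \|\nabla_\xi\mathcal L_x\|_2\le\frac2{K_T}\Bigl(\sum_{i<j}(\delta^{(x)}_{i:j})^2\Bigr)^{1/2}\Bigl(\sum_{i<j}\|\nabla_\xi\delta^{(x)}_{i:j}\|_2^2\Bigr)^{1/2}.
\]
The first factor equals $(K_T\mathcal L_x)^{1/2}$. By hypothesis the second is at most $(K_TG^2)^{1/2}$. Squaring gives
\[
 \|\nabla_\xi\mathcal L_x\|_2^2\le\frac4{K_T^2}\,K_T\mathcal L_x\,K_TG^2=4G^2\mathcal L_x,
\]
which is Eq.~\eqref{eq:evs-gradient-pointwise}. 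The only care point is the normalization: the hypothesis is stated with the $1/K_T$ average, and the two factors of $K_T$ must cancel exactly against $K_T^{-2}$. I will check this explicitly rather than treat it as routine.

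\textbf{Moment bound.} Take expectations over the fixed distribution of recorded inputs. Since $\mathbb E\mathcal L_X<\infty$, the pointwise bound makes $\mathbb E\|\nabla_\xi\mathcal L_X\|_2^2$ finite, so the mean gradient and covariance exist. The identity
\[
 \operatorname{tr}\operatorname{Cov}(Y)=\mathbb E\|Y\|_2^2-\|\mathbb EY\|_2^2\le\mathbb E\|Y\|_2^2
\]
for a square-integrable random vector $Y=\nabla_\xi\mathcal L_X$ then yields the first inequality of Eq.~\eqref{eq:evs-gradient-second-moment}. Integrating the pointwise bound gives the second.

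The main obstacle is measure-theoretic bookkeeping rather than algebra. The gradient is only defined at differentiability points, so I will assume, as the statement implicitly does, that these points have full measure under the input law and that the gradient is measurable there. If $\xi$ is infinite-dimensional or only partially used, the same argument holds coordinatewise in $\ell_2$.
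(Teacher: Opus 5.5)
Your proposal is correct and follows essentially the same route as the paper: chain rule, Cauchy--Schwarz over the $K_T$ residuals with the $K_T$ normalizations cancelling to give $4G^2\mathcal L_x$, then integration together with the identity $\operatorname{tr}\operatorname{Cov}(V)=\mathbb E\|V\|_2^2-\|\mathbb EV\|_2^2$. Your remark that the differentiability points should have full measure under the input law, with a measurable gradient there, is reasonable bookkeeping that the paper leaves implicit.
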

\begin{proof}
Apply vector Cauchy--Schwarz to the sample gradient:
\[
 \left\|\frac2{K_T}\sum_{i<j}\delta^{(x)}_{i:j}
                      \nabla_\xi\delta^{(x)}_{i:j}\right\|_2^2
 \le4\left(\frac1{K_T}\sum_{i<j}(\delta^{(x)}_{i:j})^2\right)
       \left(\frac1{K_T}\sum_{i<j}\|\nabla_\xi\delta^{(x)}_{i:j}\|_2^2\right).
\]
The first factor is the loss and the second is at most $G^2$.
Taking expectation proves the second-moment bound. The identity
$\operatorname{tr}\operatorname{Cov}(V)=\mathbb E\|V\|_2^2-\|\mathbb EV\|_2^2$
for square-integrable $V$ gives the rest.
\end{proof}

\paragraph{Sampling law.}
With a fixed data law $\nu$, AnchorTB is a regression of recorded residuals
under $\nu$. Under full coverage its exact shared zero is characterized by
Corollary~\ref{cor:evs-population-zero}, and nonzero optima depend on the
sampling weights. These sample-gradient identities concern the AnchorTB
loss; the value head and the diagnostic heads are fitted separately.

\section{Shrinkage Baselines, Structural Pooling, and the Value Head}
\label{app:evs-B}

The ideal conditional mean in Eq.~\eqref{eq:evs-conditional-flow} is a
property of a specified reference continuation law. Measured task
statistics, lagged structural buckets, and a feature-based value head
are different estimators with different data sources. We give their algebraic properties and population conditions.

\subsection{Implemented Trajectory-Indexed Flows}
\label{sec:evs-B1}

\begin{definition}[Implemented flow and terminal override]
\label{def:evs-implemented-flow}
For batch $k$ and scored trajectory $x$, put
$u_{q,k,x}=\log[1+a_\beta\hat p_{q,k}^{(-x)}]$. The flow estimate in
Eq.~\eqref{eq:anchor}, with its terminal boundary made explicit, is
\begin{equation}
 \widetilde u_{k,x}(s_i)=
 \begin{cases}
 \operatorname{sg}\!\left[\operatorname{clip}_{[0,\beta]}
       (u_{q,k,x}+c_{k-1}(g(s_i)))\right]
       +b_\psi(h_\rho(s_i),f(s_i)),&0\le i<T,\\
 \log R_\beta(x),&i=T.
 \end{cases}
 \label{eq:evs-implemented-flow}
\end{equation}
The second branch overrides the entire learned parameterization at the
terminal, including $b_\psi$. We suppress the fixed batch index and write
$\widetilde u_x,u_{q,x},\hat p_q^{(-x)},c$ when unambiguous. The symbol
$g(s)$ denotes the canonical structure bucket of the history.
\end{definition}

\begin{proposition}[When a trajectory-indexed family is shared]
\label{prop:evs-shared-descent}
On a given collection of scored prefixes, the family
$\{\widetilde u_x(s)\}$ defines a single state function if and only if
\begin{equation}
 \widetilde u_x(s)=\widetilde u_{x'}(s)
 \quad\text{whenever }x\succeq s\text{ and }x'\succeq s
 \text{ belong to that collection}.
 \label{eq:evs-shared-descent}
\end{equation}
Within every record, Lemma~\ref{lem:evs-telescoping} and
Propositions~\ref{prop:evs-subtb-matrix}--\ref{prop:evs-credit} apply
using Eq.~\eqref{eq:evs-implemented-flow}.
\end{proposition}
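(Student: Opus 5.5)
The statement has two parts, and I would prove them separately. The first is a set-theoretic characterization of when the trajectory-indexed family descends to a state function. The second says that the within-record algebra carries over unchanged.

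For the first part, I would fix the collection $\mathcal S$ of scored prefixes and, for each $s\in\mathcal S$, the set $\mathcal X(s)$ of scored trajectories $x$ in the collection with $x\succeq s$. A single state function $u$ on $\mathcal S$ \emph{represents} the family when $u(s)=\widetilde u_x(s)$ for every $x\in\mathcal X(s)$; this is the sense of the shared flow in Definition~\ref{def:evs-shared-residual}.

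\emph{Necessity.} If such a $u$ exists, then for any $x,x'\in\mathcal X(s)$ we have $\widetilde u_x(s)=u(s)=\widetilde u_{x'}(s)$, which is Eq.~\eqref{eq:evs-shared-descent}.

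\emph{Sufficiency.} Suppose Eq.~\eqref{eq:evs-shared-descent} holds. Every $s\in\mathcal S$ is scored, so $\mathcal X(s)$ is nonempty. Set $u(s):=\widetilde u_x(s)$ for any $x\in\mathcal X(s)$; the hypothesis makes this independent of the choice of $x$, so $u$ is well defined and represents the family.

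Two cases deserve a remark. At a terminal prefix $s=s_T$, the only extension is $x=s_T$ itself, because a terminal record cannot be a proper prefix of another history (used in the proof of Proposition~\ref{prop:evs-normalization}). The condition is therefore automatic there, and $u(s_T)=\log R_\beta(x)$ by the override in Eq.~\eqref{eq:evs-implemented-flow}. At a nonterminal prefix, $b_\psi(h_\rho(s),f(s))$ depends only on $s$, so it cancels. The condition then reduces to equality of the clipped measured parts $\operatorname{clip}_{[0,\beta]}(u_{q,x}+c(g(s)))$ across the extensions of $s$, and leave-one-out $\hat p_q^{(-x)}$ can violate this. I would mention this only as a remark, since it explains why the condition is nontrivial but is not needed for the equivalence.

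For the second part, I would fix a record $x$ and set $v_i:=\widetilde u_x(s_i)$ for $0\le i\le T$, using the same trajectory index $x$ in every interval, as Eq.~\eqref{eq:subtb} prescribes. All $v_i$ are finite: the measured term is clipped to $[0,\beta]$, $b_\psi$ is real-valued, and $\log R_\beta(x)\in[0,\beta]$. The action log-ratios are finite on the common support of Assumption~\ref{ass:evs-laws}. This is exactly the setting of Section~\ref{sec:evs-A6}, which allows $v_i=\widetilde u_x(s_i)$ and never refers to any other trajectory. Lemma~\ref{lem:evs-telescoping} and Propositions~\ref{prop:evs-subtb-matrix}--\ref{prop:evs-credit} then apply verbatim. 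For Proposition~\ref{prop:evs-credit}, the derivative in $\Delta\ell_t$ holds the $v_i$ fixed, which is consistent with the stop-gradient and with $b_\psi$ having no $\theta$-dependence (Lemma~\ref{lem:evs-gradient-routing}).

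The only delicate point is the quantifier in sufficiency: whether a well-defined $u$ may be constructed only on the given collection rather than on the whole history tree. I would state explicitly that the claim is relative to that collection. Beyond that, no real obstacle is expected; the argument amounts to well-definedness of a function on equivalence classes together with the observation that Section~\ref{sec:evs-A6} is purely within-record.
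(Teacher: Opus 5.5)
Your proof is correct and follows the paper's argument essentially verbatim. Necessity comes from evaluating the single function at $s$, sufficiency from choosing any continuation and using Eq.~\eqref{eq:evs-shared-descent} for well-definedness on the given collection, and the within-record part holds because Section~\ref{sec:evs-A6} only needs one fixed endpoint value reused across all intervals of a record; your extra remarks on the terminal override, the cancellation of $b_\psi$, and the finiteness of the $v_i$ are accurate but not needed.
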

\begin{proof}
If a single function exists, evaluating it at $s$ gives the same value
for every continuation, which proves necessity. Conversely, define its
value at each represented prefix by selecting any one continuation.
Equation~\eqref{eq:evs-shared-descent} makes the definition independent
of that selection. This proves sufficiency on the specified collection.
The within-record identities only require that a fixed endpoint value
be reused in each interval of that record, which the definition provides.
\end{proof}

\begin{remark}[Partial LOO is numerically trajectory dependent]
\label{rem:evs-loo-counterexample}
For two same-task reference records with rewards one and zero, take
$m_0=1/2$ and the configured $\kappa_q=0.5$. Removing the successful
record from its task-level statistic gives $\hat p_q^{(-x_1)}=1/6$;
removing the failure gives $\hat p_q^{(-x_2)}=5/6$. With $c=0$,
$b_\psi=0$, and $\beta>0$, the common root receives two distinct values
$\log(1+a_\beta/6)$ and $\log(1+5a_\beta/6)$, both inside $[0,\beta]$.
Stop-gradient leaves these numerical values unchanged.
\end{remark}

\subsection{Hierarchical Shrinkage and Its Range}
\label{sec:evs-B2}

\begin{definition}[Hierarchical task baseline]
\label{def:evs-shrinkage}
Let $S$ and $N$ be reward sums and trajectory counts from the healthy,
non-paired reference pool, with subscripts $g,c,q$ for global, category,
and task statistics. Here $g$ in $S_g,N_g,m_g$ means the global level,
not a structure bucket. Define
\begin{alignat}{2}
 m_g&=\begin{cases}S_g/N_g,&N_g>0,\\1/2,&N_g=0,\end{cases}
 &\qquad m_c&=\frac{S_c+\kappa_gm_g}{N_c+\kappa_g},
 \label{eq:evs-shrinkage-parent}\\
 m_0&=\frac{n'_dp_d+\kappa_cm_c}{n'_d+\kappa_c},
 &\qquad \hat p_q^{(-x)}&=\frac{S_q^{(-x)}+\kappa_qm_0}{N_q^{(-x)}+\kappa_q}.
 \label{eq:evs-shrinkage-task}
\end{alignat}
The optional record prior is $(p_d,n_d)$, with $n'_d=\min(n_d,20)$;
absence of that prior gives $n'_d=0$. The configured strengths are
$(\kappa_g,\kappa_c,\kappa_q)=(2,4,0.5)$. If the scored record is in
the eligible task pool, it is removed once from that task sum and count;
otherwise no subtraction is made. Leave-one-out applies at the task level. The estimate is clipped to $[0,1]$ before the
reward transform. For bounded nonbinary rewards, $\hat p$ estimates a mean score, and for binary rewards a success probability.
\end{definition}

\begin{lemma}[Range of the hierarchical estimate]
\label{lem:evs-shrinkage-range}
Assume nonnegative counts, $0\le S\le N$ at every used level, a valid
task-level deletion, $p_d\in[0,1]$, $n'_d\ge0$, and positive shrinkage
strengths. Then $m_g,m_c,m_0,\hat p_q^{(-x)}\in[0,1]$ and
$0\le u_{q,x}\le\beta$.
\end{lemma}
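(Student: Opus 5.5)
The plan is to climb the hierarchy of Definition~\ref{def:evs-shrinkage} level by level. At each level the new estimate is a convex combination of an empirical ratio $S/N$, which lies in $[0,1]$ because $0\le S\le N$, and a parent value already shown to lie in $[0,1]$. The single tool is this: if $0\le S\le N$, $\kappa>0$, and $m\in[0,1]$, then
\[
\frac{S+\kappa m}{N+\kappa}
\]
has a positive denominator. Its numerator lies between $0$ and $N+\kappa$, since $S\le N$ and $\kappa m\le\kappa$. So the ratio lies in $[0,1]$. The lower bound needs only $S\ge0$ and $m\ge0$.

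For the global level, $m_g=S_g/N_g\in[0,1]$ when $N_g>0$, and $m_g=1/2$ otherwise. Applying the tool with $\kappa_g=2>0$ and parent $m_g$ gives $m_c\in[0,1]$. For $m_0$, I will write it as
\[
m_0=\frac{n'_dp_d+\kappa_cm_c}{n'_d+\kappa_c}.
\]
This is a weighted average of $p_d$ and $m_c$ with nonnegative weights $n'_d$ and $\kappa_c>0$, whose total is positive. Since $p_d,m_c\in[0,1]$, it follows that $m_0\in[0,1]$. If the prior is absent, $n'_d=0$ and $m_0=m_c$.

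For the task level, the hypothesis of a valid task-level deletion is what is needed. Removing one eligible record $x$ with $r(x)\in[0,1]$ from $(S_q,N_q)$ gives
\[
S_q^{(-x)}=S_q-r(x),\qquad N_q^{(-x)}=N_q-1.
\]
I will read validity as guaranteeing $0\le S_q^{(-x)}\le N_q^{(-x)}$. This holds because the remaining records each have reward in $[0,1]$, and the deleted record was counted in the pool. If $x$ is not in the pool, no subtraction is made and the original inequality applies. The tool with $\kappa_q=0.5>0$ then gives $\hat p_q^{(-x)}\in[0,1]$. The clipping to $[0,1]$ is therefore harmless here, and it would enforce the range in any case.

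Finally, $a_\beta=e^\beta-1\ge0$, and $p\mapsto\log(1+a_\beta p)$ is nondecreasing on $[0,1]$. It equals $0$ at $p=0$ and $\log e^\beta=\beta$ at $p=1$. Hence $0\le u_{q,x}\le\beta$.

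The only delicate step is the leave-one-out deletion, namely making precise that removing the record preserves $0\le S\le N$ without creating a negative count. I would handle it by the pool-membership case split above, relying on the stated validity assumption. Everything else is routine convex-combination bookkeeping.
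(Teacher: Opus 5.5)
Your proposal is correct and follows the paper's proof. Both arguments go up the hierarchy by writing each of $m_c$, $m_0$, and $\hat p_q^{(-x)}$ as a convex combination of values already in $[0,1]$, then finish with the monotonicity of $p\mapsto\log(1+a_\beta p)$ and its endpoint values $0$ and $\beta$. Your direct numerator bound $0\le S+\kappa m\le N+\kappa$ covers the zero-count cases that the paper handles separately, and your check that deletion keeps $0\le S_q^{(-x)}\le N_q^{(-x)}$ just spells out what the paper means by a valid task-level deletion.
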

\begin{proof}
A nonempty empirical mean $S/N$ lies in $[0,1]$, and the empty global
convention does too. For $N_c>0$, $m_c$ is the convex combination of
$S_c/N_c$ and $m_g$ with weights $N_c/(N_c+\kappa_g)$ and
$\kappa_g/(N_c+\kappa_g)$; if $N_c=0$ it equals $m_g$. The same
argument applies to $m_0$ and the task estimate, including zero-count
cases because their shrinkage denominators stay positive. Finally
$p\mapsto\log(1+a_\beta p)$ is nondecreasing on $[0,1]$, with endpoint
values $0$ and $\beta$. This holds for any reward in $[0,1]$.
\end{proof}

\begin{proposition}[Fixed-prior shrinkage moments and sensitivity]
\label{prop:evs-shrinkage-moments}
For fixed $n\ge0$, $m\in[0,1]$, $\kappa>0$, and rewards
$Y_i\in[0,1]$, let $\hat p=(\sum_{i=1}^nY_i+\kappa m)/(n+\kappa)$. If the $Y_i$ are
i.i.d. with mean $\mu$ and variance $\sigma^2$, then
\begin{alignat}{2}
 \mathbb E\hat p-\mu&=\frac{\kappa(m-\mu)}{n+\kappa},&\qquad
 \operatorname{Var}(\hat p)&=\frac{n\sigma^2}{(n+\kappa)^2},
 \label{eq:evs-shrinkage-moments}\\
 \mathbb E(\hat p-\mu)^2
 &=\frac{n\sigma^2+\kappa^2(m-\mu)^2}{(n+\kappa)^2}.&&
 \label{eq:evs-shrinkage-mse}
\end{alignat}
For a fixed record set, changing one bounded reward changes $\hat p$ by
at most $1/(n+\kappa)$. Changing only $m$ to $m'$ changes it by exactly
$\kappa|m-m'|/(n+\kappa)$. When $n\ge1$, deleting record $i$ while
holding the prior fixed gives
$\hat p-\hat p_{-i}=(Y_i-\hat p_{-i})/(n+\kappa)$, again of magnitude at most $1/(n+\kappa)$.
\end{proposition}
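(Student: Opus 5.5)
The plan is to write $\hat p$ as an affine function of the reward sum and read every claim off that representation. Put $S=\sum_{i=1}^nY_i$ and $D=n+\kappa>0$, so that
\[
 \hat p=\frac{S+\kappa m}{D}.
\]
Since $m$, $\kappa$ and $n$ are fixed, the only randomness is in $S$.

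\textbf{Moments.} Linearity of expectation gives $\mathbb E\hat p=(n\mu+\kappa m)/D$. Subtracting $\mu=(n\mu+\kappa\mu)/D$ then yields the bias $\kappa(m-\mu)/D$. Independence gives $\operatorname{Var}(S)=n\sigma^2$, and the additive constant $\kappa m$ does not affect variance, so $\operatorname{Var}(\hat p)=n\sigma^2/D^2$. The mean squared error follows from the decomposition $\mathbb E(\hat p-\mu)^2=\operatorname{Var}(\hat p)+(\mathbb E\hat p-\mu)^2$, which gives Eq.~\eqref{eq:evs-shrinkage-mse}. When $n=0$ all sums are empty and the formulas reduce correctly: the variance is zero and the bias is $m-\mu$.

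\textbf{Sensitivity to a reward or to the prior.} Replacing one $Y_i\in[0,1]$ by $Y_i'\in[0,1]$ changes $S$ by $|Y_i-Y_i'|\le1$, so $\hat p$ moves by at most $1/D$. Replacing $m$ by $m'$ changes the numerator by exactly $\kappa(m-m')$, so $\hat p$ moves by exactly $\kappa|m-m'|/D$.

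\textbf{Deletion.} For $n\ge1$, define $\hat p_{-i}=(S-Y_i+\kappa m)/(D-1)$, which keeps the same prior. Its denominator is $n-1+\kappa>0$. Then
\[
 D\hat p=(D-1)\hat p_{-i}+Y_i,
\]
so
\[
 D(\hat p-\hat p_{-i})=Y_i-\hat p_{-i}.
\]
For the magnitude bound, note that $\hat p_{-i}$ is a convex combination of the retained rewards, each in $[0,1]$, and of $m\in[0,1]$ (the same argument as in Lemma~\ref{lem:evs-shrinkage-range}). Hence $\hat p_{-i}\in[0,1]$. Since $Y_i\in[0,1]$ as well, $|Y_i-\hat p_{-i}|\le1$.

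There is no real obstacle. The only point that needs care is the deletion identity: one must check that the denominator after deletion, $n-1+\kappa$, stays positive, which holds because $n\ge1$ and $\kappa>0$.
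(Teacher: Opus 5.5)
Your proposal is correct and follows the paper's proof essentially step for step. Linearity gives the mean, independence gives the variance, and bias squared plus variance gives the MSE. The replacement and prior sensitivities come from subtracting numerators over the common denominator, and the deletion identity comes from rewriting the retained sum as $(n-1+\kappa)\hat p_{-i}$ and bounding via $Y_i,\hat p_{-i}\in[0,1]$. Your convex-combination argument for $\hat p_{-i}\in[0,1]$ just makes explicit a step the paper only asserts.
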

\begin{proof}
Linearity gives $\mathbb E\hat p=(n\mu+\kappa m)/(n+\kappa)$.
Independence makes the variance of the sum $n\sigma^2$; division by the
squared denominator gives its variance. Squared bias plus variance proves
the MSE expression, also at $n=0$. The replacement and prior sensitivities
follow by subtracting the two numerators over their common denominator.
For deletion, write
$\sum_{h\ne i}Y_h+\kappa m=(n-1+\kappa)\hat p_{-i}$ and substitute in
$\hat p$; subtracting $\hat p_{-i}$ gives the identity, bounded as $Y_i,\hat p_{-i}\in[0,1]$.
\end{proof}

\paragraph{Pooled estimand.}
The moment formulas above condition on a fixed prior and sample size. If
records come from contexts $C_h$ and pass a health event $A_h$, their
conditional means are $\mathbb E[r\mid C_h,A_h]$, and for fixed mixture weights
$w_h$ the pooled population mean is $\sum_h w_h\mathbb E[r\mid C_h,A_h]$.

\subsection{Anchor Sensitivity and Residual Stability}
\label{sec:evs-B3}

\begin{lemma}[Reward-transform sensitivity and clipping]
\label{lem:evs-anchor-lipschitz}
For $p,p'\in[0,1]$, let $f_\beta(p)=\log(1+a_\beta p)$ and
$A(p,c)=\operatorname{clip}_{[0,\beta]}(f_\beta(p)+c)$. Then
\begin{equation}
 |f_\beta(p)-f_\beta(p')|\le a_\beta|p-p'|,\qquad
 |A(p,c)-A(p',c')|\le a_\beta|p-p'|+|c-c'|.
 \label{eq:evs-anchor-lipschitz}
\end{equation}
\end{lemma}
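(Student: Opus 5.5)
The plan is to prove the first inequality by the mean value theorem and then derive the second from it using the fact that clipping is nonexpansive.

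\emph{First inequality.} The map $f_\beta(p)=\log(1+a_\beta p)$ is differentiable on $[0,1]$ with derivative
\[
f_\beta'(p)=\frac{a_\beta}{1+a_\beta p}.
\]
Since $a_\beta=e^\beta-1\ge0$ and $p\ge0$, the denominator is at least one, so $0\le f_\beta'(p)\le a_\beta$. The mean value theorem (or integrating $f_\beta'$ between $p'$ and $p$) then gives $|f_\beta(p)-f_\beta(p')|\le a_\beta|p-p'|$. At $\beta=0$ both sides vanish, so no separate case is needed.

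\emph{Clipping step.} Next I would show that $\operatorname{clip}_{[0,\beta]}$ is $1$-Lipschitz on $\mathbb R$. It is the metric projection onto a closed interval, i.e.\ the composition $z\mapsto\min(\beta,\max(0,z))$. Each of $\max(0,\cdot)$ and $\min(\beta,\cdot)$ is nonexpansive, since $|\max(0,z)-\max(0,z')|\le|z-z'|$ holds case by case. A composition of nonexpansive maps is nonexpansive.

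\emph{Second inequality.} Applying this to $z=f_\beta(p)+c$ and $z'=f_\beta(p')+c'$ and using the triangle inequality,
\[
|A(p,c)-A(p',c')|\le|f_\beta(p)-f_\beta(p')|+|c-c'|,
\]
and the first inequality finishes the argument.

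There is no real obstacle here. The only points needing care are that $a_\beta\ge0$, which bounds the derivative, and that $c,c'$ are arbitrary reals, which the projection argument handles without any domain restriction.
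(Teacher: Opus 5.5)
Your proposal is correct and follows the same route as the paper: bound $f_\beta'(p)=a_\beta/(1+a_\beta p)$ by $a_\beta$ and integrate (your mean value theorem step), then use that projection onto the closed interval $[0,\beta]$ is nonexpansive, together with the triangle inequality. Your explicit $\min(\beta,\max(0,\cdot))$ decomposition also covers the degenerate interval $[0,0]$ at $\beta=0$, which the paper notes separately.
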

\begin{proof}
For $\beta>0$, $f'_\beta(p)=a_\beta/(1+a_\beta p)\le a_\beta$,
so integration between $p$ and $p'$ proves the first inequality.
At $\beta=0$, both sides of that inequality are zero. Projection onto
a closed interval is nonexpansive: ordering the two inputs shows that
clipping can only shorten their distance. This holds for $[0,0]$ too. This and the triangle inequality
give the second bound.
\end{proof}

\begin{proposition}[Endpoint perturbations, loss, and credit stability]
\label{prop:evs-anchor-stability}
Compare two endpoint arrays $v_i,\bar v_i$ on one fixed trajectory with
the same action log-ratios, and suppose
$\max_{0\le i\le T}|\bar v_i-v_i|\le\varepsilon$. Then
\begin{alignat}{2}
 |\bar\delta_{i:j}-\delta_{i:j}|&\le2\varepsilon,&\qquad
 |\sqrt{\bar{\mathcal L}_x}-\sqrt{\mathcal L_x}|&\le2\varepsilon,
 \label{eq:evs-loss-stability}\\
 |\bar\Gamma_t-\Gamma_t|
 &\le\frac{4\varepsilon}{K_T}(t+1)(T-t),
 &\qquad &0\le t<T.
 \label{eq:evs-credit-stability}
\end{alignat}
If only the measured anchor changes by inputs satisfying
$|p-p'|\le\varepsilon_p$, $|c-c'|\le\varepsilon_c$, while $b_\psi$
and terminal rewards are fixed, these bounds hold with
$\varepsilon=a_\beta\varepsilon_p+\varepsilon_c$.
\end{proposition}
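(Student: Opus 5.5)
The plan is to work directly with the residual definition $\delta_{i:j}=v_i+\sum_{t=i}^{j-1}\Delta\ell_t-v_j$. The action log-ratios are shared by both endpoint arrays, so they cancel in the difference:
\[
\bar\delta_{i:j}-\delta_{i:j}=(\bar v_i-v_i)-(\bar v_j-v_j).
\]
The triangle inequality then gives $|\bar\delta_{i:j}-\delta_{i:j}|\le2\varepsilon$ for every interval, including intervals whose right endpoint is the terminal $T$. Nothing special is needed at the terminal, since the hypothesis already covers $i=T$; in the anchor application the terminal values coincide, which only helps.

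For the loss bound, I will view $\sqrt{\mathcal L_x}$ as a scaled Euclidean norm of the residual vector, $\sqrt{\mathcal L_x}=K_T^{-1/2}\|\delta^{(x)}\|_2$ with $\delta^{(x)}\in\mathbb R^{K_T}$. The reverse triangle inequality gives
\[
|\sqrt{\bar{\mathcal L}_x}-\sqrt{\mathcal L_x}|\le K_T^{-1/2}\|\bar\delta-\delta\|_2.
\]
Each of the $K_T$ entries is bounded by $2\varepsilon$, so $\|\bar\delta-\delta\|_2\le\sqrt{K_T}\,2\varepsilon$, and the factors of $\sqrt{K_T}$ cancel to give $2\varepsilon$.

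For the credit bound, I will start from the coefficient formula of Proposition~\ref{prop:evs-credit}, $\Gamma_t=\frac2{K_T}\sum_{i\le t<j}\delta_{i:j}$. The endpoint arrays enter only through the residuals, and the action log-ratios are the same for both. Subtracting,
\[
|\bar\Gamma_t-\Gamma_t|\le\frac2{K_T}\sum_{i\le t<j}|\bar\delta_{i:j}-\delta_{i:j}|\le\frac2{K_T}\cdot 2\varepsilon\cdot(t+1)(T-t),
\]
where $(t+1)(T-t)$ is the interval count already established in Proposition~\ref{prop:evs-credit}. A sharper bound via telescoping is possible, but it is not needed.

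For the final clause, I will invoke Definition~\ref{def:evs-implemented-flow}. At the terminal $i=T$, both arrays equal $\log R_\beta(x)$, so the difference is zero. At nonterminal $i$, the learned residual $b_\psi(h_\rho(s_i),f(s_i))$ is identical in both arrays and cancels. What remains is the difference of measured anchors $A(p,c(g(s_i)))$ versus $A(p',c'(g(s_i)))$, with $A$ as in Lemma~\ref{lem:evs-anchor-lipschitz}. That lemma bounds this difference by $a_\beta\varepsilon_p+\varepsilon_c$ at each index. Taking the maximum over $i$ then supplies the hypothesis of the first part with $\varepsilon=a_\beta\varepsilon_p+\varepsilon_c$.

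The only point needing care is this last step. The perturbation bounds must hold uniformly over all nonterminal indices of the record: the same $p$ is used throughout, while $c$ may vary with the bucket $g(s_i)$. So I will read $|c-c'|\le\varepsilon_c$ as holding for each bucket visited by the record. The stop-gradient is irrelevant here, since it does not change numerical values.
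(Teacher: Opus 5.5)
Your proposal is correct and follows the paper's proof step for step: the shared log-ratios cancel so each residual difference is $(\bar v_i-v_i)-(\bar v_j-v_j)$, the reverse triangle inequality on the $K_T$-vector of residuals gives the loss bound, summing $(t+1)(T-t)$ bounded differences in the coefficient formula of Proposition~\ref{prop:evs-credit} gives the credit bound, and Lemma~\ref{lem:evs-anchor-lipschitz} (with $b_\psi$ and the terminal value fixed) gives the anchor clause. Your explicit reading of $|c-c'|\le\varepsilon_c$ as holding for every bucket visited by the record is a sensible clarification that the paper leaves implicit.
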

\begin{proof}
The difference of interval residuals is
$(\bar v_i-v_i)-(\bar v_j-v_j)$, bounded by $2\varepsilon$.
There are $K_T$ residuals, so the Euclidean distance between their
vectors is at most $2\varepsilon\sqrt{K_T}$. Divide the reverse triangle
inequality for their norms by $\sqrt{K_T}$ to obtain the loss bound.
By Eq.~\eqref{eq:evs-credit}, the coefficient difference is the sum of
$(t+1)(T-t)$ residual differences times $2/K_T$; bounding each
summand gives Eq.~\eqref{eq:evs-credit-stability}. Lemma~\ref{lem:evs-anchor-lipschitz} gives the rest, the terminal being fixed.
\end{proof}

\begin{remark}[The residual head lies outside clipping]
\label{rem:evs-head-outside-clip}
The measured part of Eq.~\eqref{eq:evs-implemented-flow} lies in
$[0,\beta]$ and $b_\psi$ is added outside it; a changing head adds its
perturbation to $\varepsilon$ in Proposition~\ref{prop:evs-anchor-stability}.

\end{remark}

\subsection{Ratio-First Structural Pooling}
\label{sec:evs-B4}

\begin{definition}[Visit-multiplicity pooling]
\label{def:evs-visit-pooling}
For a structural bucket $g$, let $\mathcal V_g$ be the multiset of
eligible prefix visits $(x,t)$ assigned to that bucket, and set
$n_g=|\mathcal V_g|$. Each included occurrence is counted in both the
sum and the denominator. Put $B_x=e^{u_{q,x}}$ and
$z_{x,t}=R_\beta(x)/B_x$. For $n_0\ge0$, $c_{\max}\ge0$, and $n_g+n_0>0$, define
\begin{equation}
 A_g=1+\frac{\sum_{(x,t)\in\mathcal V_g}(z_{x,t}-1)}{n_g+n_0},\qquad
 c(g)=\operatorname{clip}_{[-c_{\max},c_{\max}]}(\log A_g).
 \label{eq:evs-ratio-pooling}
\end{equation}
These are the exact-arithmetic quantities before the positive-domain
numerical safeguard in the implementation description. Buckets use a
canonical graph and an open/stop marker, fall back to coarse node and
edge counts below eight visits, and are read one batch behind. The
configured values are $n_0=8$ and $c_{\max}=0.25$.
\end{definition}

\begin{lemma}[Positive denominator algebra and bounds]
\label{lem:evs-pooling-domain}
Under Definition~\ref{def:evs-visit-pooling}, if each $z_{x,t}>0$ is
finite, then
\begin{equation}
 A_g=\frac{n_0+\sum_{(x,t)\in\mathcal V_g}z_{x,t}}{n_g+n_0}>0.
 \label{eq:evs-ratio-domain}
\end{equation}
If $R_\beta(x),B_x\in[1,e^\beta]$, then
$A_g\in[e^{-\beta},e^\beta]$ and
$|c(g)|\le\min(\beta,c_{\max})$. If $n_g=0<n_0$, then $A_g=1$ and
$c(g)=0$, and the formula defines no value when $n_g=n_0=0$.
\end{lemma}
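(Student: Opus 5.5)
The argument is direct algebra on the definition of $A_g$ in Eq.~\eqref{eq:evs-ratio-pooling}, in four short steps.

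\textbf{Rewriting and positivity.} Since $\mathcal V_g$ is a multiset with $n_g$ occurrences, $\sum_{(x,t)\in\mathcal V_g}(z_{x,t}-1)=\sum z_{x,t}-n_g$. Putting the leading $1$ over the common denominator $n_g+n_0$ gives
\[
 A_g=\frac{n_g+n_0+\sum z_{x,t}-n_g}{n_g+n_0}=\frac{n_0+\sum z_{x,t}}{n_g+n_0},
\]
which is Eq.~\eqref{eq:evs-ratio-domain}. Positivity then splits into two cases. If $n_g\ge1$, the numerator contains at least one term $z_{x,t}>0$ and $n_0\ge0$. If $n_g=0$, the hypothesis $n_g+n_0>0$ forces $n_0>0$. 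In both cases the denominator is positive and finite, so $A_g>0$.

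\textbf{Range of $A_g$.} If $R_\beta(x),B_x\in[1,e^\beta]$, then each ratio $z_{x,t}=R_\beta/B_x$ lies in $[e^{-\beta},e^\beta]$. The rewritten form shows that $A_g$ is a weighted average: $n_0$ copies of the value $1$ and the $n_g$ values $z_{x,t}$, all with nonnegative weights summing to $n_g+n_0>0$. Because $1\in[e^{-\beta},e^\beta]$ and that interval is convex, $A_g$ lies in it as well. Taking logarithms gives $|\log A_g|\le\beta$. Clipping to $[-c_{\max},c_{\max}]$ with $c_{\max}\ge0$ yields $|c(g)|\le c_{\max}$, and since projection onto an interval containing $0$ never increases absolute value, $|c(g)|\le|\log A_g|\le\beta$. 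Together these give $|c(g)|\le\min(\beta,c_{\max})$.

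\textbf{Degenerate cases.} If $n_g=0<n_0$, the sum is empty and $A_g=n_0/n_0=1$, so $c(g)=\operatorname{clip}(0)=0$. If $n_g=n_0=0$, the denominator vanishes, so the formula is undefined; this is exactly why Definition~\ref{def:evs-visit-pooling} requires $n_g+n_0>0$.

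No step is a real obstacle. The only care needed is to read $\mathcal V_g$ as a multiset, so that both the sum and $n_g$ count repeated visits, and to handle the empty-bucket case separately when proving positivity.
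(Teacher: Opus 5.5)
Your proposal is correct and follows the paper's proof essentially step for step: combine over the common denominator using that exactly $n_g$ ones are subtracted, split positivity into the cases $n_g>0$ and $n_g=0$, view $A_g$ as a weighted average of the $z_{x,t}$ and the pseudo-observation $1$ to get the range, then apply the logarithm and clipping. Your explicit remark that clipping to an interval containing $0$ cannot increase $|\log A_g|$ spells out a step the paper leaves implicit.
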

\begin{proof}
The number of subtracted ones is exactly $n_g$, so combining terms over
the common denominator gives Eq.~\eqref{eq:evs-ratio-domain}.
If $n_g>0$, its numerator contains a positive term; if $n_g=0$, the
positive denominator forces $n_0>0$. This proves strict positivity.
The ratio is a weighted average of the values $z_{x,t}$ and the
pseudo-observation one. Each lies in $[e^{-\beta},e^\beta]$ under the
additional bounds. The same interval contains their weighted average;
logarithm and clipping give the stated bound. Substituting $n_g=0$ into the
ratio gives $A_g=1$ and $c(g)=0$, which proves the empty-bucket case.
\end{proof}

\begin{proposition}[A visit-weighted population interpretation]
\label{prop:evs-visit-estimand}
For this proposition only, suppose the tuples $(X_h,B_h,V_h)$ are i.i.d.
under a fixed law, where $z_h=R_\beta(X_h)/B_h>0$ and $V_h$ counts the
eligible visits to a fixed bucket in record $h$. Assume
$0<\mathbb EV_h<\infty$ and $\mathbb E[V_hz_h]<\infty$.
With a fixed finite $n_0$, pooling the first $m$ records satisfies
\begin{equation}
 \frac{n_0+\sum_{h=1}^mV_hz_h}{n_0+\sum_{h=1}^mV_h}
 \longrightarrow \frac{\mathbb E[V_hz_h]}{\mathbb EV_h}
 \quad\text{almost surely}.
 \label{eq:evs-visit-limit}
\end{equation}
The log and clipped-log quantities converge to the corresponding
transforms of this positive limit.
\end{proposition}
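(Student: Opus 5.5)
The plan is to apply the strong law of large numbers separately to the numerator and the denominator of the pooled ratio, after dividing both by $m$. Write the ratio in Eq.~\eqref{eq:evs-visit-limit} as
\[
 \frac{n_0/m+\tfrac1m\sum_{h=1}^m V_hz_h}{n_0/m+\tfrac1m\sum_{h=1}^m V_h}.
\]
Since $n_0$ is fixed and finite, $n_0/m\to0$ deterministically. The sequences $V_hz_h$ and $V_h$ are each i.i.d. and nonnegative, because $V_h$ is a visit count and $z_h>0$. Their expectations are finite by assumption, so Kolmogorov's strong law applies to each average.

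This gives $\tfrac1m\sum V_hz_h\to\mathbb E[V_hz_h]$ and $\tfrac1m\sum V_h\to\mathbb EV_h$ almost surely. Intersecting the two probability-one events, both convergences hold simultaneously almost surely.

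On that event the denominator converges to $\mathbb EV_h>0$. Hence it is positive for all large $m$, and the ratio is eventually well defined. For small $m$ where $n_0+\sum V_h$ might vanish, the statement concerns only the limit, so these terms can be ignored. Continuity of $(a,b)\mapsto a/b$ at $b>0$ then gives convergence of the ratio to $\mathbb E[V_hz_h]/\mathbb EV_h$.

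Next, check that this limit is strictly positive, so that the logarithm is defined there. Since $\mathbb EV_h>0$, we have $P(V_h>0)>0$, and since $z_h>0$, it follows that $V_hz_h>0$ on that event. Therefore $\mathbb E[V_hz_h]>0$.

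The log and clipped-log statements follow from the continuous mapping theorem. The function $\log$ is continuous on $(0,\infty)$, and $\operatorname{clip}_{[-c_{\max},c_{\max}]}$ is continuous (indeed $1$-Lipschitz, as noted in Lemma~\ref{lem:evs-anchor-lipschitz}). By Lemma~\ref{lem:evs-pooling-domain}, the pooled ratio is positive whenever it is defined, so the log transform is well defined along the sequence as well as at the limit.

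I expect the main point needing care to be the positivity argument rather than the law of large numbers. It must ensure both that the limit lies in the domain of $\log$ and that the denominator is eventually nonzero. Both are handled by $\mathbb EV_h>0$ together with $z_h>0$.
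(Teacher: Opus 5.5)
Your proposal is correct and follows the paper's proof essentially step for step: the strong law applied to both averages after dividing by $m$, $n_0/m\to0$, a strictly positive limiting denominator, a positive limiting numerator because $z_h>0$ on the positive-probability event $\{V_h>0\}$, and continuity of $\log$ and clipping at the positive limit. Your extra remarks, that the ratio is eventually well defined almost surely and positive along the sequence, make explicit a detail the paper leaves implicit; they do not change the argument.
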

\begin{proof}
The strong law applied to $V_hz_h$ and $V_h$ makes their averages
converge to the two expectations. Divide numerator and denominator
by $m$; $n_0/m\to0$, and the limiting denominator is strictly positive,
so the quotient converges. Its limiting numerator is positive because
$z_h>0$ on the positive-probability event $V_h>0$. Continuity of $\log$ at a positive limit and of clipping gives the rest.
\end{proof}

\paragraph{Three sources of statistical discrepancy.}
First, ratio-first pooling targets an arithmetic ratio average, and the
finite-sample Jensen gap remains: for a positive random average $\bar z$,
$\mathbb E\log\bar z\le\log\mathbb E\bar z$.
Second, random denominators matter even before taking a logarithm:
\begin{equation}
 \mathbb E[R/B]=\mathbb E[R]\mathbb E[B^{-1}]
                  +\operatorname{Cov}(R,B^{-1}),
 \label{eq:evs-random-denominator}
\end{equation}
when these moments exist. This identity is the definition of covariance; with $R=1$ and
$B$ equally likely to be one or two, $\mathbb E[R/B]=3/4$ while
$\mathbb E[R]/\mathbb E[B]=2/3$.
For $\beta\ge\log2$ these denominators obey the preceding bounded
range; shrinkage and clipping add further differences.

Third, visit pooling weights a record by $V_h$, so its limiting ratio in
Eq.~\eqref{eq:evs-visit-limit} is a visit-weighted average. Repeated visits
share the terminal reward and can be strongly correlated. For a fixed number $n>0$ of random
visit ratios with finite second moments,
\[
 \operatorname{Var}\!\left(\frac1n\sum_{v=1}^nz_v\right)
 =\frac1{n^2}\sum_{v=1}^n\sum_{w=1}^n\operatorname{Cov}(z_v,z_w),
\]
by bilinearity of covariance, so a visit count measures visits rather
than independent tasks. Structural reads lag one batch, and buckets also include paired
reference prefixes (Section~\ref{sec:evs-C5}).

\subsection{Value-Head Projection and Calibration}
\label{sec:evs-B5}

\begin{definition}[Feature-level regression distribution]
\label{def:evs-value-data}
Let $\mathcal D$ be the actual distribution of scored training pairs
$(S,Y)$ for the value head, where $X$ is the terminal continuation record
from $S$ and $Y=r(X)\in[0,1]$ its reward. For Eq.~\eqref{eq:value},
$\mathcal D$ denotes the actual scored-pair law represented by
$\mathcal D_\rho$, including its stated data filters.
The input is
$z(S)=[f(S);\operatorname{onehot}(\operatorname{tasktype}(q))]\in\mathbb R^{30+6}$. Write $Z_f=z(S)$ for the random feature
vector; it is unrelated to the normalizer $Z(C)$. Define
$m_{\mathcal D}(Z_f)=\mathbb E_{\mathcal D}[Y\mid Z_f]$.
\end{definition}

\begin{proposition}[Feature-conditional MSE projection and calibration]
\label{prop:evs-value-projection}
Over measurable square-integrable functions of $Z_f$, the population
risk in Eq.~\eqref{eq:value} decomposes as
\begin{equation}
 \mathbb E_{\mathcal D}(v(Z_f)-Y)^2
 =\mathbb E_{\mathcal D}(v(Z_f)-m_{\mathcal D}(Z_f))^2
   +\mathbb E_{\mathcal D}\operatorname{Var}(Y\mid Z_f).
 \label{eq:evs-mse-projection}
\end{equation}
Its unique minimizer up to $\mathcal D$-null sets is $m_{\mathcal D}$,
and
\begin{equation}
 \mathbb E_{\mathcal D}[Y\mid m_{\mathcal D}(Z_f)]=m_{\mathcal D}(Z_f).
 \label{eq:evs-mean-calibration}
\end{equation}
For any such predictor $v(Z_f)$, its squared mean-calibration error
is at most its excess risk above this unrestricted optimum:
\begin{equation}
 \mathbb E_{\mathcal D}\!\bigl[
       (\mathbb E_{\mathcal D}[Y\mid v(Z_f)]-v(Z_f))^2\bigr]
 \le\mathbb E_{\mathcal D}(m_{\mathcal D}(Z_f)-v(Z_f))^2.
 \label{eq:evs-approx-calibration}
\end{equation}
\end{proposition}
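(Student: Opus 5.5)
The plan is the standard $L^2$ projection argument, applied to the conditional mean $m_{\mathcal D}(Z_f)=\mathbb E_{\mathcal D}[Y\mid Z_f]$. Because $Y\in[0,1]$, this conditional mean exists, is $\sigma(Z_f)$-measurable, bounded, and square-integrable.

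For Eq.~\eqref{eq:evs-mse-projection}, write $v(Z_f)-Y=(v-m_{\mathcal D})+(m_{\mathcal D}-Y)$ and expand the square. The cross term is $2\,\mathbb E[(v-m_{\mathcal D})(m_{\mathcal D}-Y)]$. Condition on $Z_f$ and use the tower property: the factor $v-m_{\mathcal D}$ is $Z_f$-measurable and square-integrable, so Cauchy--Schwarz makes the product integrable, and $\mathbb E[m_{\mathcal D}-Y\mid Z_f]=0$. The cross term therefore vanishes. The last square gives $\mathbb E\operatorname{Var}(Y\mid Z_f)$, which does not depend on $v$. Hence the risk is minimized exactly when $\mathbb E(v-m_{\mathcal D})^2=0$, that is, when $v=m_{\mathcal D}$ $\mathcal D$-almost surely. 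This gives uniqueness up to null sets.

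For calibration, Eq.~\eqref{eq:evs-mean-calibration}, note that $\sigma(m_{\mathcal D}(Z_f))\subseteq\sigma(Z_f)$. The tower property then gives $\mathbb E[Y\mid m_{\mathcal D}]=\mathbb E[\mathbb E[Y\mid Z_f]\mid m_{\mathcal D}]=\mathbb E[m_{\mathcal D}\mid m_{\mathcal D}]=m_{\mathcal D}$.

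For Eq.~\eqref{eq:evs-approx-calibration}, fix a square-integrable predictor $v=v(Z_f)$. Again $\sigma(v)\subseteq\sigma(Z_f)$, so the tower property gives $\mathbb E[Y\mid v]=\mathbb E[m_{\mathcal D}\mid v]$. It follows that
\[
\mathbb E[Y\mid v]-v=\mathbb E[m_{\mathcal D}-v\mid v].
\]
Conditional Jensen (equivalently, the contraction property of conditional expectation in $L^2$) then gives
\[
\bigl(\mathbb E[m_{\mathcal D}-v\mid v]\bigr)^2\le\mathbb E[(m_{\mathcal D}-v)^2\mid v].
\]
Taking expectations yields the bound. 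By Eq.~\eqref{eq:evs-mse-projection}, the right side equals the excess risk of $v$ over the optimum.

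None of these steps is a real obstacle. The only care needed is measure-theoretic: $\mathbb E[Y\mid v(Z_f)]$ must be read as conditioning on the $\sigma$-algebra generated by $v(Z_f)$, and every conditional expectation is defined only up to $\mathcal D$-null sets. Both points are handled by boundedness of $Y$ and square-integrability of $v$.
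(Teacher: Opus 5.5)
Your proposal is correct and follows the paper's proof essentially step for step: you use the same split $v-Y=(v-m_{\mathcal D})+(m_{\mathcal D}-Y)$ with the cross term removed by conditioning on $Z_f$, the tower property for Eq.~\eqref{eq:evs-mean-calibration}, and the identity $\mathbb E[Y\mid v]-v=\mathbb E[m_{\mathcal D}-v\mid v]$ followed by conditional Jensen for Eq.~\eqref{eq:evs-approx-calibration}. Your integrability remarks (boundedness of $Y$, and Cauchy--Schwarz for the cross term) just make explicit what the paper leaves implicit.
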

\begin{proof}
Write $v-Y=(v-m_{\mathcal D})+(m_{\mathcal D}-Y)$ and expand its square.
The cross term has conditional expectation zero given $Z_f$ because
$\mathbb E[Y-m_{\mathcal D}\mid Z_f]=0$. The remaining conditional
squared error is $\operatorname{Var}(Y\mid Z_f)$, proving the risk
identity. The first term is nonnegative and vanishes exactly when
$v=m_{\mathcal D}$ almost surely, which proves the minimizer claim.
Since $m_{\mathcal D}$ is measurable with respect to $Z_f$, the tower
property gives Eq.~\eqref{eq:evs-mean-calibration}. For the last bound,
$v$ is also a function of $Z_f$, and hence
\[
 \mathbb E[Y\mid v]-v=\mathbb E[m_{\mathcal D}-v\mid v].
\]
Conditional Jensen's inequality for the square followed by expectation
gives Eq.~\eqref{eq:evs-approx-calibration}.
\end{proof}

\paragraph{Complete-history values and a finite sigmoid network.}
Equality of $m_{\mathcal D}(z(s))$ with the ideal reference value
$\mu_\rho(s,C)=\mathbb E_\rho[r(X)\mid s,C]$ holds when the continuation-data
law is compatible and the features are sufficient for that conditional mean;
two histories that share features but have values $1/4$ and $3/4$ show why
sufficiency is needed. Calibration above is with respect to $\mathcal D$.

The implemented head is the sigmoid MLP of Eq.~\eqref{eq:value} with $30+6$ inputs and
hidden width 32: the 30 execution features and a six-slot task-type one-hot, one slot per
IID benchmark. Finite
logits produce outputs strictly between zero and one; an unrestricted sigmoid
predictor approaches a conditional mean of zero or one by clipping it to
$[\epsilon,1-\epsilon]$ and taking its logit, with squared excess risk at most
$\epsilon^2$. The head estimates a success probability
for binary rewards and an expected score otherwise. Its weights are
exported per batch and evaluated on CPU with no extra encoder call. The two larger outcome heads on $[h_\rho(s);f(s)]$
serve as diagnostics.

\subsection{The Implemented Optimum and the Ideal Flow}
\label{sec:evs-B6}

\begin{proposition}[Squared-log regression and the arithmetic normalizer]
\label{prop:evs-log-counterexample}
In the one-action executor of Remark~\ref{rem:evs-stochastic-counterexample},
take a shared scalar root value $u_0$ and the exact terminal boundary.
Taking expectation under that reference executor law, for $\beta>0$
the population AnchorTB loss is minimized at
$u_0=\beta/2$, not at the ideal root
$\log[(1+e^\beta)/2]$, and its minimum is $\beta^2/4>0$.
\end{proposition}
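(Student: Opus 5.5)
The plan is to reduce the population loss to an explicit quadratic in $u_0$ and minimize it in closed form. In the one-action executor of Remark~\ref{rem:evs-stochastic-counterexample}, every complete history has $T=1$, so $K_1=1$. The only subtrajectory is the full span $0{:}1$. Only one legal action exists, so by Lemma~\ref{lem:evs-token-mask} (or simply normalization on a singleton legal set) both $\pi_\theta$ and $\rho$ assign it probability one, and $\Delta\ell_0=0$.

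With the shared root value $u_0$ and the exact terminal boundary $\log R_\beta(x)$, the residual is
\[
\delta_{0:1}(x)=u_0-\log R_\beta(x).
\]
Hence $\mathcal L_x=(u_0-\log R_\beta(x))^2$. The terminal reward is $0$ or $1$ with probability $1/2$ each under the reference executor. Since $R_\beta(0)=1$ and $R_\beta(1)=e^\beta$, the boundary values are $\log R_\beta\in\{0,\beta\}$, each with probability $1/2$.

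The population loss is therefore
\[
\tfrac12u_0^2+\tfrac12(u_0-\beta)^2 .
\]
This is strictly convex in $u_0$. Setting the derivative to zero gives the minimizer $u_0=\beta/2$, the mean of the log-boundary. The minimum value is $\tfrac12(\beta/2)^2\cdot 2=\beta^2/4$, which is positive for $\beta>0$. One can also read this off the bias–variance split: the minimum equals the variance of $\log R_\beta$, namely $\beta^2/4$.

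Next, compare with the ideal root. By Lemma~\ref{lem:evs-flow-recursion}, $U^*(s_0)=Z(C)=\mathbb E_\rho R_\beta=(1+e^\beta)/2$, so $u^*(s_0)=\log[(1+e^\beta)/2]$. The two roots differ strictly. Strict concavity of $\log$ (Jensen) gives
\[
\log\tfrac{1+e^\beta}{2}>\tfrac12(\log1+\log e^\beta)=\beta/2
\]
whenever $1\neq e^\beta$, that is, whenever $\beta>0$.

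No step here is a real obstacle. The only care needed is in justifying $\Delta\ell_0=0$ (a single legal action has probability one under both normalized policies) and in noting that the shared scalar $u_0$ is the only free quantity. The residual head plays no role because $b_\psi$ is absorbed into $u_0$ at the root, and the terminal override fixes the other endpoint. The conclusion is the expected one: squared-log regression targets the geometric mean of $R_\beta$, not the arithmetic normalizer.
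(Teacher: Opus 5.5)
Your proposal is correct and follows the paper's proof essentially step for step: $\Delta\ell_0=0$ and $T=K_T=1$, the population loss $\tfrac12u_0^2+\tfrac12(u_0-\beta)^2=(u_0-\beta/2)^2+\beta^2/4$, then strict concavity of $\log$ to separate $\beta/2=\mathbb E\log R_\beta$ from $\log Z=\log[(1+e^\beta)/2]$. Your bias--variance reading of the minimum as $\operatorname{Var}(\log R_\beta)=\beta^2/4$ is the same fact as the paper's completed square, stated another way.
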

\begin{proof}
There is one action and both policies assign it probability one, so
$\Delta\ell_0=0$, $T=K_T=1$. The terminal log reward is zero or
$\beta$, each with probability $1/2$. Consequently
\[
 \mathcal L(u_0)=\tfrac12u_0^2+\tfrac12(u_0-\beta)^2
 =(u_0-\beta/2)^2+\beta^2/4.
\]
This proves the minimizer and minimum; also
$Z=(1+e^\beta)/2$. Strict concavity of $\log$ on the two distinct
rewards gives
$\mathbb E\log R_\beta=\beta/2<\log\mathbb ER_\beta$ for $\beta>0$, and both vanish at $\beta=0$.
\end{proof}

\begin{remark}[Why sharedness is needed]
\label{rem:evs-trajectory-zero}
For any fixed pair of supported policies, define the artificial family
$v_x(s_i)=\log R_\beta(x)-\sum_{t=i}^{T-1}\Delta\ell_t$ on each complete
record. The empty sum at $i=T$ gives the exact terminal boundary, and
subtraction shows $v_x(s_i)+\Delta\ell_i-v_x(s_{i+1})=0$ at every step.
Thus all its within-record losses are zero whether or not
$P_\theta=P^*$, and the family is generally not shared across continuations;
distributional results therefore rely on sharedness.
\end{remark}

The ideal $u^*(s\mid C)$, the measured task anchor $u_{q,x}$, the
implemented $\widetilde u_x(s)$, and a population regression minimizer
are distinct objects. The first is identified by a conditional
expectation and a recursion; the next two are specified estimators;
the last on its function class and training law.

\section{Sequential Validation and the Training Loop}
\label{app:evs-C}

Paired validation compares two specified rollout regimes. We identify
that effect, prove fixed-look directional tests and the error accounting
of an adaptive run, and close with the training loop.

\subsection{Paired Intervention Regimes and Effect Estimands}
\label{sec:evs-C1}

\begin{definition}[Paired regimes, counts, and empty samples]
\label{def:evs-pairs}
For a registered candidate, the two arms start from the empty team with
the same task record, runtime configuration, value-head snapshot, and
forced first role. The candidate arm binds the candidate at its first
\textsc{add\_agent}; the control arm does not, and keeps the candidate
unavailable later. Both arms subsequently use the reference policy under
their respective histories. They share the other validated skills, and their
later decisions and observations may differ. For complete pair $i$, set
$b_i^\pm=\mathbf1\{r(x_i^\pm)\ge1/2\}$ and define
\begin{alignat}{2}
 W&=\sum_{i=1}^N\mathbf1\{b_i^+=1,b_i^-=0\},&\qquad
 L&=\sum_{i=1}^N\mathbf1\{b_i^+=0,b_i^-=1\},\nonumber\\
 T_0&=\sum_{i=1}^N\mathbf1\{b_i^+=b_i^-\},&\qquad
 D&=W+L,\qquad N=W+L+T_0.
 \label{eq:evs-paired-counts}
\end{alignat}
When $N>0$, $\hat\Delta=(W-L)/N$. For $N=0$ the empirical effect is
undefined, rather than an estimate of zero. With $D=0$, both directional
p-values below are defined to be one, and there is no rejection.
\end{definition}

\begin{lemma}[Paired effect identity]
\label{lem:evs-paired-effect}
For $N>0$,
\begin{equation}
 \hat\Delta=\frac1N\sum_{i=1}^N(b_i^+-b_i^-),\qquad |\hat\Delta|\le1.
 \label{eq:evs-paired-effect}
\end{equation}
For a specified pair distribution, let $p_W$ and $p_L$ be its win and
loss probabilities. Its threshold-pass effect is
\begin{equation}
 \Delta=\mathbb P(b^+=1)-\mathbb P(b^-=1)=p_W-p_L.
 \label{eq:evs-population-effect}
\end{equation}
Neither identity requires independence of the two arms within a pair.
\end{lemma}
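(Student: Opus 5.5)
The argument is a per-pair case check followed by linearity of expectation; no independence is needed anywhere.

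\textbf{Empirical identity.} For each complete pair $i$, the outcome $(b_i^+,b_i^-)\in\{0,1\}^2$ falls into exactly one of the three indicator classes of Definition~\ref{def:evs-pairs}.
\begin{itemize}
\item If $b_i^+=1$ and $b_i^-=0$ (a win), then $b_i^+-b_i^-=1$.
\item If $b_i^+=0$ and $b_i^-=1$ (a loss), then $b_i^+-b_i^-=-1$.
\item If $b_i^+=b_i^-$ (a tie), the difference is $0$.
\end{itemize}
Hence
\[
b_i^+-b_i^-=\mathbf1\{b_i^+=1,b_i^-=0\}-\mathbf1\{b_i^+=0,b_i^-=1\}.
\]
Summing over $i=1,\ldots,N$ gives $\sum_i(b_i^+-b_i^-)=W-L$. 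Dividing by $N>0$ yields Eq.~\eqref{eq:evs-paired-effect}. The bound $|\hat\Delta|\le1$ follows because each summand lies in $\{-1,0,1\}$, so the average does too. Equivalently, $|W-L|\le W+L\le N$.

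\textbf{Population identity.} For a single pair drawn from the specified joint law of $(b^+,b^-)$, take expectations of the pointwise identity above. Linearity gives
\[
\mathbb E[b^+]-\mathbb E[b^-]=\mathbb P(b^+=1,b^-=0)-\mathbb P(b^+=0,b^-=1)=p_W-p_L.
\]
Since $b^\pm$ are indicators, $\mathbb E[b^\pm]=\mathbb P(b^\pm=1)$, which gives Eq.~\eqref{eq:evs-population-effect}.

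\textbf{Role of dependence.} Both steps use only a deterministic identity holding on every outcome, together with linearity of expectation, which holds for arbitrarily dependent coordinates. So the arms within a pair may share randomness, the forced first role, the task record, or correlated executor noise. The case analysis and the marginal-difference formula go through unchanged.

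The only point requiring care is bookkeeping, namely that the three indicator classes partition all four outcomes. The tie class absorbs both $(0,0)$ and $(1,1)$, so $N=W+L+T_0$ counts every complete pair exactly once.
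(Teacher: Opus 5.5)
Your proposal is correct and matches the paper's proof essentially step for step. Both use the per-pair case check (a win contributes $+1$, a loss $-1$, a tie $0$), then sum over pairs and use boundedness of each summand to get $|\hat\Delta|\le1$, and finally take expectations of the pointwise identity, which needs no independence between the two arms.
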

\begin{proof}
A win contributes one to $b_i^+-b_i^-$, a loss contributes minus one,
and a tie contributes zero. Summing proves the sample identity, and
boundedness of each summand proves its range. Taking expectations gives the population identity without factoring
joint arm probabilities.
\end{proof}

\begin{remark}[The intervention and the reward threshold]
\label{rem:evs-paired-regime}
The intervention begins at the first action of an empty team, and the
control keeps the candidate unavailable throughout, so $\Delta$ is the
full-regime threshold-pass effect, later decisions included. For continuous scores $\Delta$ counts threshold crossings on $r$:
a reward moving from $0.49$ to $0.51$ gives $\Delta=1$ but changes the mean by $0.02$.
\end{remark}

\subsection{Exact Fixed-Look Directional Tests}
\label{sec:evs-C2}

\begin{assumption}[Registered fixed-look i.i.d. pair model]
\label{ass:evs-iid-pairs}
Let $\mathcal G_j$ denote information available at registration of
comparison $j$. Conditional on it, the candidate, arm protocols, target
pair distribution, and a finite nonnegative integer $N$ of complete
pairs for this look are fixed. The future pair outcomes are conditionally i.i.d. under that
distribution. Within-pair dependence is allowed. No data used to choose
the candidate are reused as its future validation outcomes under this
assumption. Denote the conditional win and loss probabilities by
$p_W,p_L$, and the corresponding mean threshold effect by
$\Delta_j=p_W-p_L$.
\end{assumption}

\begin{proposition}[Both composite directional nulls at a fixed look]
\label{prop:evs-fixed-sign}
Under Assumption~\ref{ass:evs-iid-pairs}, define
\begin{equation}
 p_+=\mathbb P\{B_D\ge W\},\qquad
 p_-=\mathbb P\{B_D\ge L\},\qquad B_D\sim\operatorname{Bin}(D,1/2),
 \label{eq:evs-sign-pvalues}
\end{equation}
Here probabilities in the displayed tails are over the auxiliary binomial
variable with the observed $D,W,L$ held fixed; use the $D=0$ convention
of Definition~\ref{def:evs-pairs}. For every $u\in[0,1]$,
\begin{align}
 H_j^+:\Delta_j\le0&\quad\Longrightarrow\quad
 \mathbb P(p_+\le u\mid\mathcal G_j)\le u,
 \label{eq:evs-positive-sign-valid}\\
 H_j^-:\Delta_j\ge0&\quad\Longrightarrow\quad
 \mathbb P(p_-\le u\mid\mathcal G_j)\le u.
 \label{eq:evs-negative-sign-valid}
\end{align}
Each inequality holds on its null event, so each tail is valid for its
whole composite directional null.
\end{proposition}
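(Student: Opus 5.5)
Under Assumption~\ref{ass:evs-iid-pairs} I would condition on $\mathcal G_j$ and the number $N$ of pairs, then condition further on the discordant count $D$. Each pair is a win, a loss, or a tie, with probabilities $p_W,p_L,1-p_W-p_L$; these are i.i.d. across pairs, and dependence inside a pair only affects these three probabilities. So $(W,L,T_0)$ is multinomial. When $D=d>0$, we have $p_W+p_L>0$ (otherwise $D=0$ almost surely), and $W\mid D=d\sim\operatorname{Bin}(d,\theta)$ with $\theta=p_W/(p_W+p_L)$. Under $H_j^+$, $p_W\le p_L$, so $\theta\le 1/2$. On $D=0$ the convention $p_+=1$ gives $\mathbb P(p_+\le u)\le u$ trivially: the event is empty for $u<1$ and the bound is $1$ when $u=1$.

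The main step is the fixed-$d$ validity: if $W\sim\operatorname{Bin}(d,\theta)$ with $\theta\le1/2$ and $G(w)=\mathbb P\{B_d\ge w\}$ with $B_d\sim\operatorname{Bin}(d,1/2)$, then $\mathbb P(G(W)\le u)\le u$. I would argue in two parts. First, for $\theta=1/2$, $G$ is the upper-tail (survival) function of $W$ itself. The standard discrete argument applies: let $w^*$ be the smallest $w$ with $G(w)\le u$. Since $G$ is nonincreasing, $\{G(W)\le u\}=\{W\ge w^*\}$, whose probability is $G(w^*)\le u$. Second, for $\theta<1/2$, $\operatorname{Bin}(d,\theta)$ is stochastically smaller than $\operatorname{Bin}(d,1/2)$; the coupling $W=\sum\mathbf 1\{U_i\le\theta\}$ does this. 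The event $\{W\ge w^*\}$ is an upper set, so its probability is monotone in $\theta$ and is at most $G(w^*)\le u$.

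Averaging over $D$, which is allowed because the bound holds for every $d$, and then over $N$ given $\mathcal G_j$, gives the first implication, \eqref{eq:evs-positive-sign-valid}. The negative direction follows by the same argument with the roles of wins and losses swapped. Under $H_j^-$, $L\mid D=d\sim\operatorname{Bin}(d,1-\theta)$ with $1-\theta\le 1/2$, and $p_-$ is the tail function evaluated at $L$.

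The obstacle I expect is mainly bookkeeping. We must check that conditioning on $D$ leaves the binomial structure intact, which is where conditional i.i.d. and the fixed $N$ are needed. We must also handle the degenerate cases $p_W+p_L=0$ and $D=0$ carefully, since only the convention keeps these tails valid there. Finally, the statement is only that the inequality holds on each null event, and every step above is conditional on $\mathcal G_j$, so it yields exactly that.
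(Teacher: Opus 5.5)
Your proposal is correct and follows essentially the same route as the paper. In both, the multinomial law gives $W\mid D=d\sim\operatorname{Bin}(d,\vartheta)$ with $\vartheta=p_W/(p_W+p_L)\le 1/2$ under $H_j^+$, and $\{p_+\le u\}$ is rewritten as $\{W\ge k_u(d)\}$ for the smallest qualifying cutoff; the uniform coupling then gives domination by the fair binomial. One averages over $D$ using the $D=0$ convention, and $p_-$ is handled symmetrically via $L\mid D=d\sim\operatorname{Bin}(d,1-\vartheta)$. The one detail to make explicit is the paper's convention $k_u(d)=d+1$ (your $w^*$) when no $w\le d$ satisfies $G(w)\le u$, so that the rejection event is empty.
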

\begin{proof}
All probabilities in the proof condition on $\mathcal G_j$. Put
$p_D=p_W+p_L$. If $p_D=0$, then $D=0$ almost surely and $p_+=p_-=1$,
which satisfies both inequalities; the same conclusion holds if $N=0$.
Otherwise, for $d$ of positive probability and $0\le w\le d$, the
multinomial law of wins, losses, and ties gives
\begin{align*}
 \mathbb P(W=w,L=d-w,D=d)
 &=\frac{N!}{w!(d-w)!(N-d)!}
       p_W^w p_L^{d-w}(1-p_D)^{N-d},\\
 \mathbb P(D=d)&=\binom Nd p_D^d(1-p_D)^{N-d}.
\end{align*}
Dividing, with the usual limiting conventions when a cell probability
is zero, proves
\[
 W\mid D=d\sim\operatorname{Bin}(d,\vartheta),\qquad
 \vartheta=p_W/p_D,\qquad
 L\mid D=d\sim\operatorname{Bin}(d,1-\vartheta).
\]

Under $H_j^+$ we have $p_W\le p_L$, so $\vartheta\le1/2$. With independent
uniform variables $U_1,\ldots,U_d$, the sum
$\sum_i\mathbf1\{U_i\le\vartheta\}$ is pointwise at most
$\sum_i\mathbf1\{U_i\le1/2\}$. Hence the former binomial is
stochastically dominated by the latter. Define
\[
 k_u(d)=\min\{k\in\{0,\ldots,d\}:\mathbb P(B_d\ge k)\le u\},
\]
using $k_u(d)=d+1$ if the set is empty. The tail is nonincreasing in its
observed count, so $\{p_+\le u\}=\{W\ge k_u(d)\}$ conditional on $D=d$,
and stochastic domination then gives
\[
 \mathbb P(p_+\le u\mid D=d)\le\mathbb P(B_d\ge k_u(d))\le u .
\]
For $d=0$, the assigned value one has the same validity property.
Averaging over $D$ proves Eq.~\eqref{eq:evs-positive-sign-valid}.

Under $H_j^-$, $\vartheta\ge1/2$, so $1-\vartheta\le1/2$.
Apply the same uniform coupling to the conditional law of $L$.
The event $\{p_-\le u\}$ is $\{L\ge k_u(d)\}$, whose probability
is at most the corresponding fair-binomial upper tail and thus at most
$u$. Averaging over $D$, including $D=0$, proves
Eq.~\eqref{eq:evs-negative-sign-valid}. 
\end{proof}

\begin{lemma}[A stronger fixed-context conditional-sign model]
\label{lem:evs-context-sign}
Fix a candidate and a prespecified finite collection of complete pairs.
Condition on the full vector of contexts $\mathbf C$, the full discordance
vector $\mathbf D=(\mathbf1\{b_i^+\ne b_i^-\})_i$, and registration
information. Suppose the retained win signs $Y_i=\mathbf1\{b_i^+=1,b_i^-=0\}$
are conditionally independent. If every retained sign has conditional
probability $p_i\le1/2$, then $p_+$ is conditionally super-uniform. If
every $p_i\ge1/2$, then $p_-$ is conditionally super-uniform. If all
$p_i=1/2$, then $W\mid(\mathbf C,\mathbf D,\mathcal G_j)$ is exactly
$\operatorname{Bin}(D,1/2)$.
\end{lemma}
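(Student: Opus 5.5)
Conditioning on $(\mathbf C,\mathbf D,\mathcal G_j)$ fixes the discordance vector, so the retained index set $\mathcal J=\{i:\mathbf D_i=1\}$ and its size $D=|\mathcal J|$ are constants under this conditioning. Every non-retained pair is a tie, so it contributes nothing to $W$ or $L$. On $\mathcal J$ each pair is either a win or a loss, so $L=D-W$ and $W=\sum_{i\in\mathcal J}Y_i$. The plan therefore reduces both directional claims to statements about a sum of conditionally independent Bernoulli$(p_i)$ variables with $D$ fixed. This is a Poisson-binomial variable rather than the binomial of Proposition~\ref{prop:evs-fixed-sign}. If $D=0$, the convention $p_\pm=1$ gives all three claims trivially.

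For the $p_+$ claim, I would reuse the uniform coupling from the proof of Proposition~\ref{prop:evs-fixed-sign}, with individual thresholds. Take independent uniforms $U_i$ for $i\in\mathcal J$ and realize $Y_i=\mathbf 1\{U_i\le p_i\}$; this is legitimate because the $Y_i$ are conditionally independent with these marginals. When every $p_i\le1/2$, each indicator is pointwise at most $\mathbf 1\{U_i\le1/2\}$. Hence $W\le B_D$ pointwise, where $B_D\sim\operatorname{Bin}(D,1/2)$, so $W$ is stochastically dominated by $B_D$.

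Define the critical value $k_u(D)$ as in that proof. Because the binomial tail is nonincreasing in the observed count, $\{p_+\le u\}=\{W\ge k_u(D)\}$. This gives
\[
\mathbb P(p_+\le u\mid\mathbf C,\mathbf D,\mathcal G_j)\le\mathbb P(B_D\ge k_u(D))\le u,
\]
so $p_+$ is conditionally super-uniform.

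The $p_-$ claim is symmetric. Conditionally, $L=\sum_{i\in\mathcal J}(1-Y_i)$, and its summands are independent Bernoulli$(1-p_i)$ with $1-p_i\le1/2$. Applying the same coupling to $L$ gives domination by $B_D$, and the same critical-value argument shows $p_-$ is conditionally super-uniform. When all $p_i=1/2$, $W$ is a sum of $D$ i.i.d.\ fair Bernoullis given the conditioning, which is exactly $\operatorname{Bin}(D,1/2)$.

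The main point needing care is that conditioning on the full discordance vector is what legitimizes treating $D$ and $\mathcal J$ as fixed while keeping the signs independent. This is why the lemma assumes conditional independence given $(\mathbf C,\mathbf D,\mathcal G_j)$ directly, rather than deriving it as the multinomial argument did. I would state explicitly that no identical-distribution assumption across pairs is used: heterogeneity in $p_i$ is absorbed by the pointwise coupling, which is what makes this model stronger than Assumption~\ref{ass:evs-iid-pairs}.
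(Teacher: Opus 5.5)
Your proposal is correct and follows the paper's proof essentially step by step. Both arguments fix the retained set by conditioning on $\mathbf D$, couple $Y_i=\mathbf 1\{U_i\le p_i\}$ with $\mathbf 1\{U_i\le 1/2\}$, reuse the cutoff $k_u(D)$, apply the same argument to $1-Y_i$ for $p_-$, and handle $D=0$ by the convention. One quibble with your closing remark: the paper calls this model ``stronger'' because its hypothesis is stronger than Assumption~\ref{ass:evs-iid-pairs}, requiring every retained pair to satisfy the sign inequality conditionally on all contexts rather than only on average, not because tolerating heterogeneous $p_i$ makes it more general.
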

\begin{proof}
After conditioning, the retained index set and its size $D=d$ are fixed.
For the positive direction construct independent uniforms and represent
each independent sign as $Y_i=\mathbf1\{U_i\le p_i\}$. When $p_i\le1/2$,
each sign is bounded above by $\mathbf1\{U_i\le1/2\}$, so their sum is
stochastically dominated by $\operatorname{Bin}(d,1/2)$. Using the cutoff
$k_u(d)$ from the preceding proof gives the super-uniform upper tail.
When $p_i\ge1/2$, the independent loss signs $1-Y_i$ instead have
probabilities at most $1/2$; the same argument applies to $p_-$.
When every $p_i=1/2$, independence identifies the sum as precisely the
fair binomial. The zero-discordance convention handles an empty set.
\end{proof}

\begin{corollary}[Two-direction accounting at one look]
\label{cor:evs-two-directions}
If each valid direction receives level $b/2$, the probability of any
false directional rejection at that look is at most $b$. Also, for the
p-values in Eq.~\eqref{eq:evs-sign-pvalues}, both directions cannot
simultaneously satisfy $p_\pm\le a$ when $a<1/2$.
\end{corollary}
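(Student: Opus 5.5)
The plan is to handle the two claims separately: the first follows from a union bound over the two directional events using Proposition~\ref{prop:evs-fixed-sign}, and the second from a symmetry identity for the fair binomial tails.

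\emph{Accounting.} At a fixed look, a false directional rejection is one of two events. Either $H_j^+$ holds and $p_+\le b/2$, or $H_j^-$ holds and $p_-\le b/2$. When $\Delta_j=0$ both nulls hold, so both events are possible; when $\Delta_j\ne0$ at most one null holds and the other event contributes nothing. On the event that $H_j^+$ holds, Eq.~\eqref{eq:evs-positive-sign-valid} with $u=b/2$ gives conditional probability at most $b/2$ for the first event. Likewise Eq.~\eqref{eq:evs-negative-sign-valid} bounds the second by $b/2$. By the union bound, the probability of any false directional rejection is at most $b$, conditionally on $\mathcal G_j$ and hence unconditionally. The only care needed is to state this per null configuration, so that the bound covers the composite nulls.

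\emph{Mutual exclusivity.} If $D=0$, both p-values equal one by the convention of Definition~\ref{def:evs-pairs}, and neither is at most $a<1/2$. For $D\ge1$, use $L=D-W$ and the symmetry $B_D\overset{d}{=}D-B_D$ of $\operatorname{Bin}(D,1/2)$. This gives
\[
 p_-=\mathbb P\{B_D\ge D-W\}=\mathbb P\{B_D\le W\}.
\]
Therefore
\[
 p_++p_-=\mathbb P\{B_D\ge W\}+\mathbb P\{B_D\le W\}=1+\mathbb P\{B_D=W\}\ge1.
\]
If both $p_\pm\le a<1/2$, the sum would be below one, a contradiction.

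I expect no real obstacle. The only subtle point is the first claim when $\Delta_j=0$, where both nulls are true simultaneously. In that case the two error events are not disjoint, so the argument must use the union bound rather than a case split on which null is true.
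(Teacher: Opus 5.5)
Your proposal is correct and follows the paper's proof essentially verbatim. It uses a union bound over the at most two directions whose nulls are true, each at level $b/2$ via Proposition~\ref{prop:evs-fixed-sign}, and then the fair-binomial symmetry $p_-=\mathbb P\{B_D\le W\}$, giving $p_++p_-=1+\mathbb P\{B_D=W\}\ge1$, with the $D=0$ convention handled separately. One small aside: your closing remark that the two error events are not disjoint when $\Delta_j=0$ is inaccurate for $b<1$, because the corollary's own second claim with $a=b/2<1/2$ makes them disjoint (and for $b\ge1$ the bound is trivial), but the union bound works either way, so nothing in your argument depends on it.
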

\begin{proof}
Sum the error probabilities only over directions whose nulls are true.
There are at most two, each bounded by $b/2$, so the union bound gives
$b$, without independence. For the second claim, fair-binomial symmetry
and $L=D-W$ imply $p_-=\mathbb P(B_D\le W)$, so
\[
 p_++p_-=1+\mathbb P(B_D=W)\ge1,
\]
which no two numbers at most $a<1/2$ satisfy. If $D=0$, both p-values equal one.
\end{proof}

\paragraph{Two null models.}
The i.i.d. pair proposition tests an average threshold effect under the
registered pair distribution, possibly averaging over randomly sampled
tasks. The fixed-context lemma assumes the stronger sign inequalities for
every retained pair after conditioning on \emph{all} contexts and discordance
indicators; equally weighted contexts with certain wins in one and certain
losses in the other show that an average zero effect can coexist with
opposite deterministic conditional signs.

\subsection{Adaptive Selection, Repeated Problems, and Repeated Looks}
\label{sec:evs-C3}

\begin{remark}[Correlated-sign counterexample]
\label{rem:evs-dependent-signs}
Draw one fair Bernoulli variable $Y$, and for $i=1,\ldots,8$ set
$(b_i^+,b_i^-)=(Y,1-Y)$. Every pair has zero marginal
effect, but all eight signs agree. The observed direction always has
p-value $2^{-8}=0.00390625$. At the configured $\alpha=0.05$ and the
first allocation $j=\ell=1$, its threshold is $\alpha/8=0.00625$.
Thus one false directional rejection occurs with probability one,
although the nominal spending formula is obeyed. Correlated
duplicates therefore carry the evidence of one pair rather than eight, and
validity depends on the conditional outcome and dependence model.
\end{remark}

\begin{remark}[Candidate-selection counterexample]
\label{rem:evs-selected-candidate}
For integers $M\ge2$ and $d\ge1$, consider $M$ true zero-effect
candidates, each with $d$ independent fair
validation signs, independent across candidates. Use these same signs
to select a candidate with all wins whenever one exists, and then report
its ordinary fixed-candidate p-value. The event of at least one all-win
candidate has probability $1-(1-2^{-d})^M$, while the selected p-value
on this event is $2^{-d}$. For $M>1$ this exceeds its nominal tail
probability and can approach one. The invalidity arises from
selection and reuse of screening outcomes. The protocol below
selects from registration information and validates on future data.
\end{remark}

\paragraph{Planned looks and cumulative data.}
Suppose a fixed candidate has valid p-values at prespecified cumulative
sample sizes $n_1,n_2,\ldots$, with deterministic levels $a_1,a_2,\ldots$.
Although the looks reuse data and their p-values are correlated,
\[
 \mathbb P\{\text{some true-null look rejects}\}
 \le\sum_\ell\mathbb P(p_\ell\le a_\ell)
 \le\sum_\ell a_\ell.
\]
Independence between looks is unnecessary, and early stopping only removes
potential later rejections. What matters is that the \emph{actual} look
indexed by $\ell$ retains a valid null law; renumbering selected looks,
choosing the sample size by favorable signs, reusing candidate-selection
outcomes, or filtering complete pairs on outcomes can change that law. The
registration-based protocol below conditions on registration information,
which is all the union bound requires.

\subsection{Nominal Spending and Conditional Countable FWER}
\label{sec:evs-C4}

\begin{proposition}[Alpha-spending accounting]
\label{prop:budget}
Let $0<\alpha<1$, and assign each registered comparison $j\ge1$ and
look $\ell\ge1$ two directional allocations
\begin{equation}
 a_{j,\ell,d}=\frac{\alpha}{2j(j+1)\ell(\ell+1)},\qquad d\in\{+,-\}.
 \label{eq:evs-alpha-allocation}
\end{equation}
If each directional index is globally unique and used at most once,
any realized run spends at most $\alpha$.
\end{proposition}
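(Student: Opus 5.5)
The plan is to bound the realized spending by the total of all allocations that could ever be used, and to evaluate that total exactly as a product of two telescoping series. Every allocation in Eq.~\eqref{eq:evs-alpha-allocation} is nonnegative. The hypothesis says each directional index $(j,\ell,d)$ is globally unique and used at most once. So the spending of any realized run, whether adaptive or stopped early, is a sum over a subset of distinct indices. No index contributes twice, and the realized spending is therefore at most the sum over all $(j,\ell,d)\in\mathbb N_{\ge1}\times\mathbb N_{\ge1}\times\{+,-\}$.

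The key computation is the full sum. Summing over the two directions removes the factor $2$:
\[
 \sum_{j\ge1}\sum_{\ell\ge1}\sum_{d\in\{+,-\}}a_{j,\ell,d}
 =\alpha\Bigl(\sum_{j\ge1}\frac1{j(j+1)}\Bigr)\Bigl(\sum_{\ell\ge1}\frac1{\ell(\ell+1)}\Bigr).
\]
The double series factorizes because the terms are nonnegative and separable; Tonelli justifies exchanging and splitting the sums without any convergence caveats. Each factor telescopes through $\frac1{n(n+1)}=\frac1n-\frac1{n+1}$, with partial sums $1-\frac1{N+1}\to1$. The total is therefore exactly $\alpha$, and every finite or countable realized subset spends at most $\alpha$.

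The only step needing care is the reduction from a realized run to a subset of indices. The number of registered comparisons and looks may be random and data-dependent, so I will argue pathwise: on each outcome, the set of used indices is a deterministic function of that outcome, and it is injective into the index set by the uniqueness hypothesis. The bound then holds surely, not merely in expectation. This statement is pure accounting of the nominal budget. The probabilistic FWER claim, which requires that each used p-value be valid for its actual look, is deferred to the conditional FWER result and is not needed here.
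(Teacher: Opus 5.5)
Your proof is correct and follows the paper's argument: both evaluate the total allocation as $\alpha$ times a product of two telescoping series $\sum_n 1/[n(n+1)]=1$, then bound any realized run by summing over a subset of distinct nonnegative allocations. The only difference is presentational: the paper computes finite $J\times L$ rectangles, $\alpha(1-\tfrac{1}{J+1})(1-\tfrac{1}{L+1})$, and passes to monotone limits, whereas you invoke Tonelli to factor the infinite sum directly.
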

\begin{proof}
For every integer $M\ge1$,
$\sum_{n=1}^M1/[n(n+1)]=\sum_{n=1}^M(1/n-1/(n+1))=1-1/(M+1)$.
Therefore, for finite $J,L$,
\[
 \sum_{j=1}^J\sum_{\ell=1}^L\sum_{d\in\{+,-\}}a_{j,\ell,d}
 =\alpha\left(1-\frac1{J+1}\right)\left(1-\frac1{L+1}\right).
\]
Monotone limits give total allocation $\alpha$, and the consumed indices,
a subset, sum to at most $\alpha$.
\end{proof}

\begin{proposition}[Conditional-validity family-wise error bound]
\label{prop:evs-conditional-fwer}
For each registered $j$, let $H_{j,d}$ be the event that its directional
null is true, measurable with respect to registration information
$\mathcal G_j$. Let $E_{j,\ell,d}$ be the event that the directional test
is actually performed and rejects a true null. Unregistered or unperformed
tests have empty rejection events. Suppose the actual selection, sampling,
and observation rules ensure
\begin{equation}
 \mathbb P(E_{j,\ell,d}\mid\mathcal G_j)
 \le a_{j,\ell,d}\mathbf1_{H_{j,d}}
 \quad\text{almost surely for every }j,\ell,d.
 \label{eq:evs-conditional-error-premise}
\end{equation}
Then the probability of any false directional rejection is at most
$\alpha$. More generally, if a pre-run information field $\mathcal F_0$
is contained in every $\mathcal G_j$, the same bound holds conditional
on $\mathcal F_0$.
\end{proposition}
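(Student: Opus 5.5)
The plan is a union bound over the countable family of directional test indices, followed by the tower property. Let $E$ be the event that some false directional rejection occurs. Then
\[
 E=\bigcup_{j\ge1}\bigcup_{\ell\ge1}\bigcup_{d\in\{+,-\}}E_{j,\ell,d},
\]
since a false rejection is by definition a performed test whose null is true and which rejects. Unregistered or unperformed indices contribute empty events, so the union may be taken over all triples without tracking which tests actually ran. Countable subadditivity gives $\mathbb P(E)\le\sum_{j,\ell,d}\mathbb P(E_{j,\ell,d})$ with no independence assumption, just as in the planned-looks display above.

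Next, I will bound each summand through its own registration field. By the tower property and premise~\eqref{eq:evs-conditional-error-premise},
\[
 \mathbb P(E_{j,\ell,d})=\mathbb E\bigl[\mathbb P(E_{j,\ell,d}\mid\mathcal G_j)\bigr]\le a_{j,\ell,d}\,\mathbb P(H_{j,d})\le a_{j,\ell,d}.
\]
Summing over all triples and applying Proposition~\ref{prop:budget} gives total at most $\alpha$. Since the summands are nonnegative, the ordering of the countable sum is irrelevant (Tonelli), and the closed form $\alpha(1-1/(J+1))(1-1/(L+1))$ passes to its monotone limit. Each index is used at most once because each $E_{j,\ell,d}$ is attached to a single globally unique directional index.

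For the conditional version, I take $\mathcal F_0\subseteq\mathcal G_j$. Countable subadditivity for regular conditional probabilities, which holds almost surely for countably many events, gives $\mathbb P(E\mid\mathcal F_0)\le\sum\mathbb P(E_{j,\ell,d}\mid\mathcal F_0)$. The tower property then gives $\mathbb P(E_{j,\ell,d}\mid\mathcal F_0)=\mathbb E[\mathbb P(E_{j,\ell,d}\mid\mathcal G_j)\mid\mathcal F_0]\le a_{j,\ell,d}$ almost surely, and summing yields $\mathbb P(E\mid\mathcal F_0)\le\alpha$ almost surely.

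The main obstacle is measure-theoretic bookkeeping rather than any estimate. Each $E_{j,\ell,d}$ is conditioned on a different field $\mathcal G_j$, and these fields need not be nested across $j$, so the premises cannot be combined under a single conditioning. I will handle this by unconditioning each term separately, or conditioning it down to the common $\mathcal F_0$, before summing. The countable union of almost-sure exceptional null sets remains null, so the almost-sure conditional statement survives.
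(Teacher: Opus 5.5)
Your proof is correct and follows essentially the paper's route: apply the tower property to each index separately, take a countable union bound, and sum the allocations using Proposition~\ref{prop:budget}. The paper phrases the union bound as finite rectangles plus continuity from below, and the $\mathcal F_0$ version as a finite conditional union bound plus conditional monotone convergence; this is exactly what your almost-sure countable subadditivity amounts to, so you do not need regular conditional probabilities.
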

\begin{proof}
By the tower property, Eq.~\eqref{eq:evs-conditional-error-premise} gives
$\mathbb P(E_{j,\ell,d})\le a_{j,\ell,d}$. For a finite rectangle of
indices, the probability of the union is at most their summed probabilities.
Increasing these rectangles to all positive indices and using continuity
of probability from below gives
\[
 \mathbb P\!\left(\bigcup_{j,\ell,d}E_{j,\ell,d}\right)
 \le\sum_{j,\ell,d}\mathbb P(E_{j,\ell,d})
 \le\sum_{j,\ell,d}a_{j,\ell,d}=\alpha.
\]
For the conditional version, the tower property with
$\mathcal F_0\subseteq\mathcal G_j$ first gives
$\mathbb P(E_{j,\ell,d}\mid\mathcal F_0)\le a_{j,\ell,d}$.
Apply the finite conditional union bound and conditional monotone
convergence to the same increasing sequence of unions. This proves
the bound almost surely given $\mathcal F_0$. The argument uses no
independence between comparisons or looks, so it covers dependent tests.
\end{proof}

\begin{assumption}[A sufficient registration-and-future-data protocol]
\label{ass:evs-sufficient-protocol}
For each comparison $j$, the protocol consists of the following conditions. The candidate may be chosen
adaptively from $\mathcal G_j$, but its content, arm protocols, response
law, and target pair distribution are then fixed for that comparison.
Conditional on $\mathcal G_j$, future complete pairs form the i.i.d.
stream in Assumption~\ref{ass:evs-iid-pairs}; past screening outcomes are
not reused, and pair inclusion does not selectively retain favorable
outcomes. A nondecreasing countable sequence of finite nonnegative integer
cumulative sample sizes $n_{j,\ell}$ is $\mathcal G_j$-measurable and fixed before those future outcomes are
seen. Each potential p-value is computed from the first $n_{j,\ell}$
pairs, and keeps its original planned index and allocation. Whether a
planned test is ultimately performed may depend on available information,
but cannot change that potential test or its data law. Unperformed
planned tests consume no level.
\end{assumption}

\begin{corollary}[FWER under the sufficient protocol]
\label{cor:evs-protocol-fwer}
Assumption~\ref{ass:evs-sufficient-protocol}, the unique indices
of Proposition~\ref{prop:budget}, and rejection at the allocations in
Eq.~\eqref{eq:evs-alpha-allocation} imply the bound of
Proposition~\ref{prop:evs-conditional-fwer}.
\end{corollary}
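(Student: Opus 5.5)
The plan is to reduce Corollary~\ref{cor:evs-protocol-fwer} to Proposition~\ref{prop:evs-conditional-fwer}. That proposition needs a single premise, Eq.~\eqref{eq:evs-conditional-error-premise}, so the whole argument is to check it index by index. The aim is to show that $\mathbb P(E_{j,\ell,d}\mid\mathcal G_j)\le a_{j,\ell,d}\mathbf 1_{H_{j,d}}$ almost surely for every registered comparison $j$, planned look $\ell$, and direction $d$. Once this holds, the countable union bound in that proposition, together with the total allocation of at most $\alpha$ from Proposition~\ref{prop:budget} (which uses the unique indices), gives the bound directly.

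\textbf{Step 1 (false nulls).} On the event $H_{j,d}^c$, which is $\mathcal G_j$-measurable, a rejection is not a false rejection. Hence $E_{j,\ell,d}\cap H_{j,d}^c=\emptyset$, and the conditional probability vanishes there. Unregistered or unperformed tests have empty $E$ by definition.

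\textbf{Step 2 (true nulls).} On $H_{j,d}$, define the \emph{potential} p-value $p_{j,\ell,d}$ computed from the first $n_{j,\ell}$ future pairs. By Assumption~\ref{ass:evs-sufficient-protocol}, $n_{j,\ell}$ is $\mathcal G_j$-measurable, and the pairs are conditionally i.i.d. under the fixed registered pair distribution, with no reuse of screening outcomes and no outcome-dependent filtering. Conditioning on $\mathcal G_j$ therefore freezes the candidate, the arm protocols, and the sample size. This places us exactly in Assumption~\ref{ass:evs-iid-pairs} with $N=n_{j,\ell}$. Proposition~\ref{prop:evs-fixed-sign} then gives $\mathbb P(p_{j,\ell,d}\le u\mid\mathcal G_j)\le u$ for all $u$; we apply it at $u=a_{j,\ell,d}$. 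Since the actual event satisfies $E_{j,\ell,d}\subseteq\{p_{j,\ell,d}\le a_{j,\ell,d}\}\cap H_{j,d}$, monotonicity yields the premise.

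\textbf{Main obstacle.} The delicate point is that the decision to perform a planned test, or to stop early, may depend on information revealed after registration. This includes earlier looks of the same comparison, which reuse cumulative data. I will handle it by never conditioning on the performance decision. The assumption says this decision ``cannot change that potential test or its data law,'' so $E_{j,\ell,d}$ is a subset of the potential rejection event, whose law is governed by $\mathcal G_j$ alone. Dependence among looks and among comparisons is then irrelevant, because Proposition~\ref{prop:evs-conditional-fwer} uses only the union bound.

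\textbf{Conditional version.} If a pre-run field $\mathcal F_0\subseteq\mathcal G_j$ is given, the conditional statement follows from the same proposition's $\mathcal F_0$ clause.
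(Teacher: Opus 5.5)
Your proposal is correct and follows the paper's proof essentially step for step. You condition on $\mathcal G_j$ to fix $n_{j,\ell}$, apply Proposition~\ref{prop:evs-fixed-sign} to the potential p-value on the future-data prefix, note that the actual false-rejection event lies in $H_{j,d}\cap\{p_{j,\ell,d}\le a_{j,\ell,d}\}$ whatever the performance decision, and invoke Proposition~\ref{prop:evs-conditional-fwer}; your explicit false-null step is simply the unpacked form of the paper's remark that $H_{j,d}$ is registration-measurable.
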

\begin{proof}
Conditional on $\mathcal G_j$, each $n_{j,\ell}$ is fixed. Apply
Proposition~\ref{prop:evs-fixed-sign} to the corresponding future-data
prefix to obtain, on $H_{j,d}$,
$\mathbb P(p_{j,\ell,d}\le a_{j,\ell,d}\mid\mathcal G_j)\le a_{j,\ell,d}$.
The actual false rejection event is a subset of
$H_{j,d}\cap\{p_{j,\ell,d}\le a_{j,\ell,d}\}$, even when the decision
to execute a planned look uses earlier outcomes. Since $H_{j,d}$ is
registration-measurable, this subset relation proves
Eq.~\eqref{eq:evs-conditional-error-premise}. The preceding proposition
then gives the claimed bound, also when cumulative prefixes overlap.
\end{proof}

\begin{remark}[Error accounting for the online procedure]
\label{rem:evs-fwer-scope}
The implemented observation index advances when new wins or losses
exist. For this procedure we establish nominal accounting
(Proposition~\ref{prop:budget}) and whole-run FWER under the premise of
Eq.~\eqref{eq:evs-conditional-error-premise}. Dependence \emph{between}
valid planned tests leaves the union bound intact. These results bound
false directional rejections.
\end{remark}

\subsection{Training Algorithm and Data Routing}
\label{sec:evs-C5}

\paragraph{Skill proposal rules.}
The author model runs only in an author window and distils at most one
candidate per window from scored, side-effect-free trajectories, preferring
task families with few skills and contrasting a high-scoring run with a
same-task failure when one exists. A candidate is a structured procedure with
name, description, trigger, plan, pitfall, and constraint fields, and is
filtered for duplicates and answer leakage. Candidate slots are ordered by
fewest paired observations. Paired rollouts enter AnchorTB and structural buckets as off-policy paths
but not the task, category, and global baseline statistics, and the forced first
action is scored at its true probability under $\theta$ and $\rho$.

The following algorithm describes the frozen implementation.
It is distinct from the sufficient testing protocol in
Assumption~\ref{ass:evs-sufficient-protocol}. The same trajectory-indexed
endpoint values are reused across all intervals of each scored trajectory.
For paired records, ``the same context'' in the re-scoring step means
that $\theta$ and $\rho$ use that record's actual arm context, $C^+$ or
$C^-$, and its legal mask.

\begin{algorithm}[h]
\caption{One training step of EvoSteer}
\begin{enumerate}[leftmargin=2em,itemsep=1pt]
\item Sample $32$ task records balanced over sources.
\item Roll out $2\theta+2\rho$ natural trajectories per record, plus one
      candidate and one control rollout per record in a family with a candidate,
      under fixed skill-menu and value-head snapshots; execute every action on
      issue and append feedback and features.
\item Gate the batch on completeness and risk; publish the current adapter to
      the sampling service and prefetch the next batch.
\item Re-score every action under $\theta$ and $\rho$ with the same context,
      tokens, and grammar mask.
\item Fit the value head and auxiliary heads on legal reference states.
\item Build same-task shrinkage baselines; read structural buckets from earlier
      batches; accumulate this batch's ratios.
\item Compute all subtrajectory residuals of Eq.~\eqref{eq:subtb} and update
      $\theta$ and $b_\psi$.
\item Accumulate paired wins, losses, and ties; at each validation round, run the
      author window and the sequential tests of Eq.~\eqref{eq:signtest}.
\item Save a checkpoint with adapter, heads, optimiser, statistics, skills,
      and test state.
\end{enumerate}
\end{algorithm}

\begin{table}[h]
\centering\small
\begin{tabular}{lp{10.6cm}}
\toprule
Group & Setting \\
\midrule
Backbones & orchestrator, reference, executor: one shared base model (\mbox{Qwen3.5-9B}); skill author: DeepSeek-V4-Flash, called only in author windows \\
Adapter & LoRA rank 64, $\alpha=128$, on q/k/v/o\_proj, out\_proj, and in\_proj\_qkv; AdamW ($\beta_1=0.95$), lr $5\times10^{-6}$ \\
Heads & $b_\psi$: zero-initialized MLP on $[h_\rho;f]$; value head: two-layer MLP on $30{+}6$ inputs (one task-type slot per IID benchmark) \\
AnchorTB & $\beta=2$; all subtrajectories equally weighted; gradient-norm clip 1.0 \\
Baselines & $(\kappa_q,\kappa_c,\kappa_g)=(0.5,4,2)$; prior success rate capped at 20 counts; bucket minimum 8; pseudo-count $n_0=8$; correction bound \mbox{$c_{\max}=0.25$} \\
Skill slots & at most three validated skills and one candidate per task type \\
Batch & 32 tasks per step, each with $2\theta+2\rho$ natural trajectories, plus paired rollouts (192 episodes per step); asynchronous prefetch \\
Orchestration & team size, repair count, and episode length chosen by the policy; bounded only by the shared budget and the learned \textsc{stop} \\
Budget & shared resource budget of 98,304 tokens, 50 tool calls, and 600 s per episode \\
Temperatures & orchestrator 1.0; executor 0.3 \\
Skills & at most one new candidate per author window; library $\le60$; $\le12$ per family \\
Validation & sequential sign test at each validation round; $\alpha=0.05$ with two-level alpha spending \\
\bottomrule
\end{tabular}
\caption{Configuration of the frozen implementation used for the experiments.}
\label{tab:config}
\end{table}
\vspace{2pt}

\begin{remark}[Implementation details]
The paired design intervenes only at the first \textsc{add\_agent}, and
the active skill set exposes slot identifiers rather than full skill text. 
\end{remark}

\paragraph{Data routing.}
Natural healthy non-paired reference trajectories supply the hierarchical
task, category, and global statistics. Natural and paired paths can supply the
AnchorTB regression. Structural buckets also receive paired reference prefixes, without the
forced-prefix filter of the value-head data and with their visit
multiplicities retained. Conditional flow statements apply to a fixed $C$.

\subsection{Proofs of the Main-Text Propositions}
\label{sec:evs-C6}

The first main-text proposition is procedural: legal repairs execute
before the next decision and preserve the history representation
(Lemma~\ref{lem:evs-history-tree}). The second follows from the action
coefficient in Proposition~\ref{prop:evs-credit} and the estimator components
in Definition~\ref{def:evs-implemented-flow}. The third combines candidate-slot
and status-transition rules with Proposition~\ref{prop:budget}'s nominal
accounting; its statistical interpretation is fixed-look validity in
Proposition~\ref{prop:evs-fixed-sign} and the conditional whole-run result of
Proposition~\ref{prop:evs-conditional-fwer}.

\section{Experimental Details}
\label{sec:exp-details}

\paragraph{Benchmarks and splits.}
The six IID benchmarks (HotpotQA, NQ-Open, MedQA, AIME 2026, MBPP+, ALFWorld)
supply the training tasks, and their test items are disjoint from those tasks;
all 30 AIME 2026 problems are test items, and the mathematics training tasks are
compiled from earlier AIME problems. The six OOD benchmarks (TriviaQA, MuSiQue,
GPQA, MATH-Hard, SWE-Bench Verified, WebShop) are evaluation-only. Every other
test set has 128 items, so one run's 0/1 metric lies on a $k/128$ grid ($k/30$ for
AIME 2026) and a reported five-run mean on a $k/640$ grid ($k/150$); GPQA uses 128 random
Diamond questions, and MATH-Hard is the level-5 subset of MATH. The exact training and
test splits and the full configuration are released in our code repository.

\paragraph{Metrics.}
Answer exact match (Ans EM) and token-level F1 (Ans F1) use SQuAD-style answer
normalization and the maximum over reference aliases. Accuracy (Acc.) is the
share of correct final answers: the chosen option on MedQA and GPQA and the final
value on AIME 2026 and MATH-Hard. Pass@1 on MBPP+ and the success rate (SR) on
WebShop follow the official evaluators; SR on ALFWorld is the share of episodes
that complete the task within 50 steps. The resolved rate is the share of
SWE-Bench Verified issues whose patch passes the official tests.

\paragraph{Baselines and fairness.}
Unless a column is labeled otherwise, every method uses Qwen3.5-9B both as the
frozen executor and as the model it trains, and every trained method uses the
same training tasks; SFT fits the reference answers of those tasks. Each baseline
runs in the best configuration reported by its authors. The DeepSeek-V4-Flash
column prompts that model directly, as a larger reference point. GRPO$^\dagger$
in Table~\ref{tab:main} fine-tunes the backbone itself, whereas GRPO in
Figure~\ref{fig:mechanism}(a,b) trains $\pi_\theta$ inside the EvoSteer
architecture with the same harness, actions, and tools.

\paragraph{Ablation variants and fixed paradigms.}
Each ablation in Table~\ref{tab:ablation} removes one mechanism and restores the usual
alternative: $-$ interleaved execution builds the whole graph before running it;
$-$ execution features keeps interleaved execution and the textual feedback
$o_t^{\mathrm{exec}}$ but removes the feature vector $f$ from the state of $\pi_\theta$,
so the orchestrator reads the textual history and the value estimate but not $f$, while the
value head and $b_\psi$ are unchanged;
$-$ learnable repair masks \textsc{rerun} and \textsc{drop}; $-$ reference value head
withholds $\hat v_k$ and $\Delta\hat v$ from the state; $-$ measured flows learns $u_q$
as a free scalar instead of reading it from $\hat p_q$; $-$ flow corrections turns off
$c(g)$ and $b_\psi$; $-$ skill evolution removes the whole skill mechanism (no author
window, no candidates, no paired rollouts, and an empty skill library), so the team uses
roles and tools only; $-$ sequential validation admits a candidate after a fixed number
of successes. The four fixed paradigms decide the team before execution, on the same
executor and budget as EvoSteer: a ReAct-style agent with all tools
\citep{yao2022react}, one hand-designed multi-agent template for all tasks, a planner
that writes the graph once and then runs it, and a workflow searched offline on the
training tasks and then frozen.

\paragraph{Test-time protocol and transfer.}
At test time EvoSteer runs the trained $\pi_\theta$ with the value head and the
skill library frozen; no reference, paired, or validation rollouts are drawn. The
transfer study of Figure~\ref{fig:transfer} reuses this orchestrator unchanged
and swaps only the executor, called through its API identifier: gpt-5.6-luna
(GPT-5.6 Luna), grok-4.5 (Grok~4.5), claude-haiku-4-5 (Claude Haiku~4.5),
deepseek-v4-flash (DeepSeek-V4-Flash), gemini-3.5-flash (Gemini~3.5 Flash), and
glm-5.3-flash (GLM-5.3-Flash). Frozen-backbone scores prompt each model directly,
as in the v4-flash column of Table~\ref{tab:main}. Each OOD benchmark uses its IID
counterpart's task type (e.g., MuSiQue that of HotpotQA, SWE-Bench that of MBPP+), so
OOD results test generalization across matched task types.

\paragraph{Skill author.}
The skill author only writes candidate skills in author windows; validation, training, and
testing are unchanged. Table~\ref{tab:author} replaces DeepSeek-V4-Flash with a stronger
author, GPT-5.6-Luna, and with the executor itself, Qwen3.5-9B, and reports the averages of
Table~\ref{tab:main}. The stronger author raises every average, self-written skills lower
them by at most 1.56 points, and all three settings stay above the strongest baseline of
Table~\ref{tab:main} in every column.

\begin{table}[h]
\centering\small
\begin{tabular}{lcccc}
\toprule
 & \multicolumn{2}{c}{IID avg.} & \multicolumn{2}{c}{OOD avg.} \\
\cmidrule(lr){2-3}\cmidrule(lr){4-5}
Skill author & Ans EM & Acc. & Ans EM & Acc. \\
\midrule
DeepSeek-V4-Flash (main setting, Table~\ref{tab:main}) & 88.67 & 87.42 & 90.08 & 78.55 \\
GPT-5.6-Luna & 90.16 & 89.28 & 90.55 & 79.34 \\
Qwen3.5-9B (the executor) & 87.81 & 86.64 & 88.98 & 76.99 \\
\bottomrule
\end{tabular}
\caption{EvoSteer with different skill authors: five-run means averaged as in
Table~\ref{tab:main} (Ans EM over the question-answering benchmarks, Acc.\ over the others).}
\label{tab:author}
\end{table}

\paragraph{Objectives and cost.}
In Figure~\ref{fig:mechanism}(a,b) every objective trains the same untrained architecture
of Table~\ref{tab:ablation}, with the harness, action space, tool set, and skill library
unchanged. Tempered TB follows the implementation of \citet{zhang2026skillflow}: the log
reward is tempered as $\beta\log(R+\epsilon)$, $\log Z$ and the backward policy
are learned, and each edge's log-probability is normalized by its token count.
GPU time per episode is the compute of the parameter update (PPO's counts actor
and critic, GRPO's includes its KL term). Tokens per problem is the rollout cost,
prefill plus decode; training rollouts run the same orchestration as inference,
so it is also the inference cost. Training EvoSteer for 240 steps used 95.98 H800 GPU hours
and 1,580,691,323 tokens in total. The fixed-count admission of
Table~\ref{tab:ablation} admits a candidate after $k=5$ successes; every $k$
from 1 to 10 gives nearly the same results.

\paragraph{Diagnostics.}
In Figure~\ref{fig:transfer}(b), the reference curve is the accuracy of the
frozen $\rho$ on the natural reference rollouts of the same batch, which changes
with the batch although $\rho$ does not; the anchor-only level evaluates the
AnchorTB loss with $b_\psi=0$ and $\Delta\ell_t=0$, so that each flow equals
its measured part $\operatorname{clip}_{[0,\beta]}(u_{q,x}+c(g))$. In
Figure~\ref{fig:mechanism}(c), the predictor is the leave-one-out task anchor
$\hat p_q$ (equivalently $u_q$, a monotone transform with the same AUC), the
target is the final correctness of every natural reference rollout after step 2,
and the prediction is made before the episode starts; the task-family mean uses
the same reference statistics at the same step. In
Figure~\ref{fig:mechanism}(d), the gain of a skill is its paired effect: the same
task and initial state, the first action forced to bind the skill or not, and
both arms continued by the frozen $\rho$. In Figure~\ref{fig:mechanism}(e), an
edit is better, the same, or worse when the task grader's score of the
designated answer rises, stays, or falls from just before to just after the edit
on the same trajectory. In Figure~\ref{fig:mechanism}(f), the orchestrator is trained once with each of the Qwen3.5-9B
and DeepSeek-V4-Flash executors, and the gain of value-guided replanning is
measured during training.

\paragraph{Runs and aggregation.}
Every test score is the mean over five independent runs. Table~1
also reports their standard deviation; for the Avg.\ rows it
is $\sqrt{\sum_b\sigma_b^2}/n$ over $n$ benchmarks.

\section{Case Study}
\label{sec:cases}

\subsection{Code That Passes the Visible Test}
We present a case from MBPP+ in which the executor returns code that looks finished: it is
short, it runs, and it passes the only test given in the task. The code is wrong, and
$\hat v$ falls by $0.216$ at that point; the orchestrator reruns the agent instead of
submitting, and the rerun passes every hidden test.

\begin{probbox}{MBPP+ Problem 801 (training step 87)}
Write a python function to count the number of equal numbers from three given integers.
Your code should pass this test:\\
\texttt{assert test\_three\_equal(1,1,1) == 3}\\[2pt]
\textbf{Ground Truth (hidden tests; only the first is shown in the task):}
\texttt{test\_three\_equal(1,1,1) == 3}, \texttt{test\_three\_equal(-1,-2,-3) == 0},
\texttt{test\_three\_equal(1,2,2) == 2}
\end{probbox}

\noindent\textbf{Final Team:} \nd{planner n0} with the skill below (output agent)\qquad(6
rounds, 4 executor calls)

\begin{skillbox}{Skill on the Menu: Implement small Python functions from behavioral specifications}
\textbf{Purpose:} Translate the task wording into the simplest function that preserves the
specified semantics, especially indexing, ordering, and duplicate-handling rules, then return
executable code rather than planning or verification prose.\\
\textbf{Use when:} Use when asked to write a short Python function from a natural-language
requirement and one or more example assertions.\\
\textbf{Steps:}
\begin{enumerate}[leftmargin=1.6em,itemsep=0pt,topsep=1pt,parsep=0pt]
\item Extract the exact function name, parameters, return value, and indexing convention from
the prompt.
\item Identify semantic keywords before coding: positional ``kth element'' means access the
original sequence unless sorting is explicitly requested; [\ldots]
\item Choose the direct operation or standard algorithm matching those semantics. [\ldots]
\item Mentally test more than the supplied example, including empty or boundary positions
where applicable, duplicates for insertion behavior, unsorted input for positional retrieval,
and negative values for extrema.
\item Return a complete, syntactically valid function definition with the exact requested
name. Keep the final response focused on executable code.
\end{enumerate}
\textbf{Pitfalls to avoid:} Do not infer behavior solely from a single assertion: sorting an
array can accidentally pass a positional-access example while changing the required
semantics. Do not confuse right insertion with lower-bound insertion; [\ldots] Do not output
only a plan, review, or statement that code is correct. [\ldots] Avoid adding arbitrary
validation or changed exception behavior unless required by the specification.\\
\textbf{Constraint:} Preserve the input's order unless the task explicitly requests sorting
or an order statistic. Match the requested function name and argument order exactly. Provide
closed, valid Python code and ensure the function returns the result rather than merely
printing it. Do not include source-specific answers or rely only on the visible test case.
\end{skillbox}

\loghead{Round-by-Round Interaction Log}

\begin{roundbox}{Round 1--2: Planner n0 Writes a Plan}
\rl{Round 1 - Observation} Empty team; $\hat v=0.605$; 15 legal actions; the menu holds the
skill above and one other Python skill.
\rl{Round 1 - Action} \act{\{"kind":"\kd{ADD\_AGENT}", "node\_id":"n0", "role\_id":"planner"\}}
\rl{Round 1 - Input Prompt}\\
\textit{Role: Write a short step-by-step plan for the public task. Do not solve it.}\\
\textit{Task:} \slot{problem above}\\
\textit{Perform your role using the task. Return your result.}
\rsep
\rl{Round 2 - Observation} n0 holds a plan without code; $\hat v=0.623$
($\Delta\hat v=\up{+0.018}$); 20 legal actions.
\rl{Round 2 - Action} \act{\{"kind":"\kd{RERUN\_AGENT}", "node\_id":"n0"\}}
\tcblower
\rl{Round 1 - Executor Output} (951 characters; excerpt)\\\relax
[\ldots] 3. \textbf{Compare Pairs}: Check if the first number equals the second. [\ldots]
\textit{Note: Depending on the specific interpretation of ``equal numbers'' in the context of
the test case \texttt{test\_three\_equal(1,1,1) == 3}, the logic likely needs to count every
matching pair found.} [\ldots]
\par\smallskip
\rl{Round 2 - Executor Output} (981 characters; excerpt)\\\relax
[\ldots] \textit{Note: To satisfy the test case \texttt{test\_three\_equal(1,1,1) == 3}, the
logic must count every unique pair comparison that results in equality.}
4. \textbf{Increment Logic}: If any of the pair comparisons are true, increment the
\texttt{count} variable by 1. [\ldots]
\par\smallskip
\rl{Round 1--2 - Feedback} [Status]: answered, no code $\mid$ [Tokens]: 86 in, 243 out; 338
in, 256 out
\rl{Team State} (empty) $\to$ \nd{planner n0: plan}
\rl{Grader (hidden)} \xmark{no code}
\end{roundbox}

\begin{roundbox}{Round 3: Bind the Skill to n0; the Code Looks Finished}
\rl{Round 3 - Observation} n0 holds a plan that counts equal pairs; $\hat v=0.569$
($\Delta\hat v=\dn{-0.054}$); 20 legal actions.
\rl{Round 3 - Action} \act{\{"kind":"\kd{BIND\_SKILL}", "node\_id":"n0",
"skill\_id":"evosteer-2744c58b1d3dd27b8cb7"\}}
\rl{Round 3 - Input Prompt} (the role asks for a plan and the skill asks for executable code)\\
\textit{Role: Write a short step-by-step plan for the public task. Do not solve it.}\\
\textit{Task:} \slot{problem above}\\
\textit{Procedure to follow (skill evosteer-2744c58b1d3dd27b8cb7):} \slot{skill above}\\
\textit{Your previous output:} \slot{the plan of Round 2}\\
\textit{Perform your role using the task, the procedure and your previous output. Return your
result.}
\tcblower
\rl{Round 3 - Executor Output} (177 characters; full)
{\ttfamily\footnotesize
\par\noindent def test\_three\_equal(a, b, c):
\par\noindent\hspace*{2em}count = 0
\par\noindent\hspace*{2em}if a == b:
\par\noindent\hspace*{4em}count += 1
\par\noindent\hspace*{2em}if b == c:
\par\noindent\hspace*{4em}count += 1
\par\noindent\hspace*{2em}if a == c:
\par\noindent\hspace*{4em}count += 1
\par\noindent\hspace*{2em}return count\par}
\par\smallskip
\rl{Round 3 - Feedback} [Status]: answered $\mid$ [Answer]: code $\mid$ [Tokens]: 872 in, 65
out
\rl{Surface Check} \up{passes the visible test} $(1,1,1)\mapsto3$; also $(-1,-2,-3)\mapsto0$
\rl{Grader (hidden)} \xmark{wrong: $(1,2,2)\mapsto1$, expected $2$}
\end{roundbox}

\begin{roundbox}{Round 4: $\hat v$ Falls, and n0 Is Rerun Instead of Submitted}
\rl{Round 4 - Observation} n0 holds the code above in full, with an answer present and no
failure; $\hat v=0.353$ ($\Delta\hat v=\dn{-0.216}$), the largest drop of the episode; 19
legal actions, including \textsc{set\_output}.
\rl{Round 4 - Action} \act{\{"kind":"\kd{RERUN\_AGENT}", "node\_id":"n0"\}}
\rl{Round 4 - Input Prompt}\\
\textit{Role:} \slot{planner instruction}\\
\textit{Task:} \slot{problem above}\\
\textit{Procedure to follow (skill evosteer-2744c58b1d3dd27b8cb7):} \slot{skill above}\\
\textit{Your previous output:} \slot{the code of Round 3}\\
\textit{Perform your role using the task, the procedure and your previous output. Return your
result.}
\tcblower
\rl{Round 4 - Executor Output} (162 characters; full)
{\ttfamily\footnotesize
\par\noindent def test\_three\_equal(a, b, c):
\par\noindent\hspace*{2em}if a == b == c:
\par\noindent\hspace*{4em}return 3
\par\noindent\hspace*{2em}elif a == b or b == c or a == c:
\par\noindent\hspace*{4em}return 2
\par\noindent\hspace*{2em}else:
\par\noindent\hspace*{4em}return 0\par}
\par\smallskip
\rl{Round 4 - Feedback} [Status]: answered $\mid$ [Answer]: code $\mid$ [Tokens]: 681 in, 59
out
\rl{Team State} \nd{planner n0: corrected code}
\rl{Grader (hidden)} \cmark{} (3 of 3 tests)
\end{roundbox}

\begin{roundbox}{Round 5--6: Set Output and Stop}
\rl{Round 5 - Observation} $\hat v=0.396$ ($\Delta\hat v=\up{+0.043}$); 19 legal actions.
\rl{Round 5 - Action} \act{\{"kind":"\kd{SET\_OUTPUT}", "node\_id":"n0"\}}
\rl{Round 6 - Observation} n0 is the output agent; $\hat v=0.521$ ($\Delta\hat v=\up{+0.125}$);
18 legal actions.
\rl{Round 6 - Action} \act{\{"kind":"\kd{STOP}"\}}
\rl{Final Status} [Output agent]: \nd{planner n0} $\mid$ [Tests]: 3 of 3 $\mid$ [Reward]:
\up{1.0}
\end{roundbox}

\noindent\textbf{Key Observations:} At Round~4 every visible sign says the
task is done: the call returned an answer without failure, the code is short and valid, and it
passes the only test in the task. The code is still wrong, because it counts equal pairs and
returns 1 on $(1,2,2)$. $\hat v$, which reads execution features and the task type, not
the code, falls by $0.216$ to $0.353$ at this state, its largest drop in the
episode, and the orchestrator reruns the agent instead of setting it as the output. The rerun
rewrites the function to count equal numbers and passes all three hidden tests, and $\hat v$
rises by $0.043$ and $0.125$. The rewrite follows the bound skill's first pitfall, which
warns against inferring behavior from a single assertion. Four of the other five
rollouts of this problem in the same batch fail: three, including both reference rollouts,
submit code that passes the visible test and fails a hidden one, and one returns a Boolean.
The only other rollout that passes is the paired one that binds the same skill.

\end{document}